\numberwithin{equation}{section}
\numberwithin{table}{section}
\numberwithin{figure}{section}
\newtheorem{assumption}{Assumption}
\newtheorem{definition}{Definition}[section]
\newtheorem{theorem}{Theorem}[section]
\newtheorem{lemma}{Lemma}[section]
\newtheorem{property}{Proposition}[section]
\newtheorem*{notations}{Notations}
\newcommand{\pfactorc}{\sigma_{\varepsilon,\delta}}
\newcommand{\R}{\mathbb{R}}
\newcommand{\E}{\mathbb{E}}
\newcommand{\RB}{c_r}
\newcommand{\CB}{C_B}
\newcommand{\GB}{\zeta}
\DeclareMathOperator*{\argmax}{arg\,max}
\pgfplotsset{compat=newest}
\tikzset{
	cir/.style ={
	ellipse, %矩形节点
	minimum width =15pt, %最小宽度
	minimum height =23pt, %最小高度
	align=center,
	text width=20pt,
	inner sep=2pt, %文字和边框的距离
	draw=black, %边框颜色}
	}
}
\tikzset{
	pcir/.style ={
	circle, %矩形节点
	minimum width =15pt, %最小宽度
	minimum height =15pt, %最小高度
	align=center,
	text width=20pt,
	inner sep=2pt, %文字和边框的距离
	draw=blue, %边框颜色}
	}
}
\tikzset{
box/.style ={
rectangle, %矩形节点
rounded corners =3pt, %圆角
minimum width =25pt, %最小宽度
minimum height =20pt, %最小高度
inner sep=2pt, %文字和边框的距离
draw=black %边框颜色}
}
}
\title{Generalized Linear Bandits with Local Differential Privacy}
\author{%
  Yuxuan Han$^{1}$\footnotemark, Zhipeng Liang$^{2}$\footnotemark[1], Yang Wang$^{1,2}$, and Jiheng Zhang$^{1,2}$ \\
  Department of Mathematics$^{1}$\\
  Department of Industrial Engineering and Decision Analytics$^{2}$\\
  The Hong Kong University of Science and Technology
}
\renewcommand{\thefootnote}{\fnsymbol{footnote}}
\begin{document}

\maketitle
\footnotetext{*Equal contributions.}

\begin{abstract}
	Contextual bandit algorithms are useful in personalized online decision-making. However, many applications such as personalized medicine and online advertising require the utilization of individual-specific information for effective learning, while user's data should remain private from the server due to privacy concerns. This motivates the introduction of local differential privacy (LDP), a stringent notion in privacy, to contextual bandits. In this paper, we design LDP algorithms for stochastic generalized linear bandits to achieve the same regret bound as in non-privacy settings.
  Our main idea is to develop a stochastic gradient-based estimator and update mechanism to ensure LDP.  
  We then exploit the flexibility of stochastic gradient descent (SGD), whose theoretical guarantee for bandit problems is rarely explored, in dealing with generalized linear bandits.
  We also develop an estimator and update mechanism based on Ordinary Least Square (OLS) for linear bandits.
  Finally, we conduct experiments with both simulation and real-world datasets to demonstrate the consistently superb performance of our algorithms under LDP constraints with reasonably small parameters $(\varepsilon, \delta)$ to ensure strong privacy protection. 
\end{abstract}

% Details to be explained in Introduction. 

%  SGD: 1. theoretical foundation for bandit
%  SGD advantage: apply to more general reward function
%				 does not require frequent communication
  % Our results are sharper than earlier work in adversarial case \cite{Zheng2020}.
  % Task: vlit: retaining the privacy
  % gap + covariate/traditional = log(T)
  % no gap + traditional = sqrt(T)
  % Moreover, an independent analysis in contextual setting shows that our algorithm achieve the rate-optimal regret bound $O(T^{1/2})$ without the gap assumption, which improves the known best worst-case upper bound $O(T^{3/4})$ in \cite{Zheng2020}.				 
%  Our algorithm exploit the advantage of SGD component in its flexibility in various reward structure and advantage in LDP.

  % Numerical & practical 
  %% Emphasize various privacy and different settings in experiment?

\renewcommand{\thefootnote}{\arabic{footnote}}

\section{Introduction}
Contextual bandit algorithms have received extensive attention for their efficacy for online decision making in many applications such as recommendation system, clinic trials, and online advertisement \cite{Bietti2018, Slivkins2019, lattimore2020bandit}. 
Despite their success in many applications, intensive utilization of user-specific information, especially in privacy-sensitive domains such as clinical trials and e-commerce promotions, raises concerns about data privacy protection. 
Differential privacy, as a provable protection against identification from attackers \cite{dwork2006calibrating, Dwork2013}, has been put forth as a competitive candidate for a formal definition of privacy and has received considerable attention from both academic research \cite{rubinstein2009learning, dwork2009differential, wasserman2010statistical, smith2011privacy, chaudhuri2011differentially} and industry adoption \cite{erlingsson2014rappor, ding2017collecting, tang2017privacy}. 
While increasing attention has been paid to bandit algorithms with \emph{jointly differential privacy} \cite{Shariff2018, Chen2020}, we introduce in this paper a more stringent notion, \emph{locally differential privacy} (LDP), in which users even distrust the server collecting the data, to contextual bandits.

In contextual bandit, at each time round $t$ with individual-specific context $X _t$, the decision maker can take an action $a_t$ from a finite set (arms) to receive a reward randomly generated from the  distribution depending on the context $X_{t}$ and the chosen arm through its parameter $\theta^{\star}_{a_t}$ which is not unknown to the decision maker. We use the standard notion of expected regret to measure the difference between expected rewards obtained by the action $a_t$ and the best achievable expected reward in this round. While several papers consider the adversarial setting (i.e., $X_t$ can be arbitrary determined in each round), this paper considers the stochastic contextual case where $X_t$ is generated i.i.d.\ from a distribution $P_X$. The goal is to maximize the rewards accumulated over the time horizon. 
An algorithm achieves LDP guarantee if every user involved in this algorithm is guaranteed that anyone else can only access her context (and related information such as the arm chosen and the reward) with limited advantage over a random guess.
Recently there is an emerging steam of works combining LDP and bandit. 
\cite{Basu2019, Ren2020, Chen2020b} consider the LDP contextual-free bandit and design algorithms to achieve the same regret as in the non-privacy setting.   
For contextual bandits, \cite{Zheng2020} considers the adversarial setting. 
Despite their pioneering work, their regret bounds $O(T^{3/4})$ leave a gap from the corresponding non-privacy results $O(T^{1/2})$, which is conjectured to be inevitable. 
A natural question arises: can we close this gap for stochastic contextual bandits? In this paper, we design several algorithms and show that they can achieve the same regret rate in terms of $T$ as in the non-private settings.

If we don't assume any structure on the arms' parameters, the above formulation is referred to as multi-parameter contextual bandits.
If we impose structural assumptions such as all arms share the same parameter (see Section~\ref{sec:ldp-bandits} for details), then the formulation is referred to as single-parameter contextual bandits. 
Although multi-parameter and single-parameter settings can be shown to be equivalent, they need independent analysis and design of algorithms because of their distinct properties based on different modeling assumptions (e.g., \cite{Raghavan2018}).
In this paper, we consider the privacy guarantee in both settings. In fact, multi-parameter setting is more difficult since we need to estimate the parameters for all $K$ arms with sufficient accuracy to make good decisions. 
However, privacy protection also requires protecting the information about which arm is pulled in each round.
Such a requirement hinders the identification of optimal arm and may incur considerable regret in the decision process. A proper balance between privacy protection and estimation accuracy is the key to design algorithms with desired performance guarantee in this setting.

\begin{table}[!htbp]
	\centering
	\begin{tabular}{lllll}
	\toprule
	Result & Regret & Context & Parameter & $\beta$-Margin\\
	\midrule
	\cite{Zheng2020}
	& $\tilde{O}(T^{3/4}/\varepsilon)$ & Adversary & Both & No Margin\\
	\cline{1-5}
		Theorem~\ref{ub-single-worst} & $\tilde{O}(T^{1/2}/\varepsilon) $   & Stochastic & Single & No Margin\\
	Theorem~\ref{thm:opt-single-margin} & $O(\log T/\varepsilon^2)$   & Stochastic & Single & $\beta=1$\\
	Theorem~\ref{thm:opt-single-margin} & $\tilde{O}(T^{\frac{1-\beta}{2}}/\varepsilon^{1+\beta})$   & Stochastic & Single & $0\le \beta<1$\\
	Theorem~\ref{mainthmsm} & $O((\log T/\varepsilon)^2)$ & Stochastic & Multiple &  $\beta=1$ \\
	Theorem~\ref{mainthmsm} & $\tilde{O}(T^{\frac{1-\beta}{2}}/\varepsilon^{1+\beta})$ & Stochastic & Multiple &  $0<\beta<1$ \\
	\bottomrule
	\end{tabular}
	\caption{Summary of our main results in $(\varepsilon, \delta)$-LDP, where $\tilde{O}(\cdot)$ omits poly-logarithmic factors.}
	\label{tab:contribution}
	\end{table}

\textbf{Contributions.}
We organize our results for various settings in Table~\ref{tab:contribution}. Our main contributions can be summarized as follows:

1. We develop a framework for implementing LDP algorithms by integrating greedy algorithms with a private OLS estimator for linear bandits and a private SGD estimator for generalized linear bandits. We prove that our algorithms achieve regret bound matching the corresponding non-privacy results. 

2. In the multi-parameter setting, to ensure the privacy of the arm pulled in each round, we design a novel LDP strategy by simultaneously updating all the arms with synthetic information instead of releasing the pulled arm. 
By conducting such synthetic updates for unselected arms, we protect the information of the pulled arm from being identified by the server or other users. 
This is at the cost of corrupting the estimation of the un-selected arms. 
To deal with this issue, we design an elimination method that is only based on data collected during a short warm up period.
We show that such a mechanism can be combined with the OLS and SGD estimators to achieve the desired performance guarantees.

3. We introduce the SGD estimator to bandit algorithms to tackle generalized linear reward structure.
To the best of our knowledge, few papers have ever considered SGD-based bandit algorithms. 
Theoretical regret bounds are established in \cite{ding2021efficient} by combining SGD and Thompson Sampling, while most of the others are limited to empirical studies \cite{Bietti2018, riquelme2018deep}.
We establish such theoretical regret bounds for SGD-based bandit algorithms. 
Our private SGD estimator for bandits is highly computationally efficient, and more importantly, greatly simplifies the data processing mechanism for LDP guarantee.

\section{Preliminaries}
\label{sec:prelim}
\begin{notations}
	We start by fixing some notations that will be used throughout this paper. 
	For a positive integer $n$, $[n]$ denotes the set $\{1,\cdots, n\}$. $\lvert A \rvert$ denotes the cardinality of the set $A$.
	$\lVert \cdot \rVert_2$ is Euclidean norm.
	$W(i,j)$ denotes the element in the $i$-th row and $j$-th column of matrix $W$. 
	We write $W>0$ if the matrix $W$ is symmetric and positive definite. We denote $I_d$ as the $d$-dimensional identity matrix.
	Let $\otimes$ denote the Kronecker product.
	Let $B_r^d$ denote the $d$-dimensional ball with radius $r$ and $S^{d-1}_r$ denotes the $(d-1)$-dimensional sphere for the ball. 
	Given a set $A$, Unif$(A)$ denote the uniform distribution over $A$.
	For a tuple $(Z_{i,j})_{i\le N, j\le M}$ and $1\le k_1< k_2\le M$, we denote $Z_{i, k_1:k_2} = (Z_{i,k_1}, \cdots, Z_{i,k_2})$. 
	We adopt the standard asymptotic notations: for two non-negative sequences $\{a_n\}$ and $\{b_n\}$, $\{a_n\}=O(\{b_n\})$ iff $\lim \sup_{n\rightarrow \infty}a_n/b_n<\infty$, $a_n = \Omega (b_n)$ iff $b_n = O(a_n)$, $a_n = \Theta(b_n)$ iff $a_n = O(b_n)$ and $b_n = O(a_n)$. We also write $\tilde{O}(\cdot)$, $\tilde{\Omega}(\cdot)$ and $\tilde{\Theta}(\cdot)$ to denote the respective meanings within multiplicative logarithmic factors in $n$. 
\end{notations}

\subsection{Local Differential Privacy}
\label{sec:ldp}    

\begin{definition}[Local differential privacy] 
	\label{def:ldp}
	We say a (randomized) mechanism $M:\mathcal X\to \mathcal Z$ is $(\varepsilon,\delta)$-LDP, if for every $x\neq x'\in \mathcal{ X}$ and any measurable set $C\subset \mathcal Z $ we have 
	\begin{align*}
    P(M(x)\in C) \leq e^\varepsilon P(M(x')\in C)+\delta.
  \end{align*}
	When $\delta=0$, we simply denote $\varepsilon$-LDP.
\end{definition}
We now present some tools that will be useful for our analysis. 

\begin{lemma}[Gaussian Mechanism \cite{dwork2006our,Dwork2013}]
	\label{lem:gaussian}
	For any $f: \mathcal{X} \rightarrow$ $\R^{n}$, let $\sigma_{\varepsilon,\delta} = \frac{1}{\varepsilon} \sup_{x,x^{\prime}\in \mathcal{X}} \lVert f(x) -f(x^{\prime})\rVert_2\sqrt{2 \ln (1.25 / \delta)}$.
	The Gaussian mechanism, which adds random noise independently drawn from distribution $\mathcal{N}(0, \sigma^{2}_{\varepsilon,\delta}I_n)$ to each output of $f$, ensures $(\varepsilon, \delta)$-LDP.
\end{lemma}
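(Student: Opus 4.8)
The plan is to reduce the $(\varepsilon,\delta)$-LDP guarantee to a tail bound on the \emph{privacy loss random variable}, following the classical analysis of the Gaussian mechanism. Write $M(x) = f(x) + Z$ with $Z \sim \mathcal{N}(0, \sigma_{\varepsilon,\delta}^2 I_n)$, and fix two inputs $x, x'$. Since both $M(x)$ and $M(x')$ have strictly positive Gaussian densities $p_x$ and $p_{x'}$ on $\R^n$, define the privacy loss $L(z) = \ln\bigl(p_x(z)/p_{x'}(z)\bigr)$. First I would establish the standard reduction: if $\Pr_{z\sim p_x}\bigl[L(z) > \varepsilon\bigr] \le \delta$, then the mechanism is $(\varepsilon,\delta)$-LDP. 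This follows by partitioning any measurable $C$ against the bad set $S = \{z : L(z) > \varepsilon\}$; on $S^c$ one has $p_x(z) \le e^\varepsilon p_{x'}(z)$ pointwise, so $P(M(x)\in C) \le P(M(x)\in S) + e^\varepsilon P(M(x')\in C) \le \delta + e^\varepsilon P(M(x')\in C)$, which is exactly the inequality of Definition~\ref{def:ldp}.

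Next I would compute $L$ explicitly. Setting $v = f(x) - f(x')$ and substituting $z = f(x) + Z$, the quadratic terms in the Gaussian exponent collapse to $L(z) = \bigl(2\langle Z, v\rangle + \lVert v\rVert_2^2\bigr)/(2\sigma_{\varepsilon,\delta}^2)$. Because $\langle Z, v\rangle \sim \mathcal{N}(0, \sigma_{\varepsilon,\delta}^2 \lVert v\rVert_2^2)$, the privacy loss is a one-dimensional Gaussian with mean $\lVert v\rVert_2^2/(2\sigma_{\varepsilon,\delta}^2)$ and standard deviation $\lVert v\rVert_2/\sigma_{\varepsilon,\delta}$. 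This collapses the whole problem to a scalar Gaussian tail estimate, and it is the point at which the sensitivity enters.

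The crux — and the main obstacle — is to verify that the stated choice $\sigma_{\varepsilon,\delta} = (\Delta/\varepsilon)\sqrt{2\ln(1.25/\delta)}$, where $\Delta := \sup_{x,x'}\lVert f(x)-f(x')\rVert_2 \ge \lVert v\rVert_2$, forces $\Pr[L > \varepsilon] \le \delta$. Standardizing, the event $\{L > \varepsilon\}$ becomes $\{\mathcal{N}(0,1) > \varepsilon\sigma_{\varepsilon,\delta}/\lVert v\rVert_2 - \lVert v\rVert_2/(2\sigma_{\varepsilon,\delta})\}$, whose probability is largest when $\lVert v\rVert_2 = \Delta$; substituting the prescribed $\sigma_{\varepsilon,\delta}$ turns the threshold into $c - \varepsilon/(2c)$ with $c = \sqrt{2\ln(1.25/\delta)}$. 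The delicate point is that the crude sub-Gaussian bound $\Pr[\mathcal{N}(0,1)>s]\le e^{-s^2/2}$ is \emph{not} tight enough to recover the factor $1.25$; one must instead invoke the sharper tail estimate $\Pr[\mathcal{N}(0,1) > s] \le \tfrac{1}{s\sqrt{2\pi}} e^{-s^2/2}$ and carefully discharge the resulting inequality, which closes only in the regime $\varepsilon \le 1$ implicit in the classical statement. This constant-tracking is the only genuinely technical step.

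Once $\Pr[L > \varepsilon] \le \delta$ is in hand, combining it with the reduction from the first paragraph completes the argument. Since the Gaussian noise is added independently to each coordinate of the output and the tail bound holds uniformly over all pairs $x \ne x'$ (through the supremum defining $\Delta$), the mechanism satisfies Definition~\ref{def:ldp}, which is the desired conclusion.
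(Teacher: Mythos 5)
The paper never proves this lemma: it is imported verbatim from the cited references (Dwork et al.), so there is no in-paper argument to compare against. Your proposal reconstructs exactly the classical proof from that source — the reduction of $(\varepsilon,\delta)$-LDP to the tail bound $\Pr[L>\varepsilon]\le\delta$ on the privacy loss, the computation showing $L$ is Gaussian with mean $\lVert v\rVert_2^2/(2\sigma_{\varepsilon,\delta}^2)$ and standard deviation $\lVert v\rVert_2/\sigma_{\varepsilon,\delta}$, the monotonicity argument reducing to the worst case $\lVert v\rVert_2=\Delta$, and the observation that the Mills-ratio tail bound $\Pr[\mathcal{N}(0,1)>s]\le \tfrac{1}{s\sqrt{2\pi}}e^{-s^2/2}$ (not the crude sub-Gaussian bound) is what yields the constant $1.25$ — and each of these steps is correct. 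Two remarks: the final constant-tracking inequality is described but not actually discharged, so as written this is a (correct) proof plan rather than a complete proof; and you rightly flag that the argument only closes for $\varepsilon\le 1$, a restriction the paper's statement of the lemma leaves implicit even though it matters for how the lemma is invoked downstream.
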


Although all our results can be extended in parallel to $\varepsilon$-LDP if using Laplacian noise instead of Gaussian noise, we focus on $(\varepsilon, \delta)$-LDP in this paper.
Besides the Gaussian mechanism, we also use the following privacy mechanism for bounded vectors.

\begin{lemma}[Privacy Mechanism for $l_2$-ball \cite{duchi2018minimax}]
	\label{lem:l2}
	For any $R>0$, let $r_{\varepsilon,d} = R \dfrac{\sqrt{\pi}}{2}\dfrac{e^\varepsilon+1}{e^\varepsilon-1}\dfrac{d\Gamma(\frac{d+1}{2})}{\Gamma(\frac{d}{2}+1)}$ where  and $\Gamma$ is the Gamma function.
	For any  $x\in B^d_R$, consider the mechanism $\Psi_{\varepsilon,R}:B^d_R\to S^{d-1}_{r_{\varepsilon,d}}$ of generating $Z_x$ as the follows. 
	First, generate a random vector $\tilde{X} = (2b-1)x$ where $b$ is a Bernoulli random variable with success probability $\frac{1}{2}+\frac{\lVert x\rVert_2}{2R}$. Next, generate  random vector $Z_x$ via 
	\begin{align*}
		Z_x\sim \left\{\begin{matrix}
			\text{Unif}\{z \in \R^d:  z^T \tilde{X}>0,\lVert z\rVert_2 = r_{\varepsilon,d}\} \textrm{ with probability }e^\varepsilon/(1+e^\varepsilon)\\
			\text{Unif}\{z \in \R^d:  z^T\tilde{X}\leq 0,\lVert z\rVert_2 = r_{\varepsilon,d}\} \textrm{ with probability } 1/(1+e^\varepsilon).
		\end{matrix}\right.
	\end{align*}
 	Then $\Psi_{\varepsilon,R}$ is $\varepsilon$-LDP and $\E[\Psi_{\varepsilon,R}(x)] = x.$
\label{privacy-bounded-vector}
\end{lemma}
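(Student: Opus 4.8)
The plan is to verify the two assertions—$\varepsilon$-LDP and unbiasedness—separately, in both cases by writing down the density of $Z_x$ on the sphere $S^{d-1}_{r_{\varepsilon,d}}$ and exploiting rotational symmetry about the axis $x$. Throughout, let $A_r$ denote the total surface (Hausdorff) measure of $S^{d-1}_{r_{\varepsilon,d}}$, so that the uniform law on a hemisphere has constant density $2/A_r$ there.

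For privacy, I would first condition on the value of $\tilde{X}\in\{x,-x\}$, which occurs with probabilities $p:=\tfrac12+\tfrac{\lVert x\rVert_2}{2R}$ and $1-p$, and then marginalize. The resulting density of $Z_x$ at a point $z$ depends only on the sign of $z^T x$ and takes one of two values,
$$p_x(z)\in\left\{\frac{2}{A_r(1+e^\varepsilon)}\big(pe^\varepsilon+(1-p)\big),\ \frac{2}{A_r(1+e^\varepsilon)}\big(p+(1-p)e^\varepsilon\big)\right\}.$$
As $x$ ranges over $B^d_R$ the quantity $p$ ranges over $[\tfrac12,1]$, and a one-line monotonicity check shows that every density value lies in the interval $[\,\tfrac{2}{A_r(1+e^\varepsilon)},\ \tfrac{2e^\varepsilon}{A_r(1+e^\varepsilon)}\,]$, whose endpoints have ratio exactly $e^\varepsilon$. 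Hence $p_x(z)\le e^\varepsilon p_{x'}(z)$ for all inputs $x,x'$ and all $z$, and integrating over any measurable $C$ gives $P(Z_x\in C)\le e^\varepsilon P(Z_{x'}\in C)$, i.e. $\varepsilon$-LDP. The degenerate input $x=0$, where $Z_0$ is uniform on the full sphere with density $1/A_r$, is covered by the same interval.

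For unbiasedness I would again condition on $\tilde{X}$. By symmetry about the pole $\hat x:=x/\lVert x\rVert_2$, the mean of the uniform law on a hemisphere $\{z^T v>0\}$ equals $r_{\varepsilon,d}\,\beta_d\,\hat v$ for a dimensional constant $\beta_d$ (the mean projection of the uniform unit hemisphere onto its pole), while the opposite hemisphere contributes $-r_{\varepsilon,d}\,\beta_d\,\hat v$. Combining with the hemisphere weights $e^\varepsilon/(1+e^\varepsilon)$ and $1/(1+e^\varepsilon)$ and then averaging over $\tilde X\in\{x,-x\}$ produces a net factor $(2p-1)$, so that
$$\E[Z_x]=r_{\varepsilon,d}\,\beta_d\,\frac{e^\varepsilon-1}{e^\varepsilon+1}\,(2p-1)\,\hat x=r_{\varepsilon,d}\,\beta_d\,\frac{e^\varepsilon-1}{e^\varepsilon+1}\,\frac{x}{R},$$
using $2p-1=\lVert x\rVert_2/R$. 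Thus $\E[Z_x]=x$ precisely when $r_{\varepsilon,d}=R\,\frac{e^\varepsilon+1}{e^\varepsilon-1}\,\beta_d^{-1}$.

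The real work—and the step I expect to be the main obstacle—is computing $\beta_d$ and matching it to the stated constant. I would use the marginal density of a single coordinate of a uniform point on $S^{d-1}$, namely $f(t)=\frac{\Gamma(d/2)}{\sqrt\pi\,\Gamma((d-1)/2)}(1-t^2)^{(d-3)/2}$ on $[-1,1]$, so that $\beta_d=2\int_0^1 t\,f(t)\,dt$. The substitution $u=1-t^2$ reduces the integral to $1/(d-1)$, giving $\beta_d=\frac{2\Gamma(d/2)}{\sqrt\pi\,(d-1)\,\Gamma((d-1)/2)}$. A final manipulation using $\Gamma(d/2+1)=(d/2)\Gamma(d/2)$ and $\Gamma((d+1)/2)=\frac{d-1}{2}\Gamma((d-1)/2)$ rewrites $\beta_d^{-1}=\frac{\sqrt\pi}{2}\frac{d\,\Gamma((d+1)/2)}{\Gamma(d/2+1)}$, which is exactly the factor multiplying $R\frac{e^\varepsilon+1}{e^\varepsilon-1}$ in the definition of $r_{\varepsilon,d}$; this closes the unbiasedness claim. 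The care needed is entirely in the Gamma-function bookkeeping and in justifying the marginal coordinate density—everything else follows from the symmetry structure above.
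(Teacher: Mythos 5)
Your proof is correct, but there is nothing in the paper itself to compare it against: Lemma~\ref{lem:l2} is imported verbatim from \cite{duchi2018minimax}, and the paper never proves it --- its appendix only proves the propositions that \emph{use} this lemma through post-processing and composition. What you have done is reconstruct, essentially faithfully, the original argument of Duchi, Jordan and Wainwright. Both halves check out. For privacy: conditioning on $\tilde X=\pm x$ and marginalizing does give a density on $S^{d-1}_{r_{\varepsilon,d}}$ taking only the two values $\tfrac{2}{A_r(1+e^{\varepsilon})}\bigl(pe^{\varepsilon}+(1-p)\bigr)$ and $\tfrac{2}{A_r(1+e^{\varepsilon})}\bigl(p+(1-p)e^{\varepsilon}\bigr)$ (the boundary $\{z: z^Tx=0\}$ is surface-measure zero, so the open/closed hemisphere distinction is immaterial), and since the first expression increases and the second decreases in $p$ over $[\tfrac12,1]$, every density lies in $\bigl[\tfrac{2}{A_r(1+e^{\varepsilon})},\tfrac{2e^{\varepsilon}}{A_r(1+e^{\varepsilon})}\bigr]$, whose endpoint ratio is exactly $e^{\varepsilon}$; integrating the pointwise bound over any measurable $C$ gives the LDP inequality with $\delta=0$. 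For unbiasedness: the symmetry reduction to the scalar $\beta_d$, the net factor $(2p-1)\tfrac{e^{\varepsilon}-1}{e^{\varepsilon}+1}$ with $2p-1=\lVert x\rVert_2/R$, the one-coordinate marginal density $\tfrac{\Gamma(d/2)}{\sqrt{\pi}\,\Gamma((d-1)/2)}(1-t^2)^{(d-3)/2}$, the Beta integral $\int_0^1 t(1-t^2)^{(d-3)/2}dt=\tfrac{1}{d-1}$, and the Gamma identities $\Gamma(\tfrac d2+1)=\tfrac d2\Gamma(\tfrac d2)$, $\Gamma(\tfrac{d+1}2)=\tfrac{d-1}2\Gamma(\tfrac{d-1}2)$ together give $\beta_d^{-1}=\tfrac{\sqrt{\pi}}{2}\tfrac{d\,\Gamma((d+1)/2)}{\Gamma(d/2+1)}$, which is precisely the constant in $r_{\varepsilon,d}$, so $\E[\Psi_{\varepsilon,R}(x)]=x$.

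Two small points you would need to make explicit in a full write-up, neither of which is a genuine gap. First, the degenerate input $x=\bm 0$: as stated, $\tilde X=\bm 0$ and $\{z: z^T\tilde X>0\}$ is empty, so ``uniform on it'' is undefined; your reading that $Z_{\bm 0}$ is uniform on the whole sphere is the sensible convention (its density $1/A_r$ lies in your interval and its mean is $\bm 0$), but it is a convention, not a consequence of the statement. Second, the marginal-density formula assumes $d\ge 2$ (at $d=1$ the sphere is two points and $\beta_1=1$ directly); the final identity for $\beta_d^{-1}$ still holds there, so nothing breaks, but the derivation as written silently assumes $d\ge 2$.
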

\begin{lemma}[Post-Processing property \cite{Dwork2013}]
	\label{lem:post}
	If  $M:\mathcal X\to\mathcal Y$ is $(\varepsilon,\delta)$-LDP and $f:\mathcal Y\to \mathcal Z$ is a fixed map, then $f\circ M:\mathcal X\to\mathcal Z$ is $(\varepsilon,\delta)$-LDP.  \end{lemma}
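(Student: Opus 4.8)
The plan is to reduce the claim to the definition of $(\varepsilon,\delta)$-LDP for $M$ by pulling back an arbitrary measurable test set through $f$. First I would fix $x\neq x'\in\mathcal X$ and an arbitrary measurable set $C\subset\mathcal Z$, and observe that because $f$ is a deterministic map, the event $\{f(M(x))\in C\}$ coincides with $\{M(x)\in f^{-1}(C)\}$, where $f^{-1}(C)=\{y\in\mathcal Y: f(y)\in C\}$ is the preimage of $C$ under $f$. Provided $f$ is (Borel) measurable, $f^{-1}(C)$ is itself a measurable subset of $\mathcal Y$, so it is an admissible test set for Definition~\ref{def:ldp} applied to the mechanism $M$.

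The next step is a direct substitution. Writing $B:=f^{-1}(C)$ and applying the $(\varepsilon,\delta)$-LDP guarantee of $M$ to the measurable set $B\subset\mathcal Y$ yields
\begin{align*}
P\big(f(M(x))\in C\big)=P\big(M(x)\in B\big)\le e^{\varepsilon}\,P\big(M(x')\in B\big)+\delta=e^{\varepsilon}\,P\big(f(M(x'))\in C\big)+\delta.
\end{align*}
Since $x$, $x'$, and $C$ were arbitrary, this is exactly the defining inequality of $(\varepsilon,\delta)$-LDP for the composed mechanism $f\circ M$, which completes the argument. Note that the privacy parameters $(\varepsilon,\delta)$ pass through verbatim, so no quantitative estimate or loss is incurred.

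The only genuine technical point—and the step I would be most careful about—is the measurability of $f^{-1}(C)$, which is what guarantees that the pulled-back set is a legitimate event on which $M$'s guarantee may be invoked; this is automatic once $f$ is assumed to be a measurable map between the relevant spaces, an implicit standing hypothesis behind the phrase "fixed map." I would also remark that the same one-line pullback argument extends to a \emph{randomized} post-processing map chosen independently of the data, by first conditioning on the internal randomness of $f$ and then integrating out that randomness against the bound above; but for the deterministic $f$ in the statement the single preimage computation already suffices.
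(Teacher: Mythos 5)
Your proof is correct: the paper states this lemma as a known result cited from \cite{Dwork2013} and gives no proof of its own, and your pullback argument --- rewriting $\{f(M(x))\in C\}$ as $\{M(x)\in f^{-1}(C)\}$ and invoking the $(\varepsilon,\delta)$-LDP guarantee of $M$ on the preimage --- is exactly the canonical proof for deterministic post-processing, with the measurability of $f^{-1}(C)$ correctly identified as the only hypothesis doing real work. Your closing remark on extending to randomized post-processing by conditioning on the independent internal randomness of $f$ and integrating the pointwise bound is also sound.
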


\begin{lemma}[Composition property \cite{Dwork2013}] If $M_1:\mathcal X\to\mathcal Z_1$ is  $(\varepsilon_1,\delta_1)$-LDP and $M_2:\mathcal X\to\mathcal Z_2$ is $(\varepsilon_2,\delta_2)$-LDP, then $M = (M_1,M_2): \mathcal X\to\mathcal Z_1\times \mathcal Z_2$ is $(\varepsilon_1+\varepsilon_2,\delta_1+\delta_2)$-LDP.
	\label{lem:composition} 
\end{lemma}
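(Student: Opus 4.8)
The plan is to reduce the claim to a pointwise statement about product measures and then control the additive terms via a min-density decomposition. The only structural fact I will use is that $M_1$ and $M_2$ draw their internal randomness independently, so that for each fixed input $x$ the law of $M(x)=(M_1(x),M_2(x))$ on $\mathcal Z_1\times\mathcal Z_2$ is the product measure $P_{M_1(x)}\otimes P_{M_2(x)}$. Because the LDP definition quantifies over \emph{every} pair $x\neq x'$ (the local model has no nontrivial neighbouring relation), I would fix one arbitrary such ordered pair and abbreviate $P_i=P_{M_i(x)}$ and $Q_i=P_{M_i(x')}$; it then suffices to establish
\[
(P_1\otimes P_2)(C)\le e^{\varepsilon_1+\varepsilon_2}(Q_1\otimes Q_2)(C)+(\delta_1+\delta_2)
\]
for every measurable $C\subseteq\mathcal Z_1\times\mathcal Z_2$. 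To make densities available I would fix the common dominating measure $\mu_i=P_i+Q_i$ and write $p_i,q_i$ for the densities of $P_i,Q_i$, so that the hypothesis ``$M_i$ is $(\varepsilon_i,\delta_i)$-LDP'' becomes $P_i(S)-e^{\varepsilon_i}Q_i(S)\le\delta_i$ for all $S$.

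The key step is a one-coordinate decomposition separating a pure-privacy part from a small leakage part. I would set $\tilde p_i=\min(p_i,\,e^{\varepsilon_i}q_i)$ and let $P_i^{\flat}$ be the sub-probability measure with density $\tilde p_i$, so that $P_i^{\flat}\le e^{\varepsilon_i}Q_i$ as measures, i.e.\ $P_i^{\flat}$ satisfies \emph{pure} $\varepsilon_i$-privacy against $Q_i$. The remainder $E_i=P_i-P_i^{\flat}$ has density $(p_i-e^{\varepsilon_i}q_i)_+\ge 0$, hence is a nonnegative measure whose total mass is $\|E_i\|=P_i(B_i)-e^{\varepsilon_i}Q_i(B_i)\le\delta_i$ for $B_i=\{p_i>e^{\varepsilon_i}q_i\}$, the last inequality being exactly the hypothesis applied to $S=B_i$. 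This yields the clean splitting $P_i=P_i^{\flat}+E_i$ with $P_i^{\flat}\le e^{\varepsilon_i}Q_i$ and $\|E_i\|\le\delta_i$.

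I would then expand the product law through this decomposition,
\[
P_1\otimes P_2=P_1^{\flat}\otimes P_2^{\flat}+P_1^{\flat}\otimes E_2+E_1\otimes P_2^{\flat}+E_1\otimes E_2 ,
\]
and treat the four pieces separately. Monotonicity of the tensor product of nonnegative measures bounds the leading term by $P_1^{\flat}\otimes P_2^{\flat}\le e^{\varepsilon_1+\varepsilon_2}\,Q_1\otimes Q_2$, producing the required multiplicative factor. For the three remaining terms I would only need their total mass, which (using $\|P_i^{\flat}\|=1-\|E_i\|$) equals $\|E_1\|+\|E_2\|-\|E_1\|\,\|E_2\|\le\delta_1+\delta_2$. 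Evaluating the expansion on an arbitrary $C$, bounding the leading term by $e^{\varepsilon_1+\varepsilon_2}(Q_1\otimes Q_2)(C)$ and the cross terms by their mass, gives the displayed inequality; since the pair $x\neq x'$ was arbitrary, $M$ is $(\varepsilon_1+\varepsilon_2,\delta_1+\delta_2)$-LDP.

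The pure case $\delta_1=\delta_2=0$ is immediate, since then $P_i=P_i^{\flat}$ and only the leading term survives. I expect the main obstacle to be keeping the two $\delta_i$ genuinely \emph{additive}: a naive triangle-inequality bound on the hockey-stick divergence of the product accrues spurious factors $e^{\varepsilon_i}$ on the $\delta$-terms (or a $\delta_1\delta_2$ surplus), and it is precisely the $\min$-density construction together with the exact identity $\|E_1\|+\|E_2\|-\|E_1\|\,\|E_2\|$ that prevents this. A secondary technical point is the measure-theoretic setup: choosing $\mu_i=P_i+Q_i$ as dominating measure guarantees the densities exist and quietly absorbs any part of $P_i$ singular with respect to $Q_i$ into the leakage $E_i$.
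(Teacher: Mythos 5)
Your proof is correct. Note first that the paper itself gives no proof of this lemma: it is quoted as a known result from \cite{Dwork2013}, so there is no in-paper argument to compare against; your write-up is a self-contained derivation of that cited fact. Your route — fixing $x\neq x'$, using independence of the internal randomness to write the joint law as $P_1\otimes P_2$, splitting each coordinate as $P_i=P_i^{\flat}+E_i$ with $\tilde p_i=\min(p_i,e^{\varepsilon_i}q_i)$, and expanding the tensor product into four terms — is the standard way to prove basic composition for approximate DP, and every step checks out: $\|E_i\|=P_i(B_i)-e^{\varepsilon_i}Q_i(B_i)\le\delta_i$ is exactly the hypothesis on the set $B_i=\{p_i>e^{\varepsilon_i}q_i\}$, monotonicity of the product of nonnegative measures gives the factor $e^{\varepsilon_1+\varepsilon_2}$, and the mass identity
\begin{align*}
\|P_1^{\flat}\otimes E_2\|+\|E_1\otimes P_2^{\flat}\|+\|E_1\otimes E_2\|
=\|E_1\|+\|E_2\|-\|E_1\|\,\|E_2\|\le\delta_1+\delta_2
\end{align*}
is computed correctly. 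Your closing remark is also substantively right and worth keeping: the naive conditioning/Fubini argument (bound the inner integral by $(\varepsilon_2,\delta_2)$-LDP, then apply $(\varepsilon_1,\delta_1)$-LDP to the $[0,1]$-valued integrand via the layer-cake representation) only yields $(\varepsilon_1+\varepsilon_2,\,e^{\varepsilon_2}\delta_1+\delta_2)$, and crudely bounding the three remainder terms by $\delta_2$, $\delta_1$, $\delta_1\delta_2$ leaves a $\delta_1\delta_2$ surplus; the min-density decomposition together with $\|P_i^{\flat}\|=1-\|E_i\|$ is precisely what makes the deltas genuinely additive, which is what the paper's applications (e.g.\ Proposition~\ref{privacy:single-OLS} combining two $(\varepsilon/2,\delta/2)$-LDP outputs) rely on.
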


\subsection{Local Differential Privacy in Bandit}
\label{sec:ldp-bandits}
We consider contextual bandits with LDP guarantee in the context of the user-server communication protocol described in Figure~\ref{fig:protocol}.
The user in round $t$ with context $X_t\in\R^d$ receives (processed) historical information $S_{t-1}$ from the server, and chooses an action $a_t\in [K]$ to obtain a random reward $r_t = v(X_t, a_t) + \epsilon_t$ . 
Define $\mathcal{F}_t$ as the filtration of all historical information up to time $t$, i.e., $\mathcal{F}_t = \sigma(X_1, \cdots, X_t, \epsilon_1,\cdots, \epsilon_{t-1})$, and we require $\E[\epsilon_t\lvert \mathcal{F}_t ] = 0,\E[\exp( \lambda\epsilon_t)\lvert \mathcal{F}_t ]\leq\exp(\dfrac{\sigma_\epsilon^2\lambda^2}{2}),\forall\lambda\in\R $. 
Then the user processes the tuple $(X_t, r_t)$ by some mechanism $\varphi$ with LDP guarantee and send the processed information $Z_t = \varphi(X_t, r_t)$ to the server. 
After receiving $Z_t$, the server updates the historical information $S_t$ to get $S_{t+1}$. 
We consider the generalized linear bandits by allowing $v(X_t, a_t) = \mu(X_t^T\theta^{\star}_{a_t})$, where $\mu:\R\to \R$ is a link function and $\theta^{\star}_i\in\R^d$ is the underlying parameter of the $i$-th arm. For a fix time t, we denote $a^{*}_t = \argmax_{i\in [K]} \mu(X_{t}^T \theta^{\star}_i)$.
The regret over time horizon $T$ is $\text{Reg}(T) = \sum_{t=1}^T \left( \mu(X_{t}^T \theta^{\star}_{a_t^{*}}) - \mu(X_{t}^T \theta^{\star}_{a_t})\right)$. 
If we don't assume any structure on $\{\theta^{\star}_i\}_{i\in [K]}$, we refer it as the multi-parameter setting. 
We also consider $d$-dimensional single-param setting by assuming $\theta_{i}^{\star} = e_i \otimes \theta^{\star}$ for some $\theta^{\star}\in\R^{d}$ where $\{e_i\}_{i\in[K]}$ is canonical basis of $\R^K$.
In this case, $x_{t,i}\in\R^d$ is the $i$-th segment of $X_t\in\R^{dK}$ and $X_{t}^T\theta_{i}^{\star} =x_{t,i}^T \theta^{\star}$, so choosing arm~$i$ becomes choosing the $i$-th segment $x_{t,i}$ of the context.

\begin{figure}[!htbp]
\centering
\resizebox{\linewidth}{!}{
\begin{tikzpicture}
	\usetikzlibrary{shapes}
	\node at(-1,2.2) {User Side};
	\node at(-1,0.5) {Server Side};
	\node at (0.5, 0.5) {$\cdots$};
	\node at (15, 0.5) {$\cdots$};
	\node at (0.5, 2.2) {$\cdots$};
	\node at (15, 2.2) {$\cdots$};
	\node[cir] (1) at (2, 1.8) {$X_t$};
	\node[cir] (2) at (2, 3) {$r_t$};
	\node[pcir] (3) at (5, 2.5) {$Z_t$};
	\node[cir] (4) at (8, 1.8) {$X_{t+1}$};
	\node[cir] (5) at (8, 3) {$r_{t+1}$};
	\node[pcir] (6) at (11, 2.5) {$Z_{t+1}$};
	\node[box] (7) at (2, 0.5) {$S_t$};
	\node[box] (8) at (8, 0.5) {$S_{t+1}$};
	\node[box] (9) at (14, 0.5) {$S_{t+2}$};
	\draw[-latex, line width=0.30mm] (1) -- node[left] {$a_t$} ++ (2);
	\draw[-latex, dotted, blue, line width=0.30mm] (1) -- (3);
	\draw[-latex, dotted, blue, line width=0.30mm] (2) -- (3);
	\draw[-latex, line width=0.30mm] (4) -- node[left] {$a_{t+1}$} ++ (5);
	\draw[-latex, dotted, blue, line width=0.30mm] (4) -- (6);
	\draw[-latex, dotted, blue, line width=0.30mm] (5) -- (6);
	\draw[-latex, line width=0.30mm] (7) -- (1);
	\draw[-latex, line width=0.30mm] (3) -- (8);
	\draw[-latex, line width=0.30mm] (8) -- (4);
	\draw[-latex, line width=0.30mm] (6) -- (9);
	\draw[-latex, line width=0.30mm] (7) -- (8);
	\draw[-latex, line width=0.30mm] (8) -- (9);
	\draw[rounded corners, dotted] (-3,0) rectangle (16,1.1);
	\draw[rounded corners, dotted] (-3,1.2) rectangle (16,3.8);
\end{tikzpicture}
}
\caption{User-server communication protocol}
\label{fig:protocol}
\end{figure}
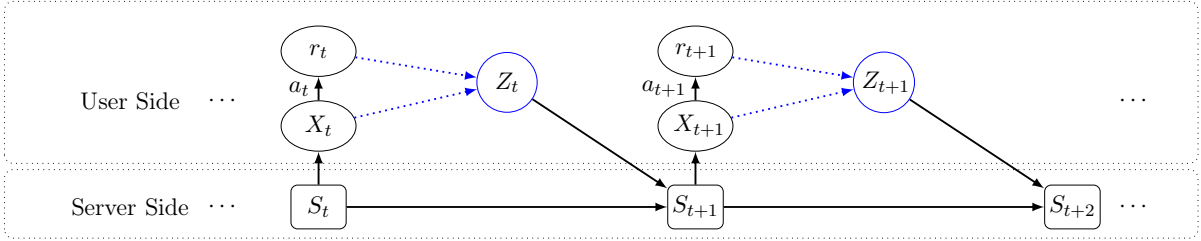

In the rest of paper, we always assume that $\lVert \theta_i^{\star}\rVert_2\leq 1,\forall i\in [K]$, the reward is bounded by $\RB$ and the context is bounded by $\CB$, our analysis can be easily generalized to the case where $\epsilon_t$ and the context follow sub-gaussian distributions.
We also impose regularize assumptions on the link function, which are common in previous work \cite{Zheng2020, ren2020batched, toulis2014statistical} and the corresponding family contains a lot of commonly-use model, e.g., linear model, logistic model.

\begin{assumption} 
	\label{assump:link}
	The link function $\mu$ is continuously differentiable, Lipschitz and there exists some $\GB>0$ such that $\inf_{x\in [-\CB,\CB]}\mu'(x)=\GB>0$.
\end{assumption}

\section{Single-Parameter Setting}
\label{sec:single}

In this section, we develop a LDP contextual bandit framework (Algorithm~\ref{LDP-Single}) by combining statistical estimation and privacy mechanisms in the single-param bandit setting to achieve optimal regret bound in various cases.
We use an abstract privacy mechanism $\psi$ in \eqref{eq:ldp-estimator} and estimator $\varphi$ in \eqref{eq:mechanism} to allow the plug-in of various components.

\begin{algorithm}
	\KwIn{Time horizon $T$; Privacy Level $\varepsilon,\delta$.}
	\textbf{Initialization:} Setting $ \hat{\theta}_{0} = \bm{0}.$ \\
	\For{ $t\leftarrow 1$ to  $T$}
	{
		\textbf{User side:}\\
			\quad Receive  $\hat{\theta}_{t-1}$ from the server.\\
			\quad Pull arm $a_t =  \text{argmax}_{a\in [K]} x_{t,a}^T \hat{\theta}_{t-1}$ and receive $r_{t}.$	\\
			\quad Generate $Z_{t}$ by 
			\begin{equation}
				\label{eq:ldp-estimator}
				Z_t=\psi_t(x_{t,a_t}, r_t; \hat{\theta}_{t-1}).
			\end{equation}\\
		\noindent\textbf{Server side:} \\
			\quad Receive $Z_{t}$ from the user.  \\
			\quad Update the estimation via 
			\begin{equation}
				\label{eq:mechanism}
				\hat{\theta}_t=\varphi_t(Z_1,\dots,Z_{t};\hat{\theta}_{t-1}).	
			\end{equation}
	}
	\caption{LDP Single-parameter Contextual Bandit}
	\label{LDP-Single}
\end{algorithm}

\subsection{Privacy Guarantee}
\label{sec:single-algo}

For the linear case where the link function $\mu(x) = x$, we can use the following ordinary least square (OLS) estimator. 
Let with $\pfactorc = 2 \sqrt{2 \ln (1.25/ \delta)}/\varepsilon$.
Define $M_t=x_{t,a_t}x_{t, a_t}^T+W_t$ where $W_t$ is a random matrix with $W_t(i,j)\sim \mathcal{N}(0, 4\CB^2 \pfactorc^2)$ and $W_t(j,i) = W_t(i,j)$, and $u_t=r_tx_{t, a_t}+\xi_t$ where $\xi_t$ is a random vector following distribution $\mathcal{N}(0, \CB^2 \RB^2\pfactorc^2 I_d)$.
The OLS privacy mechanism and the corresponding estimator are
\begin{align}
	\label{eq:OLS-mechanism}
	\psi^{OLS}_t(x_{t,a_t},r_t; \hat{\theta}_{t-1}) &= (M_t,u_t),\\
	\label{eq:OSL-estimator}
	\varphi^{OLS}_t(Z_1,\dots,Z_t; \hat{\theta}_{t-1}) &= 
		\big(
			\sum_{i=1}^t M_i +\tilde{c} \sqrt{t} I
		\big)^{-1}
		\sum_{i=1}^t u_i,
\end{align}
where $\tilde{c}>0$ is to be determined. We have the following LDP guarantee using the Gaussian mechanism (Lemma~\ref{lem:gaussian}) and post-processing (Lemma~\ref{lem:post}).
\begin{property}\label{privacy:single-OLS}
	Algorithm~\ref{LDP-Single} with the private OLS update mechanism $\psi^{OLS}_{t}$ and estimator $\varphi_{t}^{OLS}$ is $(\varepsilon, \delta)$-LDP.
\end{property}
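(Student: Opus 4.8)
The plan is to prove the per-round, per-user guarantee and then lift it to the whole protocol. For a fixed round $t$ I would show that the user-side map sending the private pair $(X_t,r_t)$ to the released message $Z_t=(M_t,u_t)$ is $(\varepsilon,\delta)$-LDP, and then argue that the server's update $\varphi_t^{OLS}$ contributes no additional loss. First I would note that $\hat\theta_{t-1}$ is received from the server and is therefore public, and that the chosen arm $a_t=\argmax_{a}x_{t,a}^T\hat\theta_{t-1}$ is a deterministic function of $X_t$ together with this public quantity; hence, conditioning on $\hat\theta_{t-1}$, the entire message $Z_t$ is a randomized function of the user's own data $(X_t,r_t)$ alone, which is exactly the object Definition~\ref{def:ldp} constrains. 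The two coordinates are each an instance of the Gaussian mechanism (Lemma~\ref{lem:gaussian}): $M_t$ perturbs the signal $x_{t,a_t}x_{t,a_t}^T$ by symmetric Gaussian noise, and $u_t$ perturbs $r_t x_{t,a_t}$ by $\mathcal N(0,\CB^2\RB^2\pfactorc^2 I_d)$.

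The key step is the sensitivity calibration. Using the uniform bounds $\|x_{t,a}\|_2\le \CB$ for every arm and context together with $|r_t|\le \RB$, the $\ell_2$ (Frobenius) sensitivity of the signal $x_{t,a_t}x_{t,a_t}^T$ is of order $\CB^2$ and that of $r_t x_{t,a_t}$ is of order $\CB\RB$; crucially these bounds hold uniformly over every possible value of the arm index $a_t$, so switching to a neighboring context that happens to select a different segment cannot inflate them. I would then verify that the prescribed variances $4\CB^2\pfactorc^2$ and $\CB^2\RB^2\pfactorc^2$ are precisely the Gaussian-mechanism noise levels required by Lemma~\ref{lem:gaussian} for these sensitivities when the privacy budget is split equally, i.e.\ each coordinate is rendered $(\varepsilon/2,\delta/2)$-LDP; here the factor $2$ inside $\pfactorc=2\sqrt{2\ln(1.25/\delta)}/\varepsilon$ is exactly the $1/(\varepsilon/2)$ produced by the split. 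Applying the Composition Lemma (Lemma~\ref{lem:composition}) to the two $(\varepsilon/2,\delta/2)$ coordinates then gives that $(X_t,r_t)\mapsto Z_t$ is $(\varepsilon,\delta)$-LDP.

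Finally I would close with post-processing. The estimator $\hat\theta_t=\varphi_t^{OLS}(Z_1,\dots,Z_t;\hat\theta_{t-1})$ is a fixed deterministic map of already-released (hence already-privatized) messages, so by Lemma~\ref{lem:post} it incurs no further loss; since each user interacts with the protocol only at her own round, no cross-round composition is needed and the per-user guarantee is the guarantee of the whole algorithm. The step I expect to require the most care is the symmetric coordinate $M_t$: because $M_t$ is a symmetric matrix one cannot naively treat it as a vector with i.i.d.\ $\mathcal N(0,\sigma^2 I)$ entries, and I would make precise that releasing the symmetric $M_t$ is equivalent, for privacy, to releasing only its upper-triangular part (the lower triangle being a deterministic mirror, hence post-processing), so that Lemma~\ref{lem:gaussian} applies to that reduced output with the sensitivity measured on it. The secondary subtlety is purely bookkeeping: tracking the equal split of $(\varepsilon,\delta)$ so that the two Gaussian mechanisms recompose back to the claimed budget.
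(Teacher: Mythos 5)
Your proposal takes essentially the same route as the paper's own proof: bounded contexts and rewards give bounded sensitivity, Lemma~\ref{lem:gaussian} makes each coordinate $M_t$ and $u_t$ an $(\varepsilon/2,\delta/2)$-LDP Gaussian mechanism, and Lemma~\ref{lem:composition} recombines them into $(\varepsilon,\delta)$-LDP, with Lemma~\ref{lem:post} covering the server-side estimator. The paper's proof is a two-line version of yours; the extra points you raise (that $a_t$ is a deterministic function of $X_t$ and the public $\hat{\theta}_{t-1}$, and that the symmetric $M_t$ should be treated via its upper-triangular part) are exactly the details the paper leaves implicit, and your noise-calibration check is the same assertion the paper makes without spelling it out.
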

For the general link function $\mu$, its non-linearity adds to the difficulty in terms of both privacy-preserving and bandits. 
To estimate parameters in generalized linear bandits, one common approach to use a maximum likelihood estimator (MLE)  at each step. 
In contrast to OLS solution, MLE does not have a close form solution with simple sufficient statistics in general. 
Thus, solving an MLE optimization procedure requires using all the previous data points and conducting costly operations at each round, resulting in time complexity and memory usage increasing with time. 
Instead, we use a one-step stochastic gradient approximation to incrementally update the estimator with the new observation at each round. 
To obtain a LDP version of this approximation, we use the LDP $l_2$-ball mechanism in Lemma~\ref{privacy-bounded-vector}.
\begin{align}
	\label{eq:SGD-mechanism}
	\psi_t^{SGD}(x_{t,a_t},r_t; \hat{\theta}_{t-1}) &= \Psi_{\varepsilon,R}
		\left( 
			\big(
				\mu( x_{t, a_t}^T\hat{\theta}_{t-1})-r_t
			\big)
			x_{t, a_t}
		\right), \\
		\label{eq:SGD-estimator}
	\varphi_t^{SGD}(Z_1,\dots,Z_t; \hat{\theta}_{t-1}) &= \hat{\theta}_{t-1}-\eta_t \psi^{SGD}_t.
\end{align}
where $\eta_t>0$ is the stepsize to be determined and $R = 2\RB\CB $. Similarly, we can prove the following LDP guarrantee using the $l_2$-ball mechanism Lemma~\ref{lem:l2} and post-processing Lemma~\ref{lem:post}.

\begin{property}\label{privacy:single-SGD}
	Algorithm~\ref{LDP-Single} with the private SGD update mechanism $\psi^{SGD}_{t}$ and estimator $\varphi_{t}^{SGD}$ is $\varepsilon$-LDP.
\end{property}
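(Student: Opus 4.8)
The plan is to reduce the claim to a single application of the $l_2$-ball mechanism (Lemma~\ref{privacy-bounded-vector}) followed by the post-processing property (Lemma~\ref{lem:post}), mirroring the structure behind Proposition~\ref{privacy:single-OLS}. The only quantity each user $t$ releases is the message $Z_t = \psi_t^{SGD}(x_{t,a_t}, r_t; \hat\theta_{t-1})$, so it suffices to show that the map $(x_{t,a_t}, r_t) \mapsto Z_t$ is $\varepsilon$-LDP with respect to user $t$'s data and that every subsequent server computation is mere post-processing.

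First I would verify the boundedness condition required by Lemma~\ref{privacy-bounded-vector}, namely that the gradient fed into $\Psi_{\varepsilon,R}$ lies in $B_R^d$ with $R = 2\RB\CB$. Writing $g_t = \big(\mu(x_{t,a_t}^T\hat\theta_{t-1}) - r_t\big)\, x_{t,a_t}$, I would bound
\[
\|g_t\|_2 = |\mu(x_{t,a_t}^T\hat\theta_{t-1}) - r_t|\,\|x_{t,a_t}\|_2 \le \big(|\mu(x_{t,a_t}^T\hat\theta_{t-1})| + |r_t|\big)\CB \le 2\RB\CB = R,
\]
using $\|x_{t,a_t}\|_2 \le \CB$, $|r_t| \le \RB$, and $|\mu(x_{t,a_t}^T\hat\theta_{t-1})| \le \RB$. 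The last bound follows from $\|\hat\theta_{t-1}\|_2 \le 1$ together with the reward bound, which forces $\mu$ to be bounded by $\RB$ on $[-\CB,\CB] \supseteq \{x_{t,a_t}^T\hat\theta_{t-1}\}$. I expect this boundedness check — in particular confirming that $\hat\theta_{t-1}$ stays in the feasible unit ball so that the argument of $\mu$ lies in $[-\CB,\CB]$ — to be the main technical point, since otherwise the input need not lie in the domain of $\Psi_{\varepsilon,R}$ and the calibration radius $R$ would be wrong.

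Given the boundedness, $\Psi_{\varepsilon,R}$ is $\varepsilon$-LDP on $B_R^d$ by Lemma~\ref{privacy-bounded-vector}. Since $g_t$ is a deterministic function of the private pair $(x_{t,a_t}, r_t)$ — the quantity $\hat\theta_{t-1}$ is received from the server and is independent of user $t$'s raw data, hence acts as a fixed, data-independent parameter of the pre-processing map — the ratio bound of Definition~\ref{def:ldp} for any two user inputs is inherited directly from that of $\Psi_{\varepsilon,R}$, because deterministic pre-processing with public information leaves the LDP guarantee intact. Therefore $Z_t$ is an $\varepsilon$-LDP view of user $t$'s data.

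Finally I would invoke the post-processing property (Lemma~\ref{lem:post}): the update $\varphi_t^{SGD}(Z_1,\dots,Z_t;\hat\theta_{t-1}) = \hat\theta_{t-1} - \eta_t Z_t$ and every later quantity formed on the server are fixed functions of the already-privatized messages $Z_1, \dots, Z_t$, so they cannot weaken the per-user $\varepsilon$-LDP guarantee. Because each $Z_t$ is the sole function of user $t$'s data that leaves the user side, and each is $\varepsilon$-LDP, the entire Algorithm~\ref{LDP-Single} is $\varepsilon$-LDP. The one subtlety worth spelling out is that the protocol is adaptive, as $\hat\theta_{t-1}$ depends on earlier users; but since this dependence enters user $t$'s mechanism only as externally supplied public information, it does not enlarge the privacy loss incurred by user $t$.
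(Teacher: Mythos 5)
Your proposal is correct and follows essentially the same route as the paper's own proof: the paper likewise bounds the gradient $\big(\mu(x_{t,a_t}^T\hat{\theta}_{t-1})-r_t\big)x_{t,a_t}$ by $2\RB\CB$ using the boundedness of rewards and contexts, and then invokes the $l_2$-ball mechanism (Lemma~\ref{lem:l2}) together with post-processing (Lemma~\ref{lem:post}). Your write-up is simply a more detailed version of the same two-step argument, with the added (correct) observations that $\hat{\theta}_{t-1}$ enters only as server-supplied public information and that the server update is pure post-processing.
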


\subsection{Regret Analysis}
\label{sec:single-regret}
To derive the regret bound of our framework, we need the following assumptions on the marginal distribution $P_X$ of the stochastic contexts $\{x_{t,a}\}_{a\in [K]}$.

\begin{assumption}
	\label{singleeigenvalue} 
	There exists some $\kappa_u>0$ such that $\lambda_{\max}(\Sigma_a)\leq \frac{\kappa_u}{d}$ where $\Sigma_a$ is the covariance matrix of $P_X$ and $\lambda_{\max}(\Sigma_a)$ is the largest eigenvalues of $\Sigma_a$.
\end{assumption}

\begin{assumption} 
	\label{as:pstar}
	 For every $\lVert u\rVert_2 = 1$, denote $a^* = \argmax_{a\in [K]} x_{t,a}^Tu$, there exist some $\kappa_l>0, p_*>0$ such that $P_{u}((x^T v)^2 > \kappa_l/d) \geq p_*$ holds for any $u\in S^{d-1}_1$, where $P_u(\cdot)$ is the distribution of $x_{t,a^{*}}$. 
\end{assumption}

Similar assumptions are common in the analysis of single-parameter contextual bandits, e.g. \cite{ding2021efficient,Han2020}, and our conditions contain a wide range of distributions, including sub-gaussian with bounded density. See appendix~\ref{Randomness} for discussion. Now we can show that our framework indeed achieves optimal regret bound.

\begin{theorem}
	\label{ub-single-worst}
	Under Assumptions~\ref{singleeigenvalue} and \ref{as:pstar}, with the choice of $\tilde{c} = 2\pfactorc (4\sqrt{d} + 2\log (2T/\alpha))$ in \eqref{eq:OSL-estimator}, Algorithm \ref{LDP-Single} with OLS mechanism $\psi^{OLS}_{t}$ and estimator $\varphi_{t}^{OLS}$ achieve the following regret with probability at least $1-\alpha$ for some constant $C$,
	\begin{align*}
		\text{Reg}(T)
		& \le C \sqrt{T} (\CB (\pfactorc +\sigma_{\epsilon}) d \frac{\sqrt{(d+\log(T/\alpha))\log (KT/\alpha)}}{\kappa_lp_{*}} + o(1))
	\end{align*}

	Under Assumptions~\ref{assump:link}--\ref{as:pstar}, with the choice of $\eta_t = c' d/(\kappa_l \zeta p_{*}t)$ for some $c'>1$ in \eqref{eq:SGD-estimator}, Algorithm~\ref{LDP-Single} with SGD mechanism $\psi^{SGD}_{t}$ and estimator $\varphi_{t}^{SGD}$ achieves the following regret with probability at least $1-\alpha$ for some constant $C$, \begin{align*}
		\text{Reg}(T)
		\le C \sqrt{T}(\frac{r_{\varepsilon, d}\sqrt{d}}{\zeta \kappa_l p_{*}} \log\log(T/\alpha) + o(1)).
	\end{align*}
	with $o(1)$ means some factor that turns to $0$ as $T\to\infty$.
\end{theorem}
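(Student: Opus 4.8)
Both claims reduce to a high-probability bound on the estimation error $\Delta_t:=\hat\theta_t-\theta^\star$, which is then summed over rounds. For the reduction, observe that the greedy rule picks $a_t=\argmax_a x_{t,a}^T\hat\theta_{t-1}$; since $\mu$ is increasing ($\mu'\ge\GB>0$ by Assumption~\ref{assump:link}) this also maximizes $\mu(x_{t,a}^T\hat\theta_{t-1})$, so $(x_{t,a_t^*}-x_{t,a_t})^T\hat\theta_{t-1}\le0$. Using that $\mu$ is Lipschitz with constant $L$ (with $L=1$ in the linear case) and inserting $\hat\theta_{t-1}$,
\[
  \mu(x_{t,a_t^*}^T\theta^\star)-\mu(x_{t,a_t}^T\theta^\star)
  \le L\,(x_{t,a_t^*}-x_{t,a_t})^T(\theta^\star-\hat\theta_{t-1})
  \le 2L\CB\|\Delta_{t-1}\|_2 .
\]
Summing a bound of the form $\|\Delta_{t-1}\|_2\lesssim c\,t^{-1/2}$ over $t$ gives $\text{Reg}(T)\lesssim c\sqrt T$ via $\sum_{t\le T}t^{-1/2}\le 2\sqrt T$, with a short warm-up window (over which the estimation bound is not yet valid) adding only $o(\sqrt T)$. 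To recover the exact dimension/log dependence in the statement I would not stop at Cauchy--Schwarz but instead control the martingale $\sum_t(x_{t,a_t^*}-x_{t,a_t})^T\Delta_{t-1}$ directly: by Assumption~\ref{singleeigenvalue} each term has conditional second moment $O((\kappa_u/d)\|\Delta_{t-1}\|_2^2)$, so a Freedman/Bernstein bound trades one factor $\sqrt d$ for the $\sqrt{\log(KT/\alpha)}$ appearing in the OLS bound and produces the $\sqrt d$ (rather than $d$) scaling in the SGD bound.

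\textbf{OLS estimator.} Write $V_t=\sum_{i\le t}x_{i,a_i}x_{i,a_i}^T$, $\bar W_t=\sum_{i\le t}W_i$ and $\bar V_t=V_t+\bar W_t+\tilde c\sqrt t\,I$. In the linear model $r_i=x_{i,a_i}^T\theta^\star+\epsilon_i$, so a direct computation gives
\[
  \Delta_t=\bar V_t^{-1}\Big(\textstyle\sum_{i\le t}\epsilon_i x_{i,a_i}+\sum_{i\le t}\xi_i-\bar W_t\theta^\star-\tilde c\sqrt t\,\theta^\star\Big).
\]
The crux is a lower bound on $\lambda_{\min}(\bar V_t)$. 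Conditioning on $\mathcal F_{i-1}$, the arm $a_i$ is greedy for the fixed direction $\hat\theta_{i-1}$, so applying Assumption~\ref{as:pstar} with $u=\hat\theta_{i-1}/\|\hat\theta_{i-1}\|_2$ yields $\E[(x_{i,a_i}^Tv)^2\mid\mathcal F_{i-1}]\ge(\kappa_l/d)p_*$ for every fixed unit $v$. A Freedman-type martingale bound for $\sum_i(x_{i,a_i}^Tv)^2$ together with a $1/4$-net union bound over $S^{d-1}_1$ then gives $\lambda_{\min}(V_t)\gtrsim\kappa_l p_* t/d$ once $t$ exceeds a threshold $\tau_0$ polynomial in $d$ and $(\kappa_l p_*)^{-1}$; because the assumption holds uniformly over $u$, this decouples the circular dependence between $\hat\theta_{i-1}$ and $V_t$. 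The privacy matrix $\bar W_t$ is symmetric Gaussian, and the choice $\tilde c=2\pfactorc(4\sqrt d+2\log(2T/\alpha))$ is tuned so that $\bar W_t+\tilde c\sqrt t\,I\succeq0$ with high probability, whence $\lambda_{\min}(\bar V_t)\ge\lambda_{\min}(V_t)$ on the good event. The numerator is bounded termwise: $\sum_i\epsilon_i x_{i,a_i}$ is a sub-Gaussian vector martingale, $\sum_i\xi_i\sim\mathcal N(0,\CB^2\RB^2\pfactorc^2 t\,I)$, and $\|\bar W_t\theta^\star\|_2+\tilde c\sqrt t$ is controlled as above. Dividing and invoking the refined conversion of the first paragraph gives the first displayed regret bound.

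\textbf{SGD estimator.} Here the key object is a one-step recursion for $\|\Delta_t\|_2^2$. Because $\Psi_{\varepsilon,R}$ is unbiased (Lemma~\ref{lem:l2}) and $\E[\epsilon_t\mid\mathcal F_t]=0$, the conditional mean of the reported update is $\bar g_t:=\E[\psi_t^{SGD}\mid\mathcal F_{t-1}]=\E_{X_t}[(\mu(x_{t,a_t}^T\hat\theta_{t-1})-\mu(x_{t,a_t}^T\theta^\star))x_{t,a_t}\mid\mathcal F_{t-1}]$. A mean-value expansion with $\mu'\ge\GB$ (valid since the iterates can be kept in the unit ball by a non-expansive projection, so the argument stays in $[-\CB,\CB]$) gives $\langle\bar g_t,\Delta_{t-1}\rangle\ge\GB\,\E_{X_t}[(x_{t,a_t}^T\Delta_{t-1})^2\mid\mathcal F_{t-1}]$, and Assumption~\ref{as:pstar} (again applied in the selection direction $\hat\theta_{t-1}$, with $v=\Delta_{t-1}/\|\Delta_{t-1}\|_2$) lower-bounds this by $\rho\|\Delta_{t-1}\|_2^2$, where $\rho:=\GB\kappa_l p_*/d$. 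Since the reported vector lies on the sphere of radius $r_{\varepsilon,d}$, $\|\psi_t^{SGD}\|_2^2=r_{\varepsilon,d}^2$ deterministically, so the standard expansion yields $\E[\|\Delta_t\|_2^2\mid\mathcal F_{t-1}]\le(1-2\eta_t\rho)\|\Delta_{t-1}\|_2^2+\eta_t^2 r_{\varepsilon,d}^2$. With $\eta_t=c'/(\rho t)$ (matching $\eta_t=c'd/(\kappa_l\GB p_* t)$) and $c'>1/2$, unrolling gives $\E\|\Delta_t\|_2^2\lesssim r_{\varepsilon,d}^2/(\rho^2 t)$. Upgrading this mean-square rate to a high-probability bound on $\sum_t\|\Delta_t\|_2$ --- via a Bernstein/martingale argument over dyadic blocks $[2^k,2^{k+1})$ --- is where the $\log\log(T/\alpha)$ factor enters, and combining with the refined conversion delivers the second regret bound.

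\textbf{Main obstacle.} The conceptual heart, shared by both parts, is the exploration guarantee: showing that greedy selection under stochastic contexts still explores every direction, i.e.\ the uniform-in-$u$ minimum-eigenvalue bound (OLS) and the strong-convexity constant $\rho$ (SGD) extracted from Assumption~\ref{as:pstar}, which is what breaks the circular dependence between the current estimate and the design it generates. The technical heart is absorbing the privacy noise without spoiling these bounds --- dominating the Gaussian matrix $\bar W_t$ by the $\tilde c\sqrt t\,I$ regularizer in the OLS case, and handling the $r_{\varepsilon,d}$-scale gradient noise in the SGD case --- together with the high-probability concentration of the SGD trajectory that yields the $\log\log$ term.
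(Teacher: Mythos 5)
Your proposal is correct and follows essentially the same route as the paper: the same greedy reduction of the instantaneous regret to $\|\hat\theta_{t-1}-\theta^\star\|$ via Assumption~\ref{singleeigenvalue}, the same OLS error decomposition with a minimum-eigenvalue bound obtained from Assumption~\ref{as:pstar} applied conditionally on the greedy direction plus a $1/4$-net and martingale concentration, the same domination of the Gaussian privacy matrix by the $\tilde c\sqrt t\,I$ shift (Weyl), and the same SGD recursion with $\rho=\zeta\kappa_l p_*/d$ whose high-probability $\log\log$ bound the paper likewise delegates to the Rakhlin--Shamir--Sridharan argument, which is exactly your dyadic-block Bernstein step. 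The only cosmetic differences are that the paper converts estimation error to regret with a per-round sub-Gaussian maximal inequality over the $K$ arms union-bounded over $t\le T$ (rather than Freedman on the summed process, where your stated conditional second moment should also carry a $\log K$ from the max over arms), and proves the eigenvalue lemma with Azuma--Hoeffding on the indicators $\{(x_{i,a_i}^Tv)^2>\kappa_l/d\}$ rather than Bernstein on the squares.
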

In the algorithm we shift the sample covariance matrix by $\tilde{c}\sqrt{t}$ to ensure the positive-definiteness of the noise matrix as in \cite{Shariff2018}. Such a shift guarantee the estimation accuracy in the early stage. 
Note that the optimal worst-case regret bound in the non-privacy case is $\tilde{O}(T^{1/2})$, our results show that we can achieve the same regret bound as in the non-privacy case in terms of time $T$. In fact, we can show a $\Omega(\sqrt{T}/\varepsilon)$ lower bound in this setting even when $K=2$, which verified our optimal dependence on both $T$ and $\varepsilon$. 
\begin{theorem}\label{thm:single-lower minimax bound} For $\theta\in\R^d$ and an algorithm $\pi$, we denote $\E[\text{Reg}_{\pi}(T;\theta)]$ the expectation regret of $\pi$ when the underlying parameter is $\theta$. When $K = 2$ and $x_{t,a}\sim \mathcal{N}(0,I_d/d)$ are independent over $a\in [K]$, we have for any possible $\varepsilon$-LDP algorithm \textcolor{black}{$\pi$}, $
		  \sup_{\theta^{\star}:\lVert \theta^{\star}\rVert_2\leq 1 }\E[\text{Reg}_{\textcolor{black}{\pi}}(T;\theta^{\star})] = \Omega(\sqrt{T}/\varepsilon).
		$
\end{theorem}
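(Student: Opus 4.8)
The plan is to prove the lower bound by Le Cam's two-point method combined with the strong data-processing inequality for $\varepsilon$-local privacy. Since the claimed rate $\Omega(\sqrt{T}/\varepsilon)$ carries no explicit dimension dependence, I would first collapse the problem to an essentially one-dimensional instance by aligning the parameter with a single coordinate: take $\theta_{\pm}=\pm\Delta e_1$ for a gap $\Delta\in(0,1]$ to be tuned, so that only the first coordinate $x_{t,a}^{(1)}\sim\mathcal N(0,1/d)$ of each context matters and the mean reward of arm $a$ is $\pm\Delta x_{t,a}^{(1)}$. Under $\theta_+$ the optimal arm in round $t$ is the one with the \emph{larger} first coordinate, while under $\theta_-$ it is the one with the \emph{smaller}; the two hypotheses are thus maximally opposed, and whenever the algorithm pulls the wrong arm its instantaneous regret equals $\Delta\,|x_{t,1}^{(1)}-x_{t,2}^{(1)}|$, whose conditional mean is of order $\Delta/\sqrt{d}$.

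Next I would reduce regret to testing. Writing $N^{<}$ (resp.\ $N^{>}$) for the number of rounds in which the algorithm pulls the smaller- (resp.\ larger-) coordinate arm, one has $\E_{\theta_+}[\mathrm{Reg}]+\E_{\theta_-}[\mathrm{Reg}]\gtrsim (\Delta/\sqrt{d})\,(\E_{\theta_+}[N^{<}]+\E_{\theta_-}[N^{>}])$. The per-round arm choice is itself a test between the two hypotheses, so Le Cam gives $\Prob_{\theta_+}(a_t=\text{smaller})+\Prob_{\theta_-}(a_t=\text{larger})\ge 1-\lVert P_+-P_-\rVert_{TV}$, where $P_{\pm}$ is the law of the full privatized transcript $(Z_1,\dots,Z_T)$ under $\theta_{\pm}$ (the context sequence coupled across hypotheses, since it does not depend on $\theta$). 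Summing over $t$ yields $\E_{\theta_+}[\mathrm{Reg}]+\E_{\theta_-}[\mathrm{Reg}]\gtrsim (\Delta/\sqrt{d})\,T\,(1-\lVert P_+-P_-\rVert_{TV})$.

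The crux is to bound the transcript total variation using privacy. By Pinsker together with the chain rule for KL over the sequentially generated messages, $\lVert P_+-P_-\rVert_{TV}^2\le \tfrac12\sum_{t}\E_{Z_{1:t-1}}\!\big[D_{KL}\big(\mathrm{law}(Z_t\mid Z_{1:t-1})_+\,\big\|\,\mathrm{law}(Z_t\mid Z_{1:t-1})_-\big)\big]$. For each conditional term I would invoke the $\varepsilon$-LDP strong data-processing inequality of Duchi--Jordan--Wainwright, $D_{KL}(M(\mu)\|M(\nu))\le 4(e^\varepsilon-1)^2\lVert\mu-\nu\rVert_{TV}^2$, applied to the conditional laws $\mu,\nu$ of the raw input $(x_{t,1},x_{t,2},r_t)$ under the two hypotheses. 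Because the contexts and the algorithm-determined arm choice have identical conditional laws, the raw-input total variation is driven solely by the reward, i.e.\ by $\lVert\mathcal N(c\Delta,\sigma_\epsilon^2)-\mathcal N(-c\Delta,\sigma_\epsilon^2)\rVert_{TV}$ with $c=x_{t,a_t}^{(1)}$; this is of order $|c|\Delta/\sigma_\epsilon$, and taking expectations with $\E[c^2]=O(1/d)$ gives a per-round KL bound of order $\varepsilon^2\Delta^2/(d\,\sigma_\epsilon^2)$. Summing over the horizon yields $\lVert P_+-P_-\rVert_{TV}\lesssim \sqrt{T}\,\varepsilon\,\Delta/(\sqrt{d}\,\sigma_\epsilon)$.

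Finally I would optimize $\Delta$: choosing $\Delta=\Theta(\sqrt{d}\,\sigma_\epsilon/(\sqrt{T}\,\varepsilon))$ forces $\lVert P_+-P_-\rVert_{TV}\le 1/2$ while keeping $\Delta\le 1$ for $T$ large, and substituting back gives $\E_{\theta_+}[\mathrm{Reg}]+\E_{\theta_-}[\mathrm{Reg}]\gtrsim (\Delta/\sqrt{d})\,T=\Omega(\sigma_\epsilon\sqrt{T}/\varepsilon)$; taking the larger of the two regrets establishes the claim. I expect the main obstacle to be the third step: one must justify the chain-rule decomposition and apply the LDP contraction in the \emph{sequentially interactive} protocol, where the mechanism at round $t$ depends on the past transcript through $\hat{\theta}_{t-1}$, and must verify carefully that, conditionally on the past, the context and arm-selection laws genuinely coincide across the two hypotheses so that only the reward contributes to the divergence. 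It is precisely this $\varepsilon^2$ contraction, absent from the non-private analysis, that upgrades the classical $\Omega(\sqrt{T})$ barrier into $\Omega(\sqrt{T}/\varepsilon)$.
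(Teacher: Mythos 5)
Your approach is sound and proves the stated bound, but it is a genuinely different route from the paper's. Both arguments hinge on the same privacy ingredient --- the KL contraction for $\varepsilon$-LDP channels of Duchi--Jordan--Wainwright, applied round by round through the chain rule over the sequentially interactive transcript --- but they differ in the reduction to testing. You use a classical two-point Le Cam construction $\theta_{\pm}=\pm\Delta e_1$, in which the direction of the parameter is revealed and only its sign must be detected. The paper instead runs a Bayesian argument (following Han et al.): a uniform prior on the sphere of radius $\Delta$, tilted measures $Q_1,Q_2$ weighted by $((x_{t,1}-x_{t,2})^T\theta)_{\pm}$, and a reflection-symmetry identity (Lemma~8 of that reference) that rewrites the testing error as a KL divergence between transcript laws at $\theta$ and its reflection $\theta-2(u_t^T\theta)u_t$, before applying the same contraction inequality. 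The payoff of the paper's heavier machinery is dimension dependence: it yields $\Omega(\sqrt{Td}/(e^{\varepsilon}-1))$, whereas a two-point argument with a known direction can only give $\Omega(\sqrt{T}/\varepsilon)$, since the $d$-dimensional ambiguity of the parameter direction is what buys the extra $\sqrt{d}$. Because the theorem as stated only claims $\Omega(\sqrt{T}/\varepsilon)$, your weaker conclusion suffices.

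One step of your write-up needs repair, though the repair is already implicit in your own plan. The intermediate inequality $\E_{\theta_+}[\text{Reg}]+\E_{\theta_-}[\text{Reg}]\gtrsim (\Delta/\sqrt{d})\,(\E_{\theta_+}[N^{<}]+\E_{\theta_-}[N^{>}])$ is false in general: the learner observes $(x_{t,1},x_{t,2})$ before acting, so it can arrange to err only on rounds where the gap $\lvert x_{t,1}^{(1)}-x_{t,2}^{(1)}\rvert$ is tiny, making the regret per mistake arbitrarily smaller than $\Delta/\sqrt{d}$; expected regret therefore need not dominate $\Delta/\sqrt{d}$ times the expected number of mistakes. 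The fix is to never decouple the gap from the error event: for each round $t$ and each fixed context pair, the arm choice is a (randomized) test based only on $Z_{1:t-1}$, whose law does not depend on the fresh contexts, so Le Cam gives $\Prob_{\theta_+}(a_t=\text{smaller}\mid x_t)+\Prob_{\theta_-}(a_t=\text{larger}\mid x_t)\ge 1-\lVert P_+-P_-\rVert_{TV}$ \emph{uniformly} in $x_t$; multiplying by the nonnegative weight $\Delta\lvert x_{t,1}^{(1)}-x_{t,2}^{(1)}\rvert$ and integrating over $x_t$ yields directly $\E_{\theta_+}[\text{Reg}]+\E_{\theta_-}[\text{Reg}]\gtrsim (\Delta/\sqrt{d})\,T\,(1-\lVert P_+-P_-\rVert_{TV})$, which is the inequality you actually use downstream. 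With that correction, the rest of your argument --- chain rule plus Pinsker, the verification that context and arm-selection laws coincide conditionally on the past so that only the reward contributes to the raw-input total variation, the per-round KL bound of order $(e^{\varepsilon}-1)^2\Delta^2/d$, and the choice $\Delta\asymp\sqrt{d}/(\varepsilon\sqrt{T})$ (valid once $T\gtrsim d/\varepsilon^2$, and using $e^{\varepsilon}-1\le C\varepsilon$ for $\varepsilon\le 1$, exactly as the paper does in its last line) --- goes through as planned.
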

Given the best known $O(T^{3/4})$ regret bound of  adversarial contextual LDP bandit in \cite{Zheng2020}, our $O(\sqrt{T}/\varepsilon)$ result points out a possible gap between stochastic contextual bandits and adversarial contextual bandits under the LDP constraint. 
The bounds given above are problem-independent, which do not dependent on the underlying parameters. If we consider an additional assumption that there is a gap between the optimal arm and the rest, which is usually the case when the number of contexts is small, then we can obtain sharper bounds than the problem-independent ones in Theorems~\ref{ub-single-worst}.  
\begin{assumption}[$(\gamma, \beta)$-margin condition]
	\label{margin-condition-single}
	We say $P_X$ satisfies the $(\gamma,\beta)$-strong margin condition with $\gamma>0,0<\beta\leq 1$, if for $\triangle_t: = \mu(x_{t,a^*_{t}}^T\theta^{\star})-\max_{j\neq a_t^*}\mu(x_{t,j}^T\theta^{\star})$ and $h\in [0,b]$ with some positive constant $b$,  we have $ \mathbb{P} [\triangle_t\leq h]\leq \gamma h^\beta$.
\end{assumption}

\begin{theorem}
	\label{thm:opt-single-margin}
	Under Assumptions~\ref{singleeigenvalue}--\ref{margin-condition-single} with the same choice of $\tilde{c}$ in Theorems~\ref{ub-single-worst}, 
	Algorithm \ref{LDP-Single} with OLS mechanism $\psi^{OLS}_{t}$ and estimator $\varphi_{t}^{OLS}$ achieves the following regret with probability at least $1-\alpha$ for some constant $C$,
\begin{align*}
	\text{Reg}(T)\leq C\cdot \left\{
		\begin{array}{ll}
		{\gamma \CB}\log T[ (\dfrac{\CB d ( \CB \sigma_\epsilon+\pfactorc )\sqrt{d+\log(T/\alpha)} }{\kappa_l p_*})^2+o_{\beta,\gamma}(1)], &\beta = 1,\\
		\dfrac{\gamma \CB}{1-\beta}T^{\frac{1-\beta}{2}}[ ( \dfrac{\CB d(\CB \sigma_\epsilon+\pfactorc )\sqrt{d+\log(T/\alpha)} }{\kappa_l p_*})^{1+\beta}  +o_{\beta,\gamma}(1)], &0\le \beta <1.
	\end{array}
	\right.\end{align*}

	Under Assumptions~\ref{assump:link}--\ref{margin-condition-single} and  with the same choice of $\eta_t$ in Theorems~\ref{ub-single-worst}, 
	Algorithm \ref{LDP-Single} with SGD mechanism $\psi^{SGD}_{t}$ and estimator $\varphi_{t}^{SGD}$ achieves the following regret with probability at least $1-\alpha$ for some constant $C$,
	\begin{align*}
	\text{Reg}(T)\leq C\cdot 
	\left\{
		\begin{array}{ll}
			{\gamma L\CB} \log T [(\dfrac{r_{\varepsilon,d} L d\CB\sqrt{\log(\log(T)/\alpha)} }{\zeta\kappa_l p_*})^2+o_{\beta,\gamma}(1)], &\beta = 1,\\
			\dfrac{\gamma L\CB}{1-\beta}T^{\frac{1-\beta}{2}} [(\dfrac{r_{\varepsilon,d}Ld\CB\sqrt{\log(\log (T)/\alpha)} }{\zeta\kappa_l p_*})^{1+\beta} +o_{\beta,\gamma}(1)], &0\le \beta <1.
		\end{array}
	\right. 
\end{align*}
with $o_{ \beta, \gamma}(1)$ being a factor depending on $\beta, \gamma$ that converges to $0$ as $T\to\infty$.
\end{theorem}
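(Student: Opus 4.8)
The plan is to reduce the cumulative regret to the uniform estimation error of $\hat\theta_{t-1}$ and then let the margin condition upgrade the $\sqrt T$ worst-case rate to the stated rates. Let $L$ be the Lipschitz constant of $\mu$ and $e_{t-1}=\lVert\hat\theta_{t-1}-\theta^\star\rVert_2$; since $\mu'\ge\zeta>0$ the optimal arm $a_t^*$ maximizes both $\mu(x_{t,a}^T\theta^\star)$ and the linear score $x_{t,a}^T\theta^\star$. First I would prove a per-round reduction: because the greedy rule gives $a_t=\argmax_a x_{t,a}^T\hat\theta_{t-1}$, we have $(x_{t,a_t^*}-x_{t,a_t})^T\hat\theta_{t-1}\le0$, so $(x_{t,a_t^*}-x_{t,a_t})^T\theta^\star\le\lVert x_{t,a_t^*}-x_{t,a_t}\rVert_2\,e_{t-1}\le 2C_B e_{t-1}$. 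The Lipschitz property then yields the instantaneous regret bound $\mathrm{reg}_t\le 2LC_B e_{t-1}$ on any round with $a_t\ne a_t^*$, and in particular $\triangle_t\le\mathrm{reg}_t\le 2LC_B e_{t-1}$ on such a round. Since $\mathrm{reg}_t=0$ when $a_t=a_t^*$, this gives the pointwise bound $\mathrm{reg}_t\le 2LC_B e_{t-1}\,\mathbf 1[\triangle_t\le 2LC_B e_{t-1}]$.

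Next I would import the uniform-in-$t$ estimation bound underlying Theorem~\ref{ub-single-worst}: under the stated choices of $\tilde c$ (OLS) and $\eta_t$ (SGD), Assumptions~\ref{singleeigenvalue} and~\ref{as:pstar} supply a deterministic sequence $b_{t-1}=\Theta(P\,t^{-1/2})$ such that $e_{t-1}\le b_{t-1}$ for all $t\le T$ with probability at least $1-\alpha$, where $P$ is (up to absolute constants and the $L,C_B,\zeta$ bookkeeping) the bracketed prefactor displayed for each estimator. On this event both the factor $2LC_B e_{t-1}$ and the indicator are monotone in $e_{t-1}$, so $\mathrm{reg}_t\le 2LC_B b_{t-1}\,\mathbf 1[\triangle_t\le 2LC_B b_{t-1}]$ and hence $\mathrm{Reg}(T)\le W:=\sum_{t=1}^T 2LC_B b_{t-1}\,\mathbf 1[\triangle_t\le 2LC_B b_{t-1}]$.

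The key point is that $\triangle_t$ depends only on the fresh i.i.d.\ context $x_t$ while $b_{t-1}$ is deterministic, so once $2LC_B b_{t-1}\le b$ Assumption~\ref{margin-condition-single} gives $\Prob[\triangle_t\le 2LC_B b_{t-1}]\le\gamma(2LC_B b_{t-1})^\beta$. Then $\E[W]\le\gamma\sum_t(2LC_B b_{t-1})^{1+\beta}=\Theta\big(\gamma P^{1+\beta}\sum_t t^{-(1+\beta)/2}\big)$, and $\sum_{t\le T}t^{-(1+\beta)/2}=\Theta(\log T)$ when $\beta=1$ and $\Theta(T^{(1-\beta)/2}/(1-\beta))$ when $0\le\beta<1$, which reproduces the two displayed cases. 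The at most polylogarithmically many early rounds with $2LC_B b_{t-1}>b$ contribute only a lower-order term absorbed into $o_{\beta,\gamma}(1)$.

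Finally, to obtain a high-probability (not merely in-expectation) bound I would note that the summands of $W$ are independent across $t$ and bounded by $2LC_B b_{t-1}\to0$, so a Bernstein inequality gives $W\le\E[W]+(\text{deviation})$ with probability $\ge1-\alpha$, the deviation being of strictly smaller order than $\E[W]$ and thus again absorbed into $o_{\beta,\gamma}(1)$; a union bound with the estimation-error event completes the argument. I expect the main obstacle to be the imported second step rather than anything new here: establishing the uniform rate $e_{t-1}\lesssim P\,t^{-1/2}$, especially for SGD, where the iterates follow a privatized but unbiased gradient (Lemma~\ref{lem:l2}) evaluated along the adaptively, greedily chosen arms, so the effective strong-convexity modulus $\zeta\kappa_l p_*/d$ must be extracted from Assumptions~\ref{assump:link},~\ref{singleeigenvalue} and~\ref{as:pstar}. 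The genuinely new difficulty specific to this theorem is the coupling between the randomness of the estimation event and that of the margin indicator, which the domination by the deterministic $b_{t-1}$ in the second step resolves cleanly.
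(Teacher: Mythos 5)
Your proposal is correct and takes essentially the same route as the paper's proof: the paper likewise combines the uniform estimation bounds of Lemmas~\ref{est-ols} and~\ref{est-sgd} (proved for Theorem~\ref{ub-single-worst}) with the greedy/Lipschitz reduction to the bound $2L\CB U_0 t^{-1/2}\,\bm 1\{\triangle_t\le 2L\CB U_0 t^{-1/2}\}$ per round, applies the margin condition to the expectation of these independent indicators, and obtains the high-probability statement by Hoeffding's inequality plus a union bound with the estimation event. Your substitution of Bernstein for Hoeffding and your explicit remark that $\triangle_t$ is independent of the deterministic rate $b_{t-1}$ are only cosmetic differences from the paper's argument.
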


\section{Multi-parameter Setting}
\label{multisection}

In this section, we present our LDP framework for the multiple parameter setting. Compared with the single parameter setting, this framework introduces three non-trivial components to match classical regret bounds while still guarantee LDP: warm up, synthetic update and elimination.

\begin{algorithm}
		\KwIn{Time horizon $T$; Warm up period length $s_0$; Privacy Level $\varepsilon,\delta$.}
		\textbf{Initialization:} Setting $ \hat{\theta}_{0,i} = 0,i\in [K].$ \\
		\For{ $t\leftarrow 1$ to  $Ks_0$}{
			\textbf{User side:}\\
				\quad Receiving $\hat{\theta}_{t-1, 1:K}$ from the server. \\
				\quad Pulling arm $a_t\coloneqq (t\text{ mod }K)+1 $ and receive $r_t$. \\
				\quad Generate and update $Z_{t,i} =\bm 1\{a_t = i\} \psi_t(X_t,r_t;\hat{\theta}_{t-1,i}),i\in [K]$ to the server. \\
			\textbf{Server side:} \\
			  \quad Receive the update $Z_{t,1:K}$ from the user.   \\
				\quad Re-estimate parameters via $\hat{\theta}_{t,i}\coloneqq\varphi_{t}(Z_{1,i},\dots,Z_{t,i}),\forall i\in [K]. $
		}
		\For{ $t\leftarrow Ks_0+1$ to  $T$}{
			\textbf{User side:}\\
				\quad Receive  $\hat{\theta}_{t-1, 1:K}$ from the server.\\
				\quad Determine a subset $\hat{K}_t$ of $[K]$ by setting 
				\begin{equation}
					\hat{K}_t\coloneqq \{a\in [K]:  X_t^T\hat{\theta}_{Ks_0,a}> \max_{a\in [K]}  X_t^T\hat{\theta}_{Ks_0, a}-\dfrac{h}{2} \}
					\label{Elimination}
				\end{equation}\\
				\quad Pulling arm $a_t\coloneqq \text{argmax}_{a\in \hat{K}_t}  \mu(X_t^T\hat{\theta}_{t-1, a})$ and receive $r_{t}.$	\\
				\quad Generating information for all arms $\{Z_{i,t}\}_{i\in [K]}$ by setting 
				\begin{align*}
					Z_{i,t} = \left\{
						\begin{array}{ll}
							\psi_t(X_t,r_t; \hat{\theta}_{t-1,i}) &\text{ if } a_t = i,\\
							\psi_t(\bm 0,0; \hat{\theta}_{t-1, i}) &\text{ otherwise.}
						\end{array}
						\right.
				\end{align*}\\
			\noindent\textbf{Server side:} \\
			  \quad Receive the update $\{Z_{i,t}\}_{i\in [K]}$  from the user.  \\
				\quad Re-estimate parameters via $$\hat{\theta}_{t,i}\coloneqq\varphi_{t}(Z_{1,i},\dots,Z_{t,i} ).$$
		}
		\caption{LDP Multi-parameter Contextual Bandit}
		\label{algo:LDP-Multi}
\end{algorithm}

\textbf{Warm up.} 
In the warm up stage, all arms are given equal opportunities to be explored for a preliminary estimation of their parameters. 
Such estimation does not aim for the accuracy to select the optimal arm with high probability. Instead, we only need accuracy at the level of ruling out the substantially inferior arms.
Thus, this stage only needs $O(\log T)$ steps. 

Since the actions in this stage are independent of the contexts, there is no need to protect the pulled arm. 
However, we still need to protect the contexts by using a privacy mechanism similar in the single-parameter setting.
 
\textbf{Synthetic update.}
After the warm up, we need to make decisions based on the contexts to achieve vanishing regret.
In order to obtain the privacy guarantee, we introduce our synthetic update mechanism. 
Although in each time only one arm is pulled, we create synthetic data for all unselected arms.
In this way, the server receives synthetic feedback about all arms, regardless of whether it is selected or not, and thus cannot figure out which one is selected.

Another method to provide LDP protection for the selected arm is to ensure the action $a_t$ satisfies LDP. However, the regret will grow linearly, as shown in \cite{Shariff2018}.

\textbf{Elimination.} We use the information obtained during warm up to exclude obviously inferior arms.
Such a method has been applied in \cite{Bastani2017} to guarantee a certain kind of independence of the information in each round.
However, we use this method for a different purpose. The necessity of such an elimination strategy comes from protecting privacy in the multi-parameter setting.
Although we have obtained an estimation to a certain level of accuracy in the warm up stage, our knowledge on un-selected arms will be gradually corrupted by the noise incurred in the synthetic update in each round.
Such corruption will make us fail to distinguish arms that are possibly optimal from the surely sub-optimal ones.
To avoid corruption, we may need to pick the sub-optimal arms frequently but this will result in large regret.
That is why we use the warm up information to eliminate the arms with extremely poor performance as in \eqref{Elimination}.

\subsection{Privacy Guarantee}
\label{sec:multi-privacy}
The OLS/SGD mechanisms and estimators are the same as \eqref{eq:OLS-mechanism}--\eqref{eq:SGD-estimator} in the single-parameter setting. To prevent the server from distinguishing the selected arm from the other $K-1$ arms, a straightforward idea is to use $(\varepsilon/K,\delta/K)$-LDP mechanism for the synthetic update by composition property in lemma~\ref{lem:composition}. However, we can prove that our algorithm can still achieve the same LDP guarantee with a much less stringent privacy mechanism, say $(\varepsilon/2,\delta/2)$-LDP, in Propositions ~\ref{privacy:multi-ols} and \ref{privacy:multi-sgd}. 
\begin{property}
	\label{privacy:multi-ols}
	Algorithm~\ref{algo:LDP-Multi} with the private OLS update mechanism $\psi^{OLS}_{t}$ and estimator $\varphi_{t}^{OLS}$ is $(\varepsilon, \delta)$-LDP.
\end{property}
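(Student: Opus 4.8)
The plan is to verify the local-differential-privacy definition for the map that each single user applies to her private pair $(X_t,r_t)$ before anything leaves her device. Since LDP is a per-user guarantee, there is no composition across rounds, and it suffices to check that the release $\Phi_t:(X_t,r_t)\mapsto \{Z_{i,t}\}_{i\in[K]}$ is $(\varepsilon,\delta)$-LDP for every fixed $t$, conditionally on the public quantities $\hat\theta_{t-1,1:K}$ and $\hat\theta_{Ks_0,1:K}$ that the user receives from the server. Once this holds, the server's re-estimation $\varphi^{OLS}_t$ is a fixed map of the already-privatized messages, so the post-processing property (Lemma~\ref{lem:post}) upgrades the per-user guarantee to the whole algorithm. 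I would treat the two loops of Algorithm~\ref{algo:LDP-Multi} separately, noting that each user participates in exactly one of them.

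In the warm-up loop ($t\le Ks_0$) the pulled arm $a_t=(t\bmod K)+1$ is deterministic and data-independent, so the server already knows which coordinate carries information. Here $Z_{t,i}=\bm{1}\{a_t=i\}\,\psi^{OLS}_t(X_t,r_t;\cdot)$, i.e.\ only the block $i=a_t$ depends on $(X_t,r_t)$ while the remaining blocks are identically zero. Hence $\Phi_t$ is, up to the fixed embedding into those zero coordinates, exactly the single-arm Gaussian mechanism of \eqref{eq:OLS-mechanism}, and the argument behind Proposition~\ref{privacy:single-OLS} (apply Lemma~\ref{lem:gaussian} to $X_tX_t^T$ and $r_tX_t$, then post-process) shows this block is private; the data-independent zeros contribute nothing to the privacy loss.

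The main loop ($t>Ks_0$) is the crux, because the synthetic-update rule places the real statistics $(X_tX_t^T+W_{a_t,t},\,r_tX_t+\xi_{a_t,t})$ in block $a_t$ while every other block receives pure, data-independent noise $(W_{i,t},\xi_{i,t})$, and the location $a_t=\argmax_{a\in\hat K_t}\mu(X_t^T\hat\theta_{t-1,a})$ is itself a function of $X_t$ through \eqref{Elimination}. The naive route composes the $K$ per-arm releases and therefore needs each to be $(\varepsilon/K,\delta/K)$-LDP; I would instead stack all blocks into a single signal vector $\Theta(X_t,r_t)$ (the $M$- and $u$-parts in coordinate block $a_t$, zeros elsewhere) and apply the Gaussian mechanism of Lemma~\ref{lem:gaussian} once. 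The decisive observation is that for any two private inputs $(X,r)$ and $(X',r')$, the vectors $\Theta(X,r)$ and $\Theta(X',r')$ differ in at most the two blocks $a_t(X)$ and $a_t(X')$: when the selected arms coincide only one block differs, and when they differ each affected block is compared against the zero signal of the synthetic update. Consequently the $\ell_2$-sensitivity of $\Theta$ is bounded by $\sqrt2$ times a single block's magnitude and does not grow with $K$, so calibrating each per-arm release to the $(\varepsilon/2,\delta/2)$ level already makes the stacked mechanism $(\varepsilon,\delta)$-LDP. This is exactly where the factor $2$ (the two possibly-differing blocks), rather than $K$, enters.

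I expect the main obstacle to be the rigorous treatment of the data-dependence of $a_t$ and $\hat K_t$: one must argue that, conditionally on the public previous estimates, the private input enters the joint law of $\{Z_{i,t}\}_{i\in[K]}$ \emph{only} through $\Theta$, so that the pattern of which block is non-synthetic is never revealed beyond what the noisy coordinates already leak. This is what lets me replace a $K$-fold composition by a single sensitivity computation, and it is cleaner to phrase directly via the $\ell_2$-sensitivity of $\Theta$ than block-by-block through Lemma~\ref{lem:composition}. The remaining steps are routine: bounding $\lVert X_tX_t^T\rVert_F\le \CB^2$ and $\lVert r_tX_t\rVert_2\le \RB\CB$, matching these to the noise scales $4\CB^2\pfactorc^2$ and $\CB^2\RB^2\pfactorc^2$ (splitting the half-budget across the $M$- and $u$-coordinates by Lemma~\ref{lem:composition}), and finally invoking post-processing (Lemma~\ref{lem:post}) for $\varphi^{OLS}_t$.
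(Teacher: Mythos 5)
Your proposal is correct, and it rests on exactly the structural fact the paper's proof exploits --- for two different private inputs, the released $K$-tuple can differ in law only in the (at most two) blocks indexed by the two selected arms, because every unselected arm receives a synthetic update $\psi_t(\bm 0,0)$ whose distribution is data-independent --- but you formalize it through different machinery. The paper writes out the likelihood ratio of the full $K$-block output, cancels the $K-2$ identical synthetic factors, and then treats the two surviving blocks as a single composed mechanism $\tilde\psi(v_1,v_2)=(\psi_t(v_1),\psi_t(v_2))$ evaluated at $(x,0)$ versus $(0,x')$; Lemma~\ref{lem:composition} applied to two $(\varepsilon/2,\delta/2)$-LDP blocks then yields $(\varepsilon,\delta)$. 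You instead stack all $K$ signal blocks into one vector-valued map $\Theta$ and invoke Lemma~\ref{lem:gaussian} a single time, bounding the global $\ell_2$-sensitivity of $\Theta$. Both routes are valid, and the trade-off is instructive: the paper's composition argument never uses the Gaussian form of the noise --- only that each per-block release is $(\varepsilon/2,\delta/2)$-LDP and that $\psi_t(\bm 0,0)$ is input-independent --- which is why the identical argument transfers verbatim to the $\ell_2$-ball mechanism in Proposition~\ref{privacy:multi-sgd}, whereas your sensitivity computation is tied to additive Gaussian noise and would have to be redone for the SGD case. In exchange, for the OLS case your route is quantitatively sharper: the stacked sensitivity does not exceed (essentially) a single block's sensitivity, so the synthetic-update scheme costs nothing beyond one block's privacy budget, rather than the factor $2$ paid by composition. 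One small correction: your claim that the stacked sensitivity is at most ``$\sqrt{2}$ times a single block's magnitude'' covers only the case of distinct selected arms; when the two inputs select the \emph{same} arm, the difference sits in one block and can reach the full single-block sensitivity (e.g.\ $rX$ versus $-r'X'$ gives $2\RB\CB$ for the $u$-part, exceeding $\sqrt{2}\,\RB\CB$). The correct bound is the maximum of the single-block sensitivity and $\sqrt{2}$ times the single-block magnitude; since neither grows with $K$, your conclusion stands.
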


\begin{property}
	\label{privacy:multi-sgd}
	Algorithm~\ref{algo:LDP-Multi} with the private SGD update mechanism $\psi^{SGD}_{t}$ and estimator $\varphi_{t}^{SGD}$ is $\varepsilon$-LDP.
\end{property}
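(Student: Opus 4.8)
The plan is to verify the $\varepsilon$-LDP guarantee one user at a time, exploiting the fact that under LDP each round corresponds to a distinct individual who emits a single message. Fix a round $t$ in the main phase and condition on the server state $\hat\theta_{t-1,1:K}$, which is a function of the messages $Z_1,\dots,Z_{t-1}$ of the previous users and is therefore independent of user $t$'s raw data $(X_t,r_t)$. It suffices to show that, for this fixed state, the map $(X_t,r_t)\mapsto \{Z_{i,t}\}_{i\in[K]}$ is $\varepsilon$-LDP. The key structural observation is that for every unselected arm $i\neq a_t$ the mechanism is fed the synthetic input $(\mathbf 0,0)$, so its output is $\Psi_{\varepsilon,R}(\mathbf 0)$, a distribution that does not depend on $(X_t,r_t)$ at all; only the coordinate $a_t$ carries the real gradient $g=\big(\mu(X_t^T\hat\theta_{t-1,a_t})-r_t\big)X_t$.

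First I would identify the base measure. Writing $\nu_0:=\Psi_{\varepsilon,R}(\mathbf 0)$, the two-stage construction in Lemma~\ref{lem:l2} with $x=\mathbf 0$ gives Bernoulli parameter $\tfrac12$ and $\tilde X=\mathbf 0$, so $\nu_0$ is simply the uniform law on the sphere $S^{d-1}_{r_{\varepsilon,d}}$. Next I would compute, for an arbitrary $x\in B^d_R$, the Radon--Nikodym derivative $d\Psi_{\varepsilon,R}(x)/d\nu_0$. Unwinding the sign flip $b$ and the half-sphere choice shows this density takes only two values, one on each half-sphere determined by $\mathrm{sign}(z^Tx)$, and that over all $x$ these values range within the interval $[\,2/(1+e^\varepsilon),\,2e^\varepsilon/(1+e^\varepsilon)\,]$; this is exactly the content of the $\varepsilon$-LDP property of $\Psi_{\varepsilon,R}$ in Lemma~\ref{lem:l2}. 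Crucially, the constant density $1$ of $\nu_0$ with respect to itself also lies in this interval, so the synthetic outputs are ``compatible'' with every real output.

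The heart of the argument --- and the step I expect to be the main obstacle --- is to rule out the factor-$K$ loss that naive composition (Lemma~\ref{lem:composition}) would incur, despite the index $a_t$ of the real update being itself a function of the private context. I would argue at the level of joint densities with respect to the product measure $\nu_0^{\otimes K}$. For data $(X,r)$ yielding selected arm $a$ and gradient $g$, all $K-1$ synthetic coordinates contribute density $1$, so the joint density of $\{z_i\}_{i\in[K]}$ is the single factor $f_g(z_a):=\tfrac{d\Psi_{\varepsilon,R}(g)}{d\nu_0}(z_a)$; likewise data $(X',r')$ gives joint density $f_{g'}(z_{a'})$ with selected arm $a'$. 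Hence the likelihood ratio is $f_g(z_a)/f_{g'}(z_{a'})$, and since both numerator and denominator lie in $[\,2/(1+e^\varepsilon),\,2e^\varepsilon/(1+e^\varepsilon)\,]$ by the previous step, this ratio is bounded by $e^\varepsilon$ pointwise --- uniformly in the outputs and even when $a\neq a'$. Integrating over any measurable $C$ yields $P_{(X,r)}(C)\le e^\varepsilon P_{(X',r')}(C)$, i.e.\ pure $\varepsilon$-LDP with $\delta=0$. The subtlety to get right is that the changing position of the nontrivial factor is immaterial, precisely because every coordinate's density sits inside a common multiplicative envelope of width $e^\varepsilon$.

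Finally I would dispatch the warm-up phase and assemble the result. For $t\le Ks_0$ the pulled arm $a_t=(t\bmod K)+1$ is deterministic and data-independent, so $Z_{t,i}$ equals the constant $\mathbf 0$ for $i\neq a_t$ and equals $\Psi_{\varepsilon,R}(g)$ for $i=a_t$; the round is $\varepsilon$-LDP directly by Lemma~\ref{lem:l2}, with the constant coordinates harmless by post-processing (Lemma~\ref{lem:post}). Since the SGD estimator $\varphi^{SGD}_t$ is a deterministic function of the messages, it adds no privacy cost by Lemma~\ref{lem:post}. Because each round's message is produced by a distinct user and is $\varepsilon$-LDP conditional on the data-independent server state, no composition across users is required, and Algorithm~\ref{algo:LDP-Multi} with $\psi^{SGD}_t$ and $\varphi^{SGD}_t$ is $\varepsilon$-LDP.
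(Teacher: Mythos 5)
Your proof is correct, but it takes a genuinely different route from the paper's. The paper's proof (written out for the OLS case in Proposition~\ref{privacy:multi-ols} and invoked nearly verbatim for SGD) first uses the fact that all synthetic coordinates are identically distributed to cancel them from the likelihood ratio, reducing the $K$-fold output to the \emph{two} active coordinates (the one carrying $x$'s real update and the one carrying $x'$'s), and then applies the generic composition property (Lemma~\ref{lem:composition}) to that pair; this is why the paper intends the per-coordinate mechanism in the multi-parameter setting to run at level $\varepsilon/2$ (cf.\ the discussion in Section~\ref{sec:multi-privacy} and the choice $\sigma_{\varepsilon/2,\delta/2}$ in Theorem~\ref{mainthmsm}), so that composing the two active coordinates yields $\varepsilon$. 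You avoid composition entirely by exploiting the specific structure of $\Psi_{\varepsilon,R}$: every output law, real or synthetic, has density with respect to the uniform law $\nu_0$ on $S^{d-1}_{r_{\varepsilon,d}}$ lying in the fixed envelope $[2/(1+e^\varepsilon),\,2e^\varepsilon/(1+e^\varepsilon)]$, and the synthetic law is exactly $\nu_0$; hence the joint density is a single nontrivial factor, and the likelihood ratio is bounded by $e^\varepsilon$ pointwise even though the \emph{position} of that factor moves with the private data. This buys something the paper's argument does not: the round is $\varepsilon$-LDP with the full-budget per-coordinate mechanism $\Psi_{\varepsilon,R}$ exactly as written in \eqref{eq:SGD-mechanism}, with no halving of the budget (generic composition applied to two $\varepsilon$-LDP coordinates would only give $2\varepsilon$). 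The trade-off is that your argument is mechanism-specific: it relies on the uniformly bounded density envelope of the $\ell_2$-ball mechanism and has no analogue for the Gaussian/OLS mechanism, whose density ratios are unbounded (which is why $\delta>0$ appears there); the paper's composition-based template is what covers both estimators uniformly.

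Two points to tighten. First, the envelope property is strictly \emph{stronger} than the $\varepsilon$-LDP statement of Lemma~\ref{lem:l2}, not ``exactly its content'': $\varepsilon$-LDP bounds ratios of densities at the \emph{same} output point, whereas you need a sup/inf bound across different points $z_a \neq z_{a'}$; your direct computation of the two density values from the Bernoulli-flip-plus-half-sphere construction is what actually justifies this, so keep that computation and do not lean on the lemma for it. Second, $\Psi_{\varepsilon,R}(\mathbf 0)$ is formally degenerate as stated (the half-sphere $\{z : z^T\tilde X > 0\}$ is empty when $\tilde X = \mathbf 0$); the uniform-law interpretation you adopt is the natural one, and the one the paper implicitly needs as well, but it deserves an explicit sentence.
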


\subsection{Regret Analysis}
\label{sec:multi-regret}

\begin{assumption}[Diversity condition]
	\label{diversecondition} 
	Let $K_{opt}$ and $K_{sub}$ be a partition of $[K]$ such that for any $i\in K_{sub}$, $\mu( X^T\theta_i) <\max_{j\neq i} \mu(X^T\theta_j)-h_{\text{sub}}$ for some $h_{\text{sub}}>0$ and every $X\in\mathcal X$. 
	For any $i\in K_{opt}$ define the set $U_i\coloneqq \{X: \mu( X^T\theta_i)>\max_{j\neq i}\mu(X^T\theta_j)\}$. 
	There exists $\kappa_l>0,p'>0$ such that for all $i\in K_{opt}$ and unit vector $v$,$\mathbb{P}((v^TX)^2\bm 1\{X\in U_i \} \geq \kappa_l/K_{opt})>{p'}$.
\end{assumption}

\begin{assumption}[$(\gamma, \beta)$-margin condition] 
	This is almost identical to Assumption~\ref{margin-condition-single} except that we replace $\triangle_t$ with $\triangle_t\coloneqq \mu(X_{t}^T\theta_{a^*_t})-\max_{j\neq a_t^*}\mu(X_{t}^T{\theta_{j}})$.
	\label{betamargin}
\end{assumption}

In our algorithm, diversity condition guarantees that conditioning on the arm $i$ is pulled, the distribution of $X_t$ still can provide enough information about $\theta_i$. We would remark here that we need no longer any deterministic gap in the definition of $U_i$, which weakens the assumption made in \cite{Bastani2020},\cite{Bastani2017}.
Now we are in the suited position to present our theoretical guarantee of the algorithm.

\begin{theorem}
	\label{mainthmsm}
	Under Assumptions~\ref{assump:link}, \ref{diversecondition} and \ref{betamargin},  with the choice of $\tilde{c}  = 2\sigma_{\varepsilon/2,\delta/2}(4\sqrt{d}+2\log(2TK/\alpha))$ in \eqref{eq:OSL-estimator}, $s_0 = C\cdot K(\dfrac{\CB\sigma_\epsilon+\sigma_{\varepsilon,\delta}}{\min\{\lambda_0,h\} p'\kappa_l})^2(d+\log(TK/\alpha)) $ and $h=h_{\text{sub}},\lambda_0 = (2\gamma L\CB)^{-1}(\dfrac{p'}{2})^{1/\beta}$, Algorithm~\ref{algo:LDP-Multi} with OLS mechanism $\psi^{OLS}_{t}$ and estimator $\varphi_{t}^{OLS}$ achieve the following regret with probability at least $1-\alpha$ for some constant $C$,
\begin{align*}
	\text{Reg}(T)\leq \gamma C\CB  
	\Big[
		\big(\dfrac{K\CB(\CB \sigma_\epsilon+\pfactorc )\sqrt{d+\log((TK)/\alpha)} }{\kappa_l p'}\big)^{1+\beta}
		+o_{h_{\text{sub}},\beta,\gamma}(1)
	\Big]\cdot
	\left\{
		\begin{array}{ll}
			\log T, &\beta = 1,\\
			\dfrac{T^{\frac{1-\beta}{2}}}{1-\beta}, &0<\beta <1.
		\end{array}
	\right.  
\end{align*}
Under Assumptions~\ref{assump:link}, \ref{diversecondition} and \ref{betamargin}, with the choice of step-size $$\eta_t: = (\bm 1\{t\leq Ks_0 \}((t\text{ mod } K)+1) +\bm 1\{t>Ks_0 \} (t-(K-1)s_0))^{-1}K_{opt}^{-1}\zeta\kappa_lp'c'$$ for any $c'\geq 1$ and $h= h_{\text{sub}}$, Algorithm~\ref{algo:LDP-Multi} with SGD mechanism $\psi^{SGD}_{t}$ and estimator $\varphi_{t}^{SGD}$ achieve the following regret with probability at least $1-\alpha$ for some constant $C$,
\begin{align*}
	\text{Reg}(T)\leq C\cdot \gamma L \CB
	\Big[ 
		\big(
			\dfrac{Kr_{\varepsilon,d} L\CB\sqrt{\log((TK\log T)/\alpha)} }{\zeta\kappa_l p'}
		\big)^{1+\beta} 
		+o_{h_\text{sub},\beta,\gamma}(1)
	\Big]
	\cdot 
	\left\{
		\begin{array}{ll}
			\log T , &\beta = 1,\\
			\dfrac{T^{\frac{1-\beta}{2}}	}{1-\beta}, &0<\beta <1.
		\end{array}
	\right. 
\end{align*}
\end{theorem}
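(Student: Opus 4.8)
\textbf{Proof plan for Theorem~\ref{mainthmsm}.}

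The plan is to bound the regret by splitting the horizon into the warm-up phase $t\le Ks_0$ and the exploitation phase $t>Ks_0$, and within each phase to combine an estimation-error guarantee for the private estimator with the $(\gamma,\beta)$-margin condition. First I would establish a high-probability deviation bound for the per-arm estimators $\hat\theta_{t,i}$ of the form $\lVert\hat\theta_{t,i}-\theta_i^\star\rVert_2 \le \Estimationerror$ for the OLS case, and the analogous SGD bound scaling with $r_{\varepsilon,d}$. This requires showing that, after conditioning on the arm being pulled, the synthetic-update noise injected for the unselected arms remains mean-zero (so the estimator stays unbiased) while the effective sample size for arm $i$ grows like $s_0+(t-Ks_0)\kappa p'/4$ by the diversity condition (Assumption~\ref{diversecondition}); the diversity condition is exactly what guarantees that conditioned on pulling arm $i$ the design matrix $\sum x_{t,a_t}x_{t,a_t}^T$ has a smallest eigenvalue growing linearly in the number of rounds $i$ is optimal, with rate governed by $\kappa_l$ and $p'$. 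For the OLS estimator I would track the noisy covariance $\sum_i M_i+\tilde c\sqrt t\,I$, use the shift $\tilde c\sqrt t$ (calibrated to $\sigma_{\varepsilon/2,\delta/2}$) to guarantee positive-definiteness and control the matrix-perturbation error via a matrix Bernstein/sub-Gaussian concentration argument, exactly as in Theorem~\ref{ub-single-worst} but now per arm.

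Second, I would handle the warm-up phase. Since each arm is pulled $s_0$ times by round-robin, the estimation error after warm-up is $O(1/\sqrt{s_0})$ up to the privacy factor $(\CB\sigma_\epsilon+\sigma_{\varepsilon,\delta})\sqrt{d+\log(TK/\alpha)}$; the choice $s_0 = C\cdot K(\frac{\CB\sigma_\epsilon+\sigma_{\varepsilon,\delta}}{\min\{\lambda_0,h\}p'\kappa_l})^2(d+\log(TK/\alpha))$ is precisely tuned so that $\max_i\lVert\hat\theta_{Ks_0,i}-\theta_i^\star\rVert_2\le\min\{\lambda_0,h\}$ with high probability. I would then verify that the elimination step \eqref{Elimination} with threshold $h/2=h_{\text{sub}}/2$ never discards the true optimal arm $a_t^*$ (because the estimation error is below $h/2$) while discarding every arm in $K_{\text{sub}}$ (by the deterministic gap $h_{\text{sub}}$ in Assumption~\ref{diversecondition}). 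The warm-up contributes regret at most $Ks_0\cdot O(L\CB)=O(\log T)$, which is absorbed into the stated bound.

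Third, for the exploitation phase I would bound the per-round regret by a standard good-event decomposition: on the event that all surviving estimators are accurate to within $\lambda_t := O((s_0+(t-Ks_0)\kappa p'/4)^{-1/2})\cdot(\text{privacy factor})$, the algorithm pulls a sub-optimal arm only when the margin $\triangle_t$ is smaller than $O(L\CB\lambda_t)$, whose probability is at most $\gamma(L\CB\lambda_t)^\beta$ by the margin condition (Assumption~\ref{betamargin}). Summing the instantaneous regret $\triangle_t\cdot\mathbb{1}\{a_t\neq a_t^*\}$ over $t$ and integrating $\sum_t \lambda_t^{1+\beta}$ gives $\sum_t t^{-(1+\beta)/2}$, which evaluates to $\log T$ when $\beta=1$ and to $T^{(1-\beta)/2}/(1-\beta)$ when $0<\beta<1$; the leading constant collects the privacy and diversity factors into $(K\CB(\CB\sigma_\epsilon+\pfactorc)\sqrt{d+\log(TK/\alpha)}/(\kappa_lp'))^{1+\beta}$. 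For the SGD estimator the argument is parallel but the error analysis replaces the OLS covariance concentration with a recursive contraction of $\E\lVert\hat\theta_{t,i}-\theta_i^\star\rVert_2^2$ under the step-size $\eta_t$ given in the statement, using Assumption~\ref{assump:link} (the lower bound $\zeta$ on $\mu'$ supplies strong convexity of the expected loss and hence the contraction factor $1-\eta_t\zeta\kappa_lp'/K_{\text{opt}}$), and the privacy scale enters through $r_{\varepsilon,d}$ rather than $\sigma_{\varepsilon,\delta}$.

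The main obstacle is controlling the corruption introduced by the synthetic update in the exploitation phase. Because every unselected arm receives mean-zero synthetic noise at each round, the variance of $\hat\theta_{t,i}$ accumulates even when arm $i$ is not pulled, so the naive effective-sample-size argument degrades. The delicate point is to show that the signal accumulated when $i$ \emph{is} pulled (growing linearly at rate $\kappa p'/4$ via diversity) dominates this accumulated synthetic variance, so that the net error still shrinks at the $t^{-1/2}$ rate; this is exactly why the elimination step is indispensable, since restricting attention to the arms surviving \eqref{Elimination} ensures each retained arm is pulled a constant fraction of the time and thereby keeps its effective sample size growing linearly. Making this domination quantitative — and uniform over all $T$ rounds and all $K$ arms via a single union bound absorbed into the $\log(TK/\alpha)$ factor — is the crux of the argument.
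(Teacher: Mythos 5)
Your outline reproduces the paper's overall architecture — a warm-up phase with $s_0$ tuned so all $K$ estimates land within the elimination tolerance, correctness of the elimination step (the paper's Lemma~\ref{warmupstageh}), a per-arm $1/\sqrt{t}$ estimation-error bound for arms in $K_{opt}$, conversion of accuracy into regret through the margin condition and a Hoeffding bound on $\sum_t t^{-1/2}\bm 1\{\triangle_t \lesssim t^{-1/2}\}$ (Lemma~\ref{marginlemma}), and an SGD contraction in place of the OLS covariance analysis (Lemmas~\ref{OLSTHM} and~\ref{SGDTHM}). However, there is a genuine gap at precisely the point you identify as the crux.

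You claim that the diversity condition together with elimination ``ensures each retained arm is pulled a constant fraction of the time.'' That is not sufficient, and it is not how the paper argues. Elimination only removes $K_{sub}$; it says nothing about how the greedy rule distributes pulls among the surviving arms of $K_{opt}$. The design matrix of arm $i$ grows only on rounds where $i$ is \emph{pulled}, whereas Assumption~\ref{diversecondition} only controls the rounds where $i$ is \emph{optimal} (i.e.\ $X_t\in U_i$). The bridge between these two events is the margin condition: on the accuracy event $\sup_{i\in K_{opt}}\lVert\hat\theta_{t-1,i}-\theta_i\rVert\le\lambda_0$, arm $i$ can be optimal yet not pulled only if $\triangle_t\le 2L\CB\lambda_0$, and Assumption~\ref{betamargin} bounds this conditional probability by $\gamma(2L\CB\lambda_0)^\beta\le p'/2$ — this is exactly why the theorem fixes $\lambda_0$ at the value $(2 L\CB)^{-1}(p'/(2\gamma))^{1/\beta}$, and why the result excludes $\beta=0$ (see the paper's Lemma~\ref{Eigenvalue condition}, which yields $\lambda_{\min}(\E[X_tX_t^T\bm 1\{a_t=i\}\mid\mathcal F_{t-1}])\ge \kappa_l p'/(2K)$ only on this event). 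Your plan invokes the margin condition solely for the final regret conversion, so as written your effective-sample-size claim would not go through. A second, related omission: the growth claim is circular (pull frequency requires accuracy, accuracy requires pull frequency), and the summands $X_tX_t^T\bm 1\{a_t=i\}$ are adapted rather than independent, so the i.i.d.-style concentration you propose ``exactly as in Theorem~\ref{ub-single-worst} but per arm'' does not apply; the paper breaks the circularity with an induction over time (Lemma~\ref{lemmaAwarmup2}), where each step uses the matrix-martingale inequality of Lemma~\ref{Tropp lemma} conditioned on the accuracy events up to the current round, with the per-step failure probabilities summing to $O(\alpha)$. Supplying this margin-driven eigenvalue bound and the bootstrap induction is what your proposal is missing.
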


Theorem \ref{mainthmsm} recovers the non-privacy bound in \cite{Bastani2017} under similar condition up to a logarithmic factor. Notice that unlike Theorem \ref{thm:opt-single-margin} in the single-parameter case, we cannot establish the regret when $\beta = 0$. The reason is that in our analysis, we need the probability of $\triangle_t>h$ vanish as $h\to 0$ to guarantee the estimation error for $\theta_{i},i\in K_{opt}$ converges. The corresponding theoretical result in this setting when $\beta=0$ is left as an open question.

\section{Experiment}
\label{sec:numerical}

To the best of our knowledge, the contextual bandit algorithms with LDP guarantee has only been studied by \cite{Zheng2020}, who propose a variant of LinUCB algorithm for linear bandits and a variant of Generalized Linear Online-to-confidence-set Conversion (GLOC) framework \cite{Jun2017} for generalized linear bandits.
We refer their methods as LDP-UCB and LDP-GLOC.
We call our method LDP-OLS if we plug in the OLS mechanism and estimator into Algorithms~\ref{LDP-Single} and \ref{algo:LDP-Multi}, and LDP-SGD if we plug in the SGD ones.
We evaluate all the four methods on two different privacy levels $\varepsilon=1, 5$ in synthetic datasets, which are industry standards. For example, Apple uses $\varepsilon = 4$ in their projects on Emojis and Safari usage \cite{apple17}. Similar choices of the privacy parameter $\varepsilon$ can be found in \cite{bassily2017practical, erlingsson2014rappor}. We also demonstrate the efficacy of our algorithms with real data on auto lending in Appendix~\ref{Real-Experiment}.

For the sake of comparison, the learning step parameter for LDP-GLOC and LDP-SGD are tuned in the same way.\footnote{The source code to reproduce all the results is available at the GitHub repo \href{https://github.com/liangzp/LDP-Bandit}{liangzp/LDP-Bandit}.}. The first and second columns in Figure~\ref{fig:numerical} are for single-param and multi-param settings, respectively, which are simulation studies on linear bandits.
The context is generated from Unif$(S^{d-1}_1)$ at each round.

In conclusion, our methods significantly outperform existing ones in all settings consistently. In particular, LDP-SGD achieves better performance under more strigent privacy requirements.

% This file was created by tikzplotlib v0.9.8.
\begin{figure}[H]
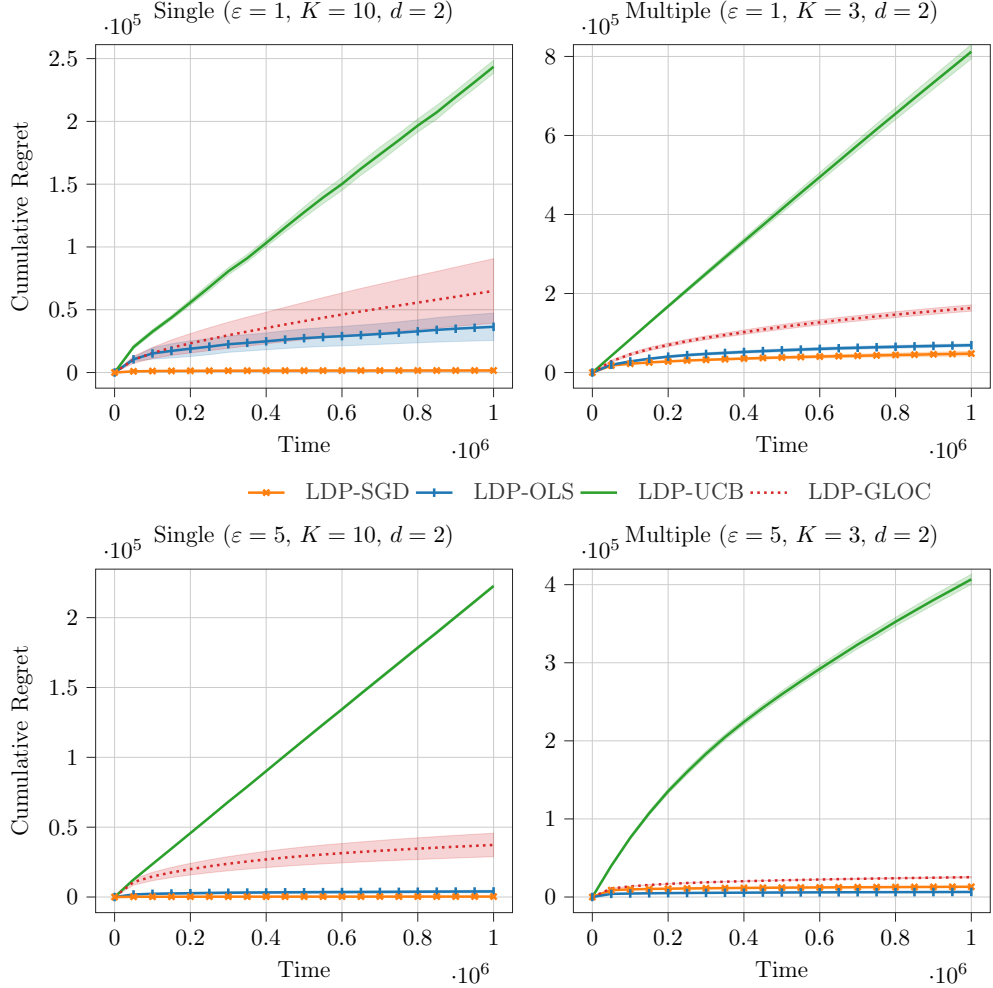

\centering
\begin{tikzpicture}[scale = 0.8]
  \definecolor{LDP-OLS}{rgb}{0.12156862745098,0.466666666666667,0.705882352941177}
  \definecolor{LDP-SGD}{rgb}{1,0.498039215686275,0.0549019607843137}
  \definecolor{LDP-UCB}{rgb}{0.172549019607843,0.627450980392157,0.172549019607843}
  \definecolor{LDP-GLOC}{rgb}{0.83921568627451,0.152941176470588,0.156862745098039}
  
\begin{groupplot}[group style={group size=2 by 2}]

\input{simulation_single_epsilon_1.tex}

\input{simulation_multi_epsilon_1.tex}

\input{simulation_single_epsilon_5.tex}

\input{simulation_multi_epsilon_5.tex}

% \input{figures/figs/real_single_epsilon_5.tex}

% \addlegendentry{LDP-OLS}
\end{groupplot}
\end{tikzpicture}
\caption{
  We perform 10 replications for each case and plot the mean and 0.5 standard deviation of their regrets.
  }
\label{fig:numerical}
\end{figure}

\section{Conclusion}
\label{Conclusion}
In this paper, we propose LDP contextual bandit frameworks in both single-parameter and multi-parameter settings with flexibility to deal generalized linear reward structure, and establish theorectical guarrentee of our algorithms based on the frameworks. 
Our algorithms are highly efficient and have superior empirical performance. 
There are still some open questions to be explored.
Whether our regret bounds are optimal in terms of $\varepsilon$ in the multi-parameter setting is still unknown. 
It will be interesting to explore estimators and mechanisms beyond the private OLS and SGD ones to study the optimality in terms of $\varepsilon$.
Moreover, whether there is a fundamental limit in adversarial contextual bandit under LDP constraints is still an open question.
It also remains an open question to analyze the regret bound in the multi-parameter setting when $\beta=0$ in the margin condition.

\bibliography{ref}

\appendix

\section{Randomness Condition}
\label{Randomness}
In this section, we show that a sub-gaussian random vector with bounded density satisfies Assumption~\ref{as:pstar}:\\ We say a random vector $x$ is $\sigma^2$-sub-gaussian vector with bounded density, if for every $v\in S^{d-1}_1$, $v^Tx$ is $\sigma^2$-sub-gaussian and its density function exists and is bounded by $\gamma$ for some $\gamma>0$. For such kind of random vector, \cite{Ren2020a} shows that it satisfies Assumption~\ref{as:pstar} with $\kappa_l =  \dfrac{2d}{3\gamma K}$ and $p_* =  \dfrac{1}{3}$. In particular, \cite{Han2020} shows that when $x$ follows $\mathcal N(0,\Sigma)$, with $\lambda_{\min}(\Sigma) \geq \dfrac{\kappa}{d}$, we can have $\kappa_l = \dfrac{c_1\kappa }{d}$ and $p_* = c_2 $ for constants $c_1$ and $c_2$.

\section{Proof of Privacy Guarantee}
\label{Proof-Privacy}
\subsection{Proof of Results in Section \ref{sec:single-algo}}
\begin{proof}[Proof of Proposition~\ref{privacy:single-OLS}]
	Since we assume that the features and rewards are bounded, $\lVert x_{t,a}\rVert\leq \CB,\lVert r_{t}\rVert\leq \RB$ for all $t\in [T]$ and $a\in [K]$, by Lemma~\ref{lem:gaussian}, $M_t$ is $(\varepsilon/2,\delta/2)$-LDP and $u_t$ is $(\varepsilon/2.\delta/2)$-LDP. Thus Lemma~\ref{lem:composition} implies that $\psi_t^{OLS}$ is $(\varepsilon,\delta)$-LDP.
\end{proof}

\begin{proof}[Proof of Proposition~\ref{privacy:single-SGD}]
	Since we assume that the features and rewards are bounded, $\lVert x_{t,a}\rVert\leq \CB,\lVert r_{t}\rVert\leq \RB$ for all $t\in [T]$ and $a\in [K]$, we have $(\mu(x_{t,a_t}^T\hat{\theta}_{t-1})-r_t)x_{t,a_t}$ bounded by $2c_rC_B$. Lemma~\ref{lem:l2} implies that $\psi_t^{SGD}$ is $\varepsilon$-LDP. 
\end{proof}

\subsection{Proof of Results in Section~\ref{sec:multi-privacy}}

\begin{proof}[Proof of Proposition~\ref{privacy:multi-ols}]  We simply denote $\psi_{t}^{OLS}$ by $\psi_t$ in this proof.
	At time $t$, for any two $x\neq x'$, without loss of generality\ assuming the action corresponding  $x$ and $x'$ are $a_t = 1$ and $a_t = 2$, then the output corresponding $x,x'$ is given by $(\psi_t(x,x^T\theta_1+\epsilon_t),\psi_t(0,0),\dots,\psi_t(0,0))$ and $(\psi_t(0,0),\psi_t(x',x'^T\theta_2+\epsilon_t),\dots,\psi_t(0,0)).$ Since $\psi_t(0,0)$ has the same distribution, we have for any subset $A_1\times A_2\times\dots\times A_K\subset \R^{Kd}$ with $A_{i}$  a Borel set in $ \R^{d}$, 
	\begin{align}
		&\quad\dfrac{\mathbb P(\psi_t(x,x^T\theta_1+\epsilon_t)\in A_{1},\psi_t(0,0)\in A_{2},\dots,\psi_t(0,0)\in A_{K})}{\mathbb P(\psi_t(0,0)\in A_{1},\psi_t(x',x'^T\theta_2+\epsilon_t)\in A_{2},\dots,\psi_t(0,0)\in A_K)}\nonumber\\
		&= \dfrac{ \mathbb P(\psi_t(x,x^T\theta_1+\epsilon_t)\in A_{1},\psi_t(0,0)\in A_{2})}{\mathbb P(\psi_t(x',x'^T\theta_2+\epsilon_t)\in A_{2},\psi_t(0,0)\in A_{1})}.
		\label{eq: lemma E1}
	\end{align}
	Set $\tilde{\psi}(v_1,v_2): = (\psi_t(v_1),\psi_t(v_2))$, and $(v_1,v_2) \coloneqq (x,0), (v'_1,v'_2)\coloneqq (0,x')$, then we have \eqref{eq: lemma E1} equals to $\tilde{\psi}(v_1,v_2)/\tilde{\psi}(v_1',v_2')$, thus applying Lemma~\ref{lem:composition} to it implies that \eqref{eq: lemma E1} is upper bounded  by $e^\varepsilon+\delta \mathbb P(\psi_t(x',x'^T\theta_2+\epsilon_t)\in A_{2},\psi_t(0,0)\in A_{1})^{-1}$, leading to the desired result.
\end{proof}

\begin{proof}[Proof of Proposition~\ref{privacy:multi-sgd}]
That is nearly the same as the proof of Proposition~\ref{privacy:multi-ols}, but replacing $e^\varepsilon+\delta \mathbb P(\psi_t(x',x'^T\theta_2+\epsilon_t)\in A_{2},\psi_t(0,0)\in A_{1})^{-1}$	 by  $e^\varepsilon$ in the last step.
\end{proof}

\section{Proof of Results in Section~\ref{sec:single-regret} }

In the following analysis, without special explaination, all the $c$ and $C$ denote absolute constants. Sometimes we state the inequality of type $A_1\leq C\log(A_2/\alpha) A_3$ holds with probability at least $1-\alpha$ while in proof we derive the results hold with $1-c\alpha$ for some constant c. In fact, they are equivalent by re-scaling $\alpha$ and changing $C$ to some larger constant.

\subsection{Proof of Worst-Case Bounds}

\begin{proof}[Proof of Theorem~\ref{ub-single-worst}]
	Since $x_{t,a_t}$ is the greedy selection, we have $x_{t,a_t}^T\hat{\theta}_{t-1}\ge x_{t,a}^T\hat{\theta}_{t-1}$ for any time $t\in [T]$ and $a\in [K]$. Consequently we have the following upper bound for the instantaneous regret at time $t$,
	\begin{align*}
		\max _{a \in[K]}\left(x_{t, a}-x_{t, a_{t}}\right)^{T} \theta^{\star} & \leq \max _{a \in[K]}\left(x_{t, a}-x_{t, a_{t}}\right)^{T}\left(\theta^{\star}-\hat{\theta}_{t-1}\right)\\
		& \leq \max _{a, a^{\prime} \in[K]}\left(x_{t, a}-x_{t, a^{\prime}}\right)^{T}\left(\theta^{\star}-\hat{\theta}_{t-1}\right)\\
		& \leq 2 \max _{a \in[K]}\left|x_{t, a}^{T}\left(\theta^{\star}-\hat{\theta}_{t-1}\right)\right|.
	\end{align*}

	 For any fixed $a\in [K]$, $x_{t,a}$ is independent of $\hat{\theta}_{t-1}$. By Assumption~\ref{as:pstar}, conditioning on the historical information up to time t,  $ x_{t,a}^T (\theta^\star-\hat{\theta}_{t-1})$ is a $\frac{\kappa_u}{d} \lVert \theta^\star-\hat{\theta}_{t-1}\rVert^2$-sub-gaussian random variable. Now by the maximal concentration inequality for a sub-gaussian sequence, we have with probability at least $1-\frac{\alpha}{T}$, 
	 \begin{align*}
		\max_{a\in [K]}\lvert  x_{t,a}^T (\theta^\star-\hat{\theta}_{t-1})\rvert = O \left ( \sqrt{\dfrac{\kappa_u \log (KT/\alpha)}{d}} \lVert\theta^\star -\hat\theta_{t-1}\rVert \right ).
		\end{align*}
	To control the regret bound, we  bound the estimation error $\lVert \theta^{\star}-\hat{\theta}_{t-1}\rVert$ in each time in the following lemma. 

	\begin{lemma}[Estimation Error for OLS]
		\label{est-ols}
		Using the private OLS update mechanism $\psi^{OLS}_{t}$ and estimator $\varphi_{t}^{OLS}$, for any $8 \frac{d \log 9 +\log (T/\alpha)}{p_{*}^2}< t \le T$, we have with probability at least $1-\dfrac{\alpha}{T}$, 
			 \begin{equation}
				\lVert \hat{\theta}_t-\theta^{\star}\rVert^2 \leq C (\CB \sigma_{\epsilon} \pfactorc d)^2  \dfrac{d+\log(T/\alpha) }{\kappa_l^2 p_{*}^2 t},
			\label{ols-est-error}
			\end{equation}
	for some C independent of d, K and T.
	\end{lemma}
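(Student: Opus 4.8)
The plan is to expand the closed form of $\hat\theta_t$ and separate a well-conditioned ``signal'' matrix from the privacy and regularization noise. Writing $\bar V_t := \sum_{i=1}^t x_{i,a_i}x_{i,a_i}^T$ and $V_t := \sum_{i=1}^t M_i + \tilde{c}\sqrt{t}\,I = \bar V_t + \sum_{i=1}^t W_i + \tilde{c}\sqrt{t}\,I$, and using $r_i = x_{i,a_i}^T\theta^\star + \epsilon_i$ in the linear case so that $\sum_i u_i = \bar V_t\theta^\star + \sum_i \epsilon_i x_{i,a_i} + \sum_i \xi_i$, I would first record the error decomposition
\begin{align*}
	\hat\theta_t - \theta^\star = V_t^{-1}\Big[ -\big(\textstyle\sum_{i=1}^t W_i + \tilde{c}\sqrt{t}\,I\big)\theta^\star + \sum_{i=1}^t \epsilon_i x_{i,a_i} + \sum_{i=1}^t \xi_i \Big].
\end{align*}
Hence $\lVert\hat\theta_t-\theta^\star\rVert \le \lambda_{\min}(V_t)^{-1}\big(\text{regularization bias} + \text{reward noise} + \text{privacy noise}\big)$, and the proof reduces to one eigenvalue lower bound and three vector-norm upper bounds, each holding on a $1-\alpha/(2T)$ event.

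The core step, and the main obstacle, is showing $\lambda_{\min}(V_t) \ge c\,\kappa_l p_* t / d$. I would first absorb the Gaussian matrix $\sum_i W_i$ into the shift: since its entries have variance $4t\CB^2\pfactorc^2$, an operator-norm tail bound for symmetric Gaussian matrices shows $\lVert\sum_i W_i\rVert \le \tilde{c}\sqrt{t}$ with high probability for the chosen $\tilde{c}=2\pfactorc(4\sqrt d + 2\log(2T/\alpha))$, whence $\sum_i W_i + \tilde{c}\sqrt t\,I \succeq 0$ and $V_t \succeq \bar V_t$, reducing matters to $\lambda_{\min}(\bar V_t)$. The difficulty is that $a_i$ is the greedy arm for $\hat\theta_{i-1}$, so the selected contexts are \emph{not} i.i.d.\ and ordinary matrix concentration fails. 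My plan is to fix a direction $v$ in a minimal net $\mathcal N$ of $S^{d-1}$ and set $Y_i(v) = \bm 1\{(x_{i,a_i}^T v)^2 > \kappa_l/d\}$; Assumption~\ref{as:pstar} gives $\E[Y_i(v)\mid\mathcal F_{i-1}]\ge p_*$, so an adaptive Chernoff/Freedman-type bound yields $\sum_i Y_i(v) \ge p_* t/2$ with failure probability $\le \exp(-p_*^2 t/8)$, and therefore $v^T\bar V_t v \ge \tfrac{\kappa_l}{d}\sum_i Y_i(v) \ge \tfrac{\kappa_l p_*}{2d}t$. A union bound over $|\mathcal N| = e^{O(d)}$ points and matching the total failure to $\alpha/(2T)$ forces precisely $t \gtrsim (d+\log(T/\alpha))/p_*^2$, which is the stated threshold on $t$. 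To convert this per-direction net bound into a genuine bound on $\lambda_{\min}(\bar V_t)$, I would invoke Assumption~\ref{singleeigenvalue} to get $\lambda_{\max}(\bar V_t)\lesssim \kappa_u t/d$: the dimension-free condition number means a sufficiently (but still constant-) fine net closes the $\min_{v\in\mathcal N}$-to-$\min_{\lVert v\rVert=1}$ gap without losing the order, so $\lambda_{\min}(V_t)\ge c\,\kappa_l p_* t/d$, i.e.\ $\lVert V_t^{-1}\rVert \le Cd/(\kappa_l p_* t)$.

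For the numerator I would bound the three pieces separately. The bias term is deterministic once $\lVert\sum_i W_i\rVert\le\tilde c\sqrt t$ holds, namely $\lVert(\sum_i W_i+\tilde c\sqrt t\,I)\theta^\star\rVert \le 2\tilde c\sqrt t$ using $\lVert\theta^\star\rVert\le 1$. The reward-noise term $\sum_i\epsilon_i x_{i,a_i}$ has conditionally mean-zero, $\sigma_\epsilon\CB$-sub-Gaussian increments, so a vector-valued self-normalized/sub-Gaussian martingale inequality gives $\lVert\sum_i\epsilon_i x_{i,a_i}\rVert \lesssim \sigma_\epsilon\CB\sqrt{t(d+\log(T/\alpha))}$. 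The privacy-noise term satisfies $\sum_i\xi_i\sim\mathcal N(0, t\CB^2\RB^2\pfactorc^2 I_d)$, so a Gaussian-norm (chi-square) tail gives $\lVert\sum_i\xi_i\rVert\lesssim \CB\RB\pfactorc\sqrt{t(d+\log(T/\alpha))}$.

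Combining, I would divide each numerator bound by $\lambda_{\min}(V_t)\ge c\,\kappa_l p_* t/d$, producing contributions of order $\tilde c\, d/(\kappa_l p_*\sqrt t)$, $\sigma_\epsilon\CB d\sqrt{d+\log(T/\alpha)}/(\kappa_l p_*\sqrt t)$ and $\CB\RB\pfactorc d\sqrt{d+\log(T/\alpha)}/(\kappa_l p_*\sqrt t)$ respectively; since $\tilde c = \Theta(\pfactorc\sqrt d)$ up to logarithmic factors and the absolute constant $C$ absorbs the $\RB$-versus-$\sigma_\epsilon$ mismatch, all three are dominated by $C\,\CB\sigma_\epsilon\pfactorc d\,\sqrt{d+\log(T/\alpha)}/(\kappa_l p_*\sqrt t)$. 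Squaring the sum yields \eqref{ols-est-error}, and a final union bound over the $O(1)$ many high-probability events (each of probability $\ge 1-\alpha/(2T)$, rescaling $\alpha$ and enlarging $C$ as remarked at the start of this section) delivers the claim with probability at least $1-\alpha/T$.
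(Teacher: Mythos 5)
Your proposal is correct and follows essentially the same route as the paper's proof: the same error decomposition into reward noise, privacy matrix/vector noise, and regularization bias; the same key lemma lower-bounding $\lambda_{\min}$ of the adaptively-collected Gram matrix via Assumption~\ref{as:pstar}, an Azuma/Freedman-type martingale bound on the indicator sums, and a union bound over an $e^{O(d)}$-net (which is exactly the paper's Lemma~\ref{smallest} and accounts for the threshold $t \gtrsim (d+\log(T/\alpha))/p_*^2$); and the same Weyl/PSD-shift treatment of $\sum_i W_i + \tilde{c}\sqrt{t}I$ followed by the three sub-Gaussian norm bounds. The only cosmetic difference is that you close the net-to-sphere gap by invoking Assumption~\ref{singleeigenvalue} to control $\lambda_{\max}(\bar V_t)$, whereas the paper uses a direct net inequality, but this does not change the structure or the resulting bound.
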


	\begin{lemma}[Estimation Error for SGD]\label{est-sgd}
		Using the private OLS update mechanism $\psi^{SGD}_{t}$ and estimator $\varphi_{t}^{SGD}$, for any $3\le t\le T$, we have with probability at least $1-\dfrac{\alpha}{T}$, 
		\begin{equation}
			\lVert \hat{\theta}_t - \theta^{\star} \rVert^2 \le \frac{(624\log (\log T/\alpha)+1)r_{\varepsilon, d}^2d^2}{4\kappa_l^2 \zeta^2 p_{*}^2 t}.
		\label{sgd-est-error}
		\end{equation}
	\end{lemma}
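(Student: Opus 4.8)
The plan is to treat the update $\hat\theta_t-\theta^\star=(\hat\theta_{t-1}-\theta^\star)-\eta_t\psi_t^{SGD}$ as a noisy strongly-convex descent and to convert an in-expectation contraction into the stated high-probability bound. Writing $\Delta_t:=\hat\theta_t-\theta^\star$, I would expand
\begin{equation*}
  \|\Delta_t\|^2=\|\Delta_{t-1}\|^2-2\eta_t\langle\psi_t^{SGD},\Delta_{t-1}\rangle+\eta_t^2\|\psi_t^{SGD}\|^2 .
\end{equation*}
The quadratic term is exactly $\eta_t^2 r_{\varepsilon,d}^2$, since by Lemma~\ref{lem:l2} the mechanism $\Psi_{\varepsilon,R}$ always outputs a vector on the sphere $S^{d-1}_{r_{\varepsilon,d}}$; this bounded-gradient property is what makes the privatized iteration tractable and explains the $r_{\varepsilon,d}^2$ in the final bound.

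The key deterministic ingredient is a strong-convexity-type lower bound on the conditional cross term. Let $\mathcal{H}_{t-1}$ collect the history through round $t-1$. Because $r_t=\mu(x_{t,a_t}^T\theta^\star)+\epsilon_t$ with $\E[\epsilon_t\mid\mathcal{F}_t]=0$ and $\Psi_{\varepsilon,R}$ is unbiased (Lemma~\ref{lem:l2}), averaging over the reward noise and the internal randomness of the mechanism gives
\begin{equation*}
  \E[\psi_t^{SGD}\mid\mathcal{H}_{t-1},x_{t,a_t}]=\big(\mu(x_{t,a_t}^T\hat\theta_{t-1})-\mu(x_{t,a_t}^T\theta^\star)\big)x_{t,a_t}.
\end{equation*}
A mean value theorem argument with Assumption~\ref{assump:link} ($\mu'\ge\zeta$) then yields $\langle\E[\psi_t^{SGD}\mid\cdots],\Delta_{t-1}\rangle\ge\zeta\,(x_{t,a_t}^T\Delta_{t-1})^2$. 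Since $a_t$ is greedy for $\hat\theta_{t-1}$, the selected context $x_{t,a_t}$ is distributed as $P_u$ with $u=\hat\theta_{t-1}/\|\hat\theta_{t-1}\|$, so Assumption~\ref{as:pstar} with test direction $v=\Delta_{t-1}/\|\Delta_{t-1}\|$ gives $\E[(x_{t,a_t}^T\Delta_{t-1})^2\mid\mathcal{H}_{t-1}]\ge(p_*\kappa_l/d)\|\Delta_{t-1}\|^2$. Combining these yields the contraction
\begin{equation*}
  \E[\|\Delta_t\|^2\mid\mathcal{H}_{t-1}]\le(1-2\eta_t\mu_0)\|\Delta_{t-1}\|^2+\eta_t^2 r_{\varepsilon,d}^2,\qquad\mu_0:=\frac{\zeta\kappa_l p_*}{d},
\end{equation*}
which with the prescribed $\eta_t=c'/(\mu_0 t)$ becomes $(1-2c'/t)\|\Delta_{t-1}\|^2+c'^2 r_{\varepsilon,d}^2/(\mu_0^2 t^2)$; unrolling the expectation already produces the target rate $O(r_{\varepsilon,d}^2/(\mu_0^2 t))=O(r_{\varepsilon,d}^2 d^2/(\zeta^2\kappa_l^2 p_*^2 t))$, matching the claimed form up to the logarithmic factor.

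The main obstacle, and the origin of the $\log(\log T/\alpha)$ factor, is upgrading this in-expectation decay to a statement holding with probability $1-\alpha/T$. I would split the cross term into its predictable part and the martingale difference $M_t:=\langle\psi_t^{SGD}-\E[\psi_t^{SGD}\mid\mathcal{H}_{t-1}],\Delta_{t-1}\rangle$, whose increments are bounded by $r_{\varepsilon,d}\|\Delta_{t-1}\|$ and whose conditional variance is self-bounding (of order $r_{\varepsilon,d}^2\|\Delta_{t-1}\|^2$). Instead of a naive union bound over all $t\in[T]$, which would cost a $\log T$ factor, I would apply a Freedman-type inequality only on a geometric/dyadic grid of time indices and interpolate, so the union bound runs over $O(\log T)$ blocks and contributes just $\log(\log T/\alpha)$; this is the standard device behind last-iterate high-probability bounds for strongly convex SGD. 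Feeding the resulting deviation back into the recursion and solving the induction $D_t\lesssim G/t$ for $G\asymp r_{\varepsilon,d}^2 d^2\log(\log T/\alpha)/(\zeta^2\kappa_l^2 p_*^2)$ then gives the stated inequality. The delicate points I anticipate are keeping the variance proxy self-bounding so the Freedman bound closes (rather than inflates) the induction, and checking that the small-$t$ rounds do not dominate, which is why the statement is restricted to $t\ge 3$.
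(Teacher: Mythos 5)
Your proposal is correct and follows essentially the same route as the paper's proof: the identical one-step expansion, the unbiasedness of $\Psi_{\varepsilon,R}$ from Lemma~\ref{lem:l2} combined with $\mu'\ge\zeta$ and the Markov-type eigenvalue bound from Assumption~\ref{as:pstar} to get the contraction with modulus $\zeta\kappa_l p_*/d$, and the exact sphere-norm bound $\lVert \psi^{SGD}_t\rVert = r_{\varepsilon,d}$. The only difference is presentational: the paper invokes Proposition~1 of \cite{rakhlin2011making} as a black box for the high-probability last-iterate bound, whereas you sketch the internals of that very result (Freedman's inequality over dyadic blocks plus induction), which is precisely where the constant $624$ and the $\log(\log T/\alpha)$ factor originate.
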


Plugging OLS estimation error \eqref{ols-est-error} into the regret bound, denote $t_1 \coloneqq 8 \frac{d \log 9 +\log (T/\alpha)}{p_{*}^2}$, the following holds with probability at least $1-\alpha$,
\begin{align}
	&\sum_{t=1}^T \max _{a \in[K]}\left(x_{t, a}-x_{t, a_{t}}\right)^{T} \theta^{\star}\nonumber\\
	\le &t_1 \RB + \sum_{t=t_1+1}^T C \CB \sigma_{\epsilon}\pfactorc d \sqrt{\frac{\kappa_u \log (KT/\alpha)}{d}}\dfrac{\sqrt{d+\log(T/\alpha)}}{\kappa_l p_{*} \sqrt{t}}\label{REGBOUND1}\\
	\le & 8 \frac{d \log 9 +\log (T/\alpha)}{p_{*}^2} + C \CB \pfactorc \sigma_{\epsilon} \sqrt{d} \dfrac{\sqrt{d+\log(T/\alpha)}}{\kappa_l p_{*}}\sqrt{\kappa_u \log (KT/\alpha)}\sqrt{T}.\nonumber
\end{align} 

Plugging the SGD estimation error \eqref{sgd-est-error} into the regret bound, we have
\begin{align}
	&\sum_{t=1}^T \max _{a \in[K]}\left(x_{t, a}-x_{t, a_{t}}\right)^{T} \theta^{\star}\nonumber\\ 
	\le & 2\RB + \sum_{t=3}^T \sqrt{\kappa_u \log (KT/\alpha)} \frac{\sqrt{(624\log (\log T/\alpha)+1)}r_{\varepsilon, d}\sqrt{d}}{2\kappa_l \zeta p_{*} \sqrt{t}}\nonumber\\
	\le & 2\RB + \frac{\sqrt{(624\log (\log T/\alpha)+1)}r_{\varepsilon, d}\sqrt{d}}{2\zeta \kappa_l p_{*}} \sqrt{\kappa_u \log (KT/\alpha)} \sqrt{T}.\label{REGBOUND2}
\end{align} 
\end{proof}

So now it suffices to prove the Lemmas~\ref{est-ols} and \ref{est-sgd}.

\subsection{Proof of lemma \ref{est-ols}}
\begin{lemma}
    As long as $t>8 \frac{d \log 9 +\log (T/\alpha)}{p_{*}^2}$, the following lower bound
    \begin{align*}
    	 \lambda_{\min} (\sum_{i=1}^t x_{i, a_i}x_{i, a_i}^T) \geq C\cdot \dfrac{t\kappa_lp_{*}}{d},
    \end{align*}
   holds  with probability at least $1-\dfrac{\alpha}{T}$, for some $C$ independent of $d$ and $T$.
    \label{smallest}
\end{lemma}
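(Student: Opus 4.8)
The plan is to write $\lambda_{\min}\big(\sum_{i=1}^t x_{i,a_i}x_{i,a_i}^T\big) = \min_{\lVert v\rVert_2 = 1}\sum_{i=1}^t (x_{i,a_i}^T v)^2$ and lower bound the right-hand side uniformly over the sphere. The crucial structural observation is that the greedy rule pulls $a_i = \argmax_{a} x_{i,a}^T\hat\theta_{i-1} = \argmax_a x_{i,a}^T u_i$ with $u_i = \hat\theta_{i-1}/\lVert\hat\theta_{i-1}\rVert_2$, and conditionally on $\mathcal F_{i-1}$ the direction $u_i$ is fixed while the fresh context is drawn independently from $P_X$; hence conditionally $x_{i,a_i}\sim P_{u_i}$. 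Because Assumption~\ref{as:pstar} holds \emph{uniformly} over all $u\in S^{d-1}_1$, I can apply it with the data-dependent direction $u_i$: for every fixed unit vector $v$, $\Prob\big((x_{i,a_i}^T v)^2 > \kappa_l/d \,\big|\, \mathcal F_{i-1}\big) \ge p_*$. This uniformity over $u$ is exactly what lets me treat the adaptive greedy direction, which is a random function of all the past noisy observations.

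First I would fix a unit vector $v$ and set $Y_i \coloneqq \bm 1\{(x_{i,a_i}^Tv)^2 > \kappa_l/d\}$, so that $\sum_{i=1}^t (x_{i,a_i}^Tv)^2 \ge \tfrac{\kappa_l}{d}\sum_{i=1}^t Y_i$ and $\E[Y_i\mid \mathcal F_{i-1}]\ge p_*$. The differences $Y_i - \E[Y_i\mid\mathcal F_{i-1}]$ form a bounded martingale-difference sequence, so Azuma–Hoeffding gives $\Prob\big(\sum_i Y_i \le \tfrac{t p_*}{2}\big)\le \exp(-t p_*^2/8)$, whence $\sum_i (x_{i,a_i}^Tv)^2 \ge \tfrac{\kappa_l t p_*}{2d}$ with probability at least $1-\exp(-tp_*^2/8)$. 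I would then take a $\tfrac14$-net $\mathcal N$ of $S^{d-1}$ with $\lvert\mathcal N\rvert \le 9^d$ and union bound over it; the failure probability is at most $\exp(d\log 9 - t p_*^2/8)$, which is $\le \alpha/T$ precisely when $t > 8(d\log 9 + \log(T/\alpha))/p_*^2$, matching the stated threshold (the $p_*^2$ dependence is the fingerprint of the additive Azuma bound applied to the indicators). This yields $\min_{v\in\mathcal N}\sum_i (x_{i,a_i}^Tv)^2 \ge \tfrac{\kappa_l t p_*}{2d}$ on the good event.

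Finally I would transfer the bound from the net to the whole sphere. For any unit $v$ with nearest net point $v_0$, $\lvert v^TAv - v_0^TAv_0\rvert \le 2\lVert v - v_0\rVert_2\,\lambda_{\max}(A)$ where $A = \sum_i x_{i,a_i}x_{i,a_i}^T$, and $\lambda_{\max}(A)\le t\CB^2$ since $\lVert x_{i,a_i}\rVert_2\le \CB$. The main obstacle is exactly here: the naive net error $2\cdot\tfrac14\cdot t\CB^2$ is of order $t\CB^2$ and dwarfs the target $\tfrac{\kappa_l t p_*}{2d}\sim t/d$, so a coarse $\tfrac14$-net does not by itself control the \emph{smallest} eigenvalue. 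I would resolve this by refining the net to scale $\epsilon \asymp \kappa_l p_*/(d\CB^2)$ so that the transfer error is at most half the target, absorbing the extra $\log(d\CB^2/(\kappa_l p_*))$ covering factor into the constant $C$ (alternatively, a peeling/self-bounding argument on $v^*=\argmin_v v^TAv$ sidesteps the $\lambda_{\max}$ term altogether). This gives $\lambda_{\min}(A) \ge C\,\tfrac{t\kappa_l p_*}{d}$ on an event of probability at least $1-\alpha/T$, as claimed.
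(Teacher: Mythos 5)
Your proposal follows the same two-step strategy as the paper's own proof: (i) for each fixed unit vector $v$, lower-bound $\sum_{i=1}^t (x_{i,a_i}^Tv)^2$ by $\tfrac{\kappa_l}{d}\sum_i \bm 1\{(x_{i,a_i}^Tv)^2>\kappa_l/d\}$ and apply Azuma--Hoeffding to the bounded martingale differences, using Assumption~\ref{as:pstar} conditionally on the (data-dependent) greedy direction; then (ii) union bound over a $\tfrac14$-net of $S^{d-1}_1$ of cardinality $9^d$, which is exactly how the threshold $t>8(d\log 9+\log(T/\alpha))/p_*^2$ arises in both your argument and the paper's (the paper keeps a free deviation parameter $s$ and sets $s=\sqrt{2(d\log 9+\log(T/\alpha))/t}$, you take $s=p_*/2$; these are equivalent).

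Where you depart from the paper is the net-to-sphere transfer, and your concern there is not merely legitimate --- it exposes an actual error in the paper's proof. The paper asserts $\lambda_{\min}(A)\geq \tfrac{1}{1-2\varepsilon}\inf_{x\in\mathcal N_\varepsilon}x^TAx$ and concludes $\lambda_{\min}(A)\geq 2\inf_{x\in\mathcal N_{1/4}}x^TAx$; this inequality is false (take $A=I_d$: it asserts $1\geq 2$). The correct transfer is subtractive, $\lambda_{\min}(A)\geq \inf_{x\in\mathcal N_\varepsilon}x^TAx-2\varepsilon\,\lambda_{\max}(A)$, and since $\lambda_{\max}(A)$ can be of order $t\CB^2$ while the target is of order $t\kappa_lp_*/d$, a coarse $\tfrac14$-net cannot control the smallest eigenvalue --- precisely the obstacle you name. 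Your repair via a net at scale $\epsilon\asymp \kappa_lp_*/(d\CB^2)$ is sound, but one correction to your bookkeeping: the extra covering factor enters the union-bound \emph{exponent}, so it cannot be absorbed into the constant $C$ of the conclusion; it enlarges the burn-in hypothesis from $8(d\log 9+\log(T/\alpha))/p_*^2$ to order $(d\log(d\CB^2/(\kappa_lp_*))+\log(T/\alpha))/p_*^2$. Thus you prove a slightly weaker lemma than the one literally stated (the downstream regret bounds are affected only in constants and logarithmic factors). A cleaner repair, which the paper itself uses in the multi-parameter analysis, is to drop nets entirely and apply the matrix Chernoff bound for adapted sums of positive semidefinite matrices (Lemma~\ref{Tropp lemma}): Assumption~\ref{as:pstar} plus Markov's inequality gives $\lambda_{\min}(\E[x_{i,a_i}x_{i,a_i}^T\mid\mathcal F_{i-1}])\geq \kappa_lp_*/d$, and with $R^2=\CB^2$ one obtains $\lambda_{\min}(\sum_i x_{i,a_i}x_{i,a_i}^T)\geq t\kappa_lp_*/(2d)$ with failure probability $d\exp(-c\,t\kappa_lp_*/(d\CB^2))$, i.e.\ only a $\log d$ penalty, at the price of a burn-in threshold of order $(d\CB^2/(\kappa_lp_*))\log(dT/\alpha)$.
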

\begin{proof}
	Define  $\mathcal F^{-}_t$ as the filtration generated by $\{x_{i,a_i}\}_{i\in [t-1]}$, $\{\epsilon_{i}\}_{i\in[t-1]}$ and the randomness from $\{\psi^{OLS}_i\}_{i\in [t-1]}$. By greedy algorithm, in each time $i$, $x_{i, a_i}$ is selected as $a_i= \text{argmax}_{a\in [K]} x_{i,a}^T\hat{\theta}_{i-1}$. Thus by the Assumption~\ref{as:pstar},  we have for any $0<s<p_{*}$, \begin{align*}
    &\mathbb P(\sum_{i=1}^t (x_{i, a_i}^T v)^2< {t\kappa_l (p_*-s)/d})  \\
	&\leq   \mathbb P(\sum_{i=1}^t\bm 1\{(x_{i, a_i}^Tv)^2 >\kappa_l/d\}< {t (p_*-s)})\\
    &\leq \mathbb P(\dfrac{1}{t}\sum_{i=1}^t(\bm 1\{(x_{i, a_i}^Tv)^2 >\kappa_l/d\}-\E[ \bm 1\{(x_{i, a_i}^Tv)^2 >\kappa_l/d\} \lvert \mathcal F^{-}_{i}]))< -s)\\
    &\leq \exp(-\dfrac{s^2t}{2}),
\end{align*}
where in the last inequality we use the Azuma–Hoeffding's inequality for bounded martingale-difference sequence (see Corollary~2.20 in \cite{wainwright2019high}). 

   For every $d\times d$ positive-definite matrix $A$, with an abuse of notation, we denote $\mathcal N_\varepsilon$ as the $\varepsilon$-net of $S^{d-1}_1$ for some $\varepsilon>0$ to be determined,
    \begin{align*}
    	\lambda_{\max}(A)\leq \dfrac{1}{1-2\varepsilon} \sup_{x\in \mathcal N_\varepsilon} x^TAx,
    \end{align*}
    which then implies    
    \begin{align*}
    	\lambda_{\min}(A) = -\lambda_{\max}(-A)\geq \dfrac{-1}{1-2\varepsilon}\sup_{x\in \mathcal N_\varepsilon} x^T(-A)x = \dfrac{1}{1-2\varepsilon}\inf_{x\in \mathcal N_\varepsilon} x^TAx. 
    \end{align*} By choosing $\varepsilon = 1/4$, we can find an $\varepsilon$-net $\mathcal{N}_{\varepsilon}$ with cardinality $\lvert \mathcal{N}_{\varepsilon} \rvert\le 9^d$. Therefore
    \begin{align*}
    	\lambda_{\min} (A) \ge 2 \inf_{x\in \mathcal N_\varepsilon} x^TAx.
    \end{align*}
	Note that
    \begin{align*}
    	\mathbb P(\min_{\lVert v\rVert = 1}\sum_{i=1}^t (x_{i,a_i}^Tv)^2<2 t\kappa_l (p_{*}-s)/d) 
    	&\leq \mathbb P(\sum_{i=1}^t (x_{i,a_i}^Tv)^2< t\kappa_l (p_{*}-s)/d,\exists v\in \mathcal{N}_\varepsilon)\\
    	&\leq  9^d \exp(-\frac{s^2t}{2}).
    \end{align*}
    
    By setting $s= \sqrt{\frac{2 d \log 9 +2\log ({T}/{\alpha})}{t}}$, we have when $t>8 \frac{d \log 9 +\log (T/\alpha)}{p_{*}^2}$ with probability at least $1-\dfrac{\alpha}{T}$, 
		\begin{align*}
    	\lambda_{\min} (\sum_{i=1}^t x_{i, a_i}x_{i, a_i}^T) = \min_{\lVert v\rVert =1 }\sum_{i=1}^t\langle x_{i, a_i},v\rangle^2 \geq \dfrac{\kappa_l p_* t }{d}.
		\end{align*}
\end{proof}

\begin{proof}[Proof of Lemma~\ref{est-ols}]\quad\\
	By lemma \ref{smallest} we know that with probability at least $1-\dfrac{\alpha}{T}$,  
    \begin{align*}
    	 \lambda_{\min} (\sum_{i=1}^t x_{i, a_i}x_{i, a_i}^T) \geq C_1 \kappa_l p_{*} t/d,
    \end{align*}
    for some $C_1$ independent of $d, K$ and $T$.	
	
	Since $\{W_i\}_{i\in[t]}$ are independent, therefore by concentration bounds for Wigner matrix we have with probability at least $1-\frac{\alpha}{T}$,
	\begin{align*}
    	\lVert \sum_{i=1}^{t} W_i \rVert^2 \le C_2t\pfactorc^2 (d + \log (T/\alpha)),
    \end{align*}
    for some $C_2$ independent of $d, K$ and $T.$ However, it is important to note that the perturbation of privacy noise matrix $\sum_{i=1}^{t} W_i$ may destroy the positive definite property of the Gram matrix $\sum_{i=1}^t x_{i,a_i}x_{i,a_i}^T$ when t is still small. Therefore, we shift $\sum_{i=1}^{t} W_i$ by adding $\tilde{c}  \sqrt{t} I_d$ where $\tilde{c} \coloneqq C_2\pfactorc (\sqrt{d} + \sqrt{\log (T/\alpha)})$.
    
    We denote $A_t \coloneqq \sum_{i=1}^t (x_{i, a_i} x_{i, a_i}^T + W_i) + \tilde{c} \sqrt{t}I$. Therefore, by Weyl's inequality we have with probability at least $1-\frac{\alpha}{T}$,
    \begin{align*}
    	\lambda_{\min}(A_t) = \lambda_{\min}\left ( \sum_{i=1}^t (x_{i, at_i} x_{i, a_i}^T + W_i) + \tilde{c} \sqrt{t} I_d\right )\ge \lambda_{\min}\left ( \sum_{i=1}^t x_{i, a_i}x_{i, a_i}^T \right ) \ge C_1 \kappa_l p_{*}t/d.
    \end{align*}
	So now we we study the OLS estimator with $x_{i,a_i},\epsilon_i$ given above and $r_i=x_{i,a_i}^T\theta^{\star}+\epsilon_i$. In that case, the estimation error of the OLS estimator under LDP constraints at time $t$ is given by 
	\begin{align*}
    	\hat{\theta}_t - \theta^{\star} &= A_t^{-1} \sum_{i=1}^t(x_{i, a_i} r_i+ \xi_i) - \theta^{\star}\\
    	&= A_t^{-1}\sum_{i=1}^t (x_{i, a_i} x_{i, a_i}^T\theta^{\star} + x_{i, a_i}\epsilon_i + \xi_i) - \theta^{\star}\\
    	& = A_t^{-1}  ( \sum_{i=1}^tx_{i, a_i} \epsilon_i ) - A_t^{-1} \sum_{i=1}^t W_i \theta^{\star}+ A_t^{-1} \sum_{i=1}^t \xi_i -  \tilde{c} \sqrt{t} A_t^{-1}\theta^{\star}.
  \end{align*}

  Define  $\mathcal F_t$ as the filtration generated by $\{x_{i,a_i}\}_{i\in [t]}$, $\{\epsilon_{i}\}_{i\in[t-1]}$ and the randomness from $\{\psi_i\}_{i\in [t-1]}$. Notice that for every unit vector $u$, \begin{align*}
	\E[\exp(\lambda \sum_{i=1}^t u^Tx_{i, a_i}\epsilon_i )] &=
	\E[\E [\exp(\lambda \sum_{i=1}^t u^Tx_{i, a_i}\epsilon_i)\lvert \mathcal F_{t}]]\\
	&=\E[\prod_{i=1}^{t-1}\exp(\lambda  u^Tx_{i, a_i}\epsilon_i )\E [\exp(\lambda  u^TX_i\epsilon_i )\lvert \mathcal F_{t}]]\\
	&\overset{(1)}{\leq} \exp(\dfrac{\lambda^2\CB^2\sigma_{\epsilon}^2}{2}) \E[\prod_{i=1}^{t-1}\exp(\lambda  u^Tx_{i, a_i}\epsilon_i )]\\
	&\overset{(2)}{\leq} \exp(\dfrac{\lambda^2\CB^2\sigma_{\epsilon}^2t}{2}).
\end{align*}
Inequality (2) is due to the mathematical induction using the same technique in the equality (1). Thus $\sum_{i=1}^t x_{i, a_i}\epsilon_i$ is $\sigma^2\CB^2t$-sub-gaussian vector, and by the concentration of norm for sub-gaussian vectors, we have then with probability at least $1-\frac{\alpha}{T}$, 
\begin{align*}
	\lVert \sum_{i=1}^tx_{i,a_i}\epsilon_i\rVert^2\leq C_3\sigma_{\epsilon}^2\CB^2t (d+ \log(T/\alpha)),
\end{align*}
    where $C_3$ is a positive constant independent of $d$, $K$ and $T$.
    
    Therefore, 
\begin{align}
	\begin{aligned}
    	\lVert A_t^{-1}  ( \sum_{i=1}^tx_{i, a_i} \epsilon_i )\rVert^2 & \le \lVert A_t^{-1}\rVert^2 \lVert  ( \sum_{i=1}^tx_{i, a_i} \epsilon_i )\rVert^2\\
    	& \le \frac{C_3\sigma^2\CB^2d^2t (d+ \log(T/\alpha))}{(C_1\kappa_l p_{*} t)^2}.
		\label{ols-part1}
	\end{aligned}
\end{align}
Moreover, 
\begin{align}
	\begin{aligned}
	\lVert A_t^{-1} \sum_{i=1}^t W_i \theta^{\star} \rVert^2 & \le \lVert A_t^{-1} \rVert^2 \lVert \sum_{i=1}^t W_i\rVert^2 \lVert \theta^{\star} \rVert^2\\
	& \le \lVert A_t^{-1} \rVert^2 \lVert \sum_{i=1}^t W_i\rVert^2\\
	& \le \frac{C_2t\pfactorc^2 (d + \log (T/\alpha))}{(C_1\kappa_l p_{*} t)^2},
	\label{ols-part2}
	\end{aligned}
\end{align}
where the second inequality is from the assumption that $\lVert \theta^{\star} \rVert \le 1$.
	
	Third, Since $\xi_i$ are random vector with independent, sub-gaussian coordinates that satisfy $\mathbb{E}\xi_{i,j}^2 = \pfactorc^2$, $\sum_{i=1}^t \xi_i$ is a random vactor with independent sub-gaussian coordinates that satisfy $\mathbb{E}\sum_{i=1}^t \xi_{i,j}^2 = t\pfactorc^2$. Therefore for all $t \in [T]$, with probability at least $1-\frac{\alpha}{T}$,
    \begin{align*}
    	\lVert \sum_{i=1}^t \xi_i  \rVert^2\le C_4 t\pfactorc^2 (d + \log (T/\alpha)),
	\end{align*}
	for some positive constant $C_4$ independent of $d$, $K$ and $T$.
    Therefore, 
	\begin{align}
		\begin{aligned}
    	\lVert A_t^{-1} \sum_{i=1}^t \xi_i \rVert^2 \le \frac{C_4 t\pfactorc^2d^2 (d + \log (T/\alpha))}{(C_2\kappa_l p_{*} t)^2}.
		\label{ols-part3}
		\end{aligned}
    \end{align}

	Lastly, 
	\begin{align}
		\begin{aligned}
			\lVert \tilde{c} \sqrt{t} A_t^{-1}\theta^{\star}\rVert^2\le \dfrac{\tilde{c}^2t}{(C_2\kappa_l p_{*} t)^2},
			\label{ols-part4}
		\end{aligned}
	\end{align}
	 holds with probability at least $1-\frac{\alpha}{T}$. Plugging all bounds \eqref{ols-part1} \eqref{ols-part2} \eqref{ols-part3} and \eqref{ols-part4} together we get then with probability at least $1- \frac{\alpha}{T}$,
    \begin{align*}
    	\lVert \hat{\theta}_t-\theta^{\star}\rVert^2\leq C_5 \sigma_{\epsilon}^2 \CB^2 \pfactorc^2d^2 \dfrac{d+\log(T/\alpha) }{\kappa_l^2 p_{*}^2 t},
    \end{align*}
    for some positive constant $C_5$ independent of $d, K$ and $T.$
\end{proof}

\begin{proof}[Proof of Lemma~\ref{est-sgd}]
	Denote $g_t$ as the gradient at time t, $\hat{g}_{t}\coloneqq \Psi_{\varepsilon}[(\mu(x_{t,a_t}^T\hat\theta_{t})-r_t)x_{t,a_t}]$ is the LDP private estimator of $g_t$ and $\hat{z}_t = g_t - \hat{g}_t$. By the unbiasedness of $\Psi_{\varepsilon}$ in Lemma~\ref{privacy-bounded-vector} we have 
	\begin{align*}
		& \E[ \Psi_\varepsilon((\mu(x_{t,a_t}^T\hat\theta_{t-1})-r_t) x_{t,a_t})^T(\hat\theta_{t-1}-\theta^{\star})  \lvert \mathcal F_{t-1}] \\
		 = & \E[(\mu(x_{t,a_t}^T\hat\theta_{t-1})-\mu(x_{t,a_t}^T\theta^{\star}))x_{t, a_t}^T(\hat\theta_{t-1}-\theta^{\star} )\lvert \mathcal F_{t-1} ]\\
		\geq & \zeta \E[[x_{t,a_t}^T(\hat\theta_{t-1}-\theta^{\star})]^2 \lvert \mathcal F_{t-1} ]\geq \zeta \kappa_l p_*/d \lVert \hat\theta_{t-1}-\theta^{\star}\rVert^2, 
	\end{align*}
	where the last inequality is from Lemma \ref{smallest} and Markov's inequality $\lambda_{\min}(\E_{x_{a_t}}[x_{a_t}x_{a_t}^T \lvert \mathcal F_{t-1}])\ge \kappa_l p_{*}/d$.
	Moreover, notice that $\lVert \hat{g}_t \rVert= r_{\varepsilon,\delta}$. 
Let $\lambda \coloneqq 2 \kappa_l \GB p_{*}/d$ and $\eta_t = \frac{1}{\lambda t}$, 
\begin{align*}
	\lVert \hat{\theta}_{t}-\theta^{\star}\rVert^2 &= \lVert \hat{\theta}_{t-1}-\eta_t\hat g_t-\theta^{\star}\rVert^2\\
	& =  \lVert  \hat{\theta}_{t-1}-\theta^{\star}\rVert^2-2\eta_t \hat{g}_t^T (\hat{\theta}_{t-1}-\theta^{\star})+\eta_t^2 \lVert \hat g_t\rVert^2\\
	& = \lVert  \hat{\theta}_{t-1}-\theta^{\star}\rVert^2-2\eta_t {g}_t^T (\hat{\theta}_{t-1}-\theta^{\star})+ 2\eta_t \hat{z}_t^T (\hat{\theta}_{t-1}-\theta^{\star}) + \eta_t^2 \lVert \hat g_t\rVert^2\\
	& \le (1-2\lambda \eta_t )\lVert  \hat{\theta}_{t-1}-\theta^{\star}\rVert^2 + 2\eta_t \hat{z}_t^T (\hat{\theta}_{t-1}-\theta^{\star}) + \eta_t^2 \lVert \hat g_t\rVert^2\\
	& \le \left(1-\frac{2}{t}\right)\lVert  \hat{\theta}_{t-1}-\theta^{\star}\rVert^2+\frac{2}{\lambda t} \hat{z}_t^(\hat{\theta}_{t-1}-\theta^{\star})+\left(\frac{r_{\varepsilon,d}}{\lambda t}\right)^{2} .
\end{align*}
It follows from the same proof as in Proposition~1 in \cite{rakhlin2011making}, we can obtain for any $0<\alpha\le \frac{1}{eT}$, $T\ge 4$ and for all $3 \le t\le T$, with probability at least $1-\alpha$, 
\begin{align*}
	\lVert \hat{\theta}_t - \theta^{\star} \rVert^2 \le \frac{(624\log (\log (T)/\alpha)+1)r_{\varepsilon, d}^2 d^2}{4\kappa_l^2 \zeta^2 p_{*}^2 t}.
\end{align*}
\end{proof}

\subsection{Proof of Problem-dependent Bound}
To prove the problem-dependent bound, we need only combine Lemma~\ref{est-ols} and Lemma~\ref{est-sgd} together with the following lemma.
\begin{lemma} Under the $(\beta,\gamma)$-margin condition, if we have $\lVert \hat{\theta}_t-\theta^{\star}\rVert\leq \dfrac{U_0}{\sqrt{t}}$ holds uniformly for all $t_0\leq t\leq T_0$ for some $t_0$ and $U_0$ with probability at least $1-\alpha$, we have then with probability at least $1-2\alpha$,
	\begin{align*}
	\text{Reg}(T)\leq C\cdot  \left\{\begin{array}{ll}
		c_rt_0+\gamma (LC_BU_0)^2(\log T + o(1)), & \text{ }\beta = 1\\
		c_rt_0+\frac{2\gamma}{1-\beta}(L\CB U_0)^{1+\beta}(T^{\frac{1-\beta}{2}}+o(1)),	&\text{ }0\leq \beta<1.
	\end{array}\right.
\end{align*}
\end{lemma}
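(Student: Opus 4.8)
The plan is to split the horizon into a short initial phase, bounded trivially, and a tail phase where the $(\gamma,\beta)$-margin condition converts the per-round estimation error into a controllably small per-round regret. Concretely, write $\text{Reg}(T)=\sum_{t\le t_0}r_t+\sum_{t>t_0}r_t$ with instantaneous regret $r_t:=\mu(x_{t,a^*_t}^T\theta^\star)-\mu(x_{t,a_t}^T\theta^\star)$. Each reward, and hence each $r_t$, is at most $c_r$, so the first sum contributes at most $c_r t_0$; all the effort goes into the tail sum, which I will show behaves like $\gamma\sum_{t>t_0}\delta_t^{1+\beta}$ for an appropriate deterministic radius $\delta_t$.

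The core step is a pathwise bound on the good event $E_1=\{\lVert\hat\theta_{t-1}-\theta^\star\rVert\le U_0/\sqrt{t-1}\ \forall t\}$. Whenever $a_t\neq a^*_t$, the greedy rule $a_t=\argmax_a x_{t,a}^T\hat\theta_{t-1}$ gives $(x_{t,a^*_t}-x_{t,a_t})^T\hat\theta_{t-1}\le 0$; adding and subtracting $\hat\theta_{t-1}$, then invoking monotonicity and $L$-Lipschitzness of $\mu$ together with $\lVert x_{t,a}\rVert\le C_B$, yields $r_t\le 2LC_B\lVert\hat\theta_{t-1}-\theta^\star\rVert$. Since $\mu(x_{t,a_t}^T\theta^\star)\le\max_{j\neq a^*_t}\mu(x_{t,j}^T\theta^\star)$ when $a_t\neq a^*_t$, we also have $\triangle_t\le r_t$, so the event $\{a_t\neq a^*_t\}$ forces $\triangle_t\le 2LC_B\lVert\hat\theta_{t-1}-\theta^\star\rVert$. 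Setting $\delta_t:=2LC_BU_0/\sqrt{t-1}$, on $E_1$ this gives the clean bound $r_t\le\delta_t\,\bm 1\{\triangle_t\le\delta_t\}=:Y_t$.

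Next I would bound $\sum_{t>t_0}Y_t$ in high probability. The decisive observation is that $\triangle_t$ is a function of the fresh i.i.d.\ context $X_t$ alone, hence independent of the adaptively chosen $\hat\theta_{t-1}$, while $\delta_t$ is deterministic; thus the $Y_t$ are independent with $0\le Y_t\le\delta_t$. The margin condition gives $\E Y_t=\delta_t\,\Prob(\triangle_t\le\delta_t)\le\gamma\delta_t^{1+\beta}$ once $t_0$ is large enough that $\delta_{t_0}\le b$, so $\sum_{t>t_0}\E Y_t\le\gamma(2LC_BU_0)^{1+\beta}\sum_{t>t_0}(t-1)^{-(1+\beta)/2}$, whose leading behaviour is $\gamma(LC_BU_0)^2\log T$ for $\beta=1$ and $\tfrac{2\gamma}{1-\beta}(LC_BU_0)^{1+\beta}T^{(1-\beta)/2}$ for $0\le\beta<1$ (the $2^{1+\beta}$ absorbs into $C$). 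A Bernstein/Freedman inequality for these independent summands then gives $\sum_{t>t_0}Y_t\le\sum_{t>t_0}\E Y_t+(\text{deviation})$ with probability $\ge 1-\alpha$; since $\sum_t\mathrm{Var}(Y_t)\le\gamma\sum_t\delta_t^{2+\beta}$ converges in $T$ for $\beta>0$ (and is $O(\log T)$ for $\beta=0$), the deviation is of strictly lower order than the diverging mean and is folded into the $o(1)$ factor. A union bound over $E_1$ and the concentration event yields the stated bound with probability $\ge 1-2\alpha$.

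The main obstacle is precisely the maneuver in the third step: one must justify replacing the data-dependent indicator $\bm 1\{a_t\neq a^*_t\}$ by the context-only indicator $\bm 1\{\triangle_t\le\delta_t\}$ so that the summands decouple from the adaptively trained estimator, and then verify that the concentration deviation and the boundary corrections of the power sum $\sum(t-1)^{-(1+\beta)/2}$ are genuinely dominated by the leading term. The $\beta=1$ versus $0\le\beta<1$ split in the bracketed factor is exactly the two regimes of this power sum, so the bookkeeping there is what produces the two cases in the statement.
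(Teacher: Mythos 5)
Your proposal is correct and follows essentially the same route as the paper's proof: split the horizon at $t_0$, use greediness plus Lipschitz-monotonicity of $\mu$ to bound the instantaneous regret by $\delta_t\,\bm 1\{\triangle_t\leq \delta_t\}$ on the good estimation event, exploit independence of the fresh contexts together with the margin condition to bound the expected sum, and finish with a concentration inequality and a union bound. The only differences are cosmetic — the paper uses Hoeffding where you invoke Bernstein/Freedman, and you are slightly more careful about the $\hat{\theta}_{t-1}$ indexing and the requirement $\delta_{t_0}\leq b$ — neither of which changes the argument.
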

\begin{proof}
	We have, with probability at least $1-\alpha$, 
	\begin{align*}
		\text{Reg}(T)&\leq  2\RB t_0 + (\mu(x_{t,a_t^*}^T\theta^{\star})-\mu(x_{t,a_t}^T\theta^{\star})) \bm 1\{\lVert \hat{\theta}_t-\theta^{\star}\rVert\leq \dfrac{U_0}{\sqrt{t}}, \triangle_t\leq \dfrac{2L\CB U_0}{\sqrt{t}}  \}\\
		&\leq 2\RB t_0+2L\CB\dfrac{U_0}{\sqrt{t}}\bm 1\{\triangle_t\leq \dfrac{2L\CB U_0}{\sqrt{t}} \}.
	 \end{align*}
	 Denote $A_t: = \dfrac{1}{\sqrt{t}} \bm 1\{\triangle_t\leq \dfrac{2L\CB U_0}{\sqrt{t}} \}$, by Hoeffding's inequality we have with probability at least $1-\alpha$, 
	 \begin{align*}
	 	\sum_t A_t<\sum_t\E[A_t]+\sqrt{\log T\log \dfrac{1}{\alpha}}.
	 \end{align*}
	 Noting that $\E[\sum_tA_t] \leq  2\gamma L\CB U_0\log T $ for $\beta = 1$ and $\E[\sum_tA_t]\leq  \dfrac{2\gamma}{1-\beta} (L\CB U_0 )^{\beta}T^{\frac{1-\beta}{2}} $ for $0\le \beta<1$. Then the claim holds.
\end{proof}

\section{Proof of Results in Section \ref{sec:multi-regret}}
To lighten the notation, in this section we denote  $\theta_i$ the underlying parameter of arm i. In the following analysis, without special explaination, all the $c$ and $C$ denote absolute constants. Sometimes we state the inequality of type $A_1\leq C\log(A_2/\alpha) A_3$ holds with probability at least $1-\alpha$ while in proof we derive the results hold with $1-c\alpha$ for some constant c. In fact, they are equivalent by re-scaling $\alpha$ and changing $C$ to some larger constant.

\subsection{Proof of Theorem~\ref{mainthmsm} }
\begin{lemma}
	\label{warmupstageh} 
	If after the warm up stage of length $Ks_0$, the estimator $\hat{\theta}_{Ks_0,i}$ achieves the following error bound with probability at least $1-\alpha$,
	\begin{align*}
		\sup_{i\in [K]}\lVert \hat{\theta}_{Ks_0,i}-\theta_{i}\rVert\leq h_0\coloneqq  \dfrac{h_{sub}}{8L\CB},
	\end{align*}
	With $h=h_{sub}$ in Algorithm~\ref{algo:LDP-Multi}, we have $\mathbb P\{a_t^*\in\hat{K}_t,\hat{K}_t\cap K_{sub}=\emptyset\}\geq 1-\alpha$ holds uniformly for all $Ks_0<t\le T.$	
\end{lemma}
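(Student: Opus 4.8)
The plan is to reduce the entire statement to a deterministic argument carried out on a single good event. Let $E$ be the event $\{\sup_{i\in[K]}\lVert\hat\theta_{Ks_0,i}-\theta_i\rVert\le h_0\}$, which by hypothesis has $\mathbb P(E)\ge 1-\alpha$. The structural fact I would exploit is that the elimination set $\hat K_t$ in \eqref{Elimination} is built only from the warm-up estimators $\{\hat\theta_{Ks_0,a}\}_{a\in[K]}$, which are frozen at time $Ks_0$, together with the current context $X_t$. Hence, if I can show on $E$ that both $a_t^*\in\hat K_t$ and $\hat K_t\cap K_{\text{sub}}=\emptyset$ hold for an arbitrary fixed context $x$ with $\lVert x\rVert\le\CB$, then they hold simultaneously for every round $Ks_0<t\le T$ on the single event $E$, with no additional union bound over $t$. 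This is exactly what yields the uniform-in-$t$ conclusion with probability at least $1-\alpha$.

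First I would bound the per-arm prediction error. On $E$, Cauchy--Schwarz together with $\lVert X_t\rVert\le\CB$ gives $\lvert X_t^T(\hat\theta_{Ks_0,a}-\theta_a)\rvert\le\CB h_0=h_{\text{sub}}/(8L)$ for every arm $a$, where $L$ is the Lipschitz constant of $\mu$. Assumption~\ref{assump:link} then lifts this to the reward scale: since $\mu$ is $L$-Lipschitz, $\lvert\mu(X_t^T\hat\theta_{Ks_0,a})-\mu(X_t^T\theta_a)\rvert\le h_{\text{sub}}/8$ for every $a$. Because $\mu$ is strictly increasing (its derivative is bounded below by $\GB>0$), the arm ranking induced by the linear predictors $X_t^T\hat\theta_{Ks_0,a}$ coincides with the ranking by $\mu(X_t^T\hat\theta_{Ks_0,a})$, so the retention rule may be analysed directly on the reward scale; this is where the factor $L$ in the calibration $h_0=h_{\text{sub}}/(8L\CB)$ earns its keep.

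To prove $a_t^*\in\hat K_t$, recall $a_t^*=\argmax_a\mu(X_t^T\theta_a)$. Writing $\hat a$ for the arm maximising the estimated value, the two-sided $h_{\text{sub}}/8$ error bound yields $\mu(X_t^T\hat\theta_{Ks_0,\hat a})-\mu(X_t^T\hat\theta_{Ks_0,a_t^*})\le 2\cdot(h_{\text{sub}}/8)=h_{\text{sub}}/4<h/2$, so $a_t^*$ clears the retention threshold. To prove $\hat K_t\cap K_{\text{sub}}=\emptyset$, fix any $i\in K_{\text{sub}}$; by Assumption~\ref{diversecondition} its true gap to the best arm exceeds $h_{\text{sub}}$ for every context, and subtracting the two $h_{\text{sub}}/8$ errors leaves an estimated gap strictly above $h_{\text{sub}}-h_{\text{sub}}/4=3h_{\text{sub}}/4>h/2$, so $i$ is eliminated. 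With the prescribed $h=h_{\text{sub}}$ the slack $h_{\text{sub}}/4$ from the first claim and the margin $3h_{\text{sub}}/4$ from the second straddle the common threshold $h/2=h_{\text{sub}}/2$, closing both inclusions on $E$.

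The step I expect to require the most care is the bookkeeping between the two scales: the elimination in \eqref{Elimination} is phrased on the linear predictor $X_t^T\hat\theta_{Ks_0,a}$, whereas the gap hypothesis of Assumption~\ref{diversecondition} lives on the $\mu$-transformed values. Keeping the $L$-Lipschitz conversion consistent, and verifying that the retention slack and the elimination margin sit on the correct sides of $h/2$ under the calibrated $h_0=h_{\text{sub}}/(8L\CB)$, is the delicate part; once the single warm-up event $E$ is fixed, everything else is a deterministic, context-by-context inequality, which is precisely what removes any dependence on $t$ and on the randomness accumulated after the warm-up.
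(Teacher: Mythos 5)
Your proposal follows the same route as the paper's own proof: fix the single warm-up event $E$, observe that $\hat K_t$ is a deterministic function of the frozen estimators $\{\hat\theta_{Ks_0,a}\}_{a\in[K]}$ and the current context, and verify the two inclusions by a context-by-context inequality using the $h_{\mathrm{sub}}/8$ error on the reward scale, which yields the retention slack $h_{\mathrm{sub}}/4$ and the elimination margin $3h_{\mathrm{sub}}/4$ on either side of the threshold $h/2$. These are exactly the computations in the paper, and your uniformity-in-$t$ mechanism (no union bound over rounds; conditional on $E$ the failures are impossible) is also the paper's.

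The one step that does not hold up is your bridge between the two scales. You argue that, because $\mu$ is strictly increasing, ``the retention rule may be analysed directly on the reward scale.'' Monotonicity preserves the \emph{ranking} of the arms, but \eqref{Elimination} is not a ranking rule: it is an additive threshold rule, and monotone maps do not preserve additive gaps. Quantitatively, on $E$ the estimated linear-predictor gap between the empirically best arm and $a_t^*$ is only bounded by $2\CB h_0 = h_{\mathrm{sub}}/(4L)$, which exceeds the threshold $h/2=h_{\mathrm{sub}}/2$ whenever $L<1/2$; conversely, converting your reward-scale bound $h_{\mathrm{sub}}/4$ back to the linear scale via $\mu'\ge \GB$ costs a factor $1/\GB$ and requires $\GB>1/2$. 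So under the literal rule \eqref{Elimination}, your argument --- and indeed the lemma itself --- only goes through under extra conditions on $L$ or $\GB$: for small $L$ one can place two near-tied optimal arms whose estimation errors are within $h_0$ yet whose estimated linear predictors differ by more than $h/2$, eliminating $a_t^*$. To be fair, the paper's own proof has the same tension: it silently analyses the event $\mu(X_t^T\hat\theta_{Ks_0,a_t^*})<\max_{a}\mu(X_t^T\hat\theta_{Ks_0,a})-h/2$, i.e.\ it reads \eqref{Elimination} with $\mu$ applied to both sides, which is the reading under which your reward-scale computation is valid and the two proofs coincide. What is missing in yours is a correct justification of that passage; the ranking-coincidence argument you offer is not one, and no argument based only on monotonicity can be, since the two versions of the rule genuinely differ (already for $\mu(x)=cx$ they differ by rescaling the threshold to $h/(2c)$).
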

\begin{proof}
	Firstly, to show  $a_t^*\in \hat{K}_t$,  without loss of generality we assume that $a_t^*\neq 1$,  and $\text{argmax}_{i\in [K]}\mu(X_t^T\hat{\theta}_{Ks_0,i}) = 1$. Then by the optimality of $\theta_{a_t^*}$, condition on $\sup_{i\in [K]}\lVert \hat{\theta}_{Ks_0,i}-\theta_{i}\rVert\leq h_0$,
	\begin{align*}
		\mathbb{P}(a_t^*\notin \hat{K}_t) &= \mathbb{P}(\mu(X_t^T\hat\theta_{Ks_0,a_t^*} )<\mu(X_t^T\hat\theta_{Ks_0,1})-h/2 )\\
		&\leq \mathbb{P}(\mu(X_t^T\theta_{a_t^*} )-h/8<\mu(X_t^T\theta_{1})+h/8-h/2 ) = 0.
	\end{align*} 
	Now for any $j\in K_{sub}$, we have condition on $\sup_{i\in [K]}\lVert \hat{\theta}_{Ks_0,i}-\theta_{i}\rVert\leq h_0$,
	\begin{align*}
		\mathbb{P}(j\in \hat{K}_t) &\leq  \mathbb{P}(\mu(X_t^T\hat\theta_{Ks_0,a_t^*} )-h/2<\mu(X_t^T\hat\theta_{Ks_0,j}) )\\
		&\leq \mathbb{P}(\mu(X_t^T\theta_{a_t^*} )-3h/4<\mu(X_t^T\theta_{j})+h/4 ) = 0,
	\end{align*}
	where the final equation is due to the sub-optimality gap assumed in Assumption~\ref{diversecondition}.
\end{proof}

\begin{proof}[Proof of Theorem~\ref{mainthmsm}]
We first show the following lemma, which converts the regret bound under margin condition to the estimation error bound:
\begin{lemma}
	\label{marginlemma}  
	Under the $(\beta,\gamma)$-margin condition, given $h_0$ defined in Lemma~\ref{warmupstageh}, suppose there exists some $s_0$ such that with a warm up stage of length $Ks_0$, $\sup_{i\in [K]}\lVert \hat{\theta}_{t,i}-\theta_i\rVert\leq h_0$,  and there exists some $t_0,U_0(\alpha)$ such that with probability at least $1-\alpha$, 
	\begin{align*}
	\sup_{i\in K_{opt}} \lVert \hat{\theta}_{t,i}-\theta_i\rVert \leq \dfrac{U_0(\alpha) }{\sqrt{t}}, \quad \forall t_0\leq t\leq T.
\end{align*}
Then, we have with probability at least $1-2\alpha$, for some constant $C$, 
\begin{align*}
	\text{Reg}(T)\leq C \cdot \left\{
		\begin{array}{ll}
			\RB t_0+ \gamma (L\CB  U_0(\alpha))^2(\log T +o(1)), & \beta = 1\\
			\RB  t_0+ \dfrac{\gamma}{1-\beta} (L\CB {U_0(\alpha)})^{1+\beta} (T^{\frac{1-\beta}{2}} +o(1)), & 0< \beta <  1.
		\end{array}
\right.
\end{align*}
\end{lemma}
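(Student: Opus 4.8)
The plan is to reduce the multi-parameter regret to exactly the same per-round expression that drives the single-parameter lemma, the one genuinely new ingredient being the elimination step. First I would work on two events. Let $\mathcal E_1$ be the event on which the warm-up estimate obeys $\sup_{i\in[K]}\lVert\hat\theta_{Ks_0,i}-\theta_i\rVert\le h_0$; by Lemma~\ref{warmupstageh} this forces $a_t^*\in\hat K_t$ and $\hat K_t\cap K_{sub}=\emptyset$ (hence $\hat K_t\subseteq K_{opt}$) uniformly for $Ks_0<t\le T$. Let $\mathcal E_2$ be the event of probability at least $1-\alpha$ on which $\sup_{i\in K_{opt}}\lVert\hat\theta_{t,i}-\theta_i\rVert\le U_0(\alpha)/\sqrt t$ for all $t_0\le t\le T$. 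Over the rounds $t\le t_0$ I would bound each instantaneous regret by a constant multiple of $\RB$, which produces the $\RB t_0$ term.

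The key step is to control the instantaneous regret $\mu(X_t^T\theta_{a_t^*})-\mu(X_t^T\theta_{a_t})$ for $t>t_0$ on $\mathcal E_1\cap\mathcal E_2$. Since $a_t^*\in K_{opt}$ (a $K_{sub}$ arm is never optimal) and $a_t\in\hat K_t\subseteq K_{opt}$, every estimate entering the greedy comparison is accurate to within $L\CB U_0(\alpha)/\sqrt t$, using $|\mu(X_t^T\hat\theta_{t-1,i})-\mu(X_t^T\theta_i)|\le L\CB\lVert\hat\theta_{t-1,i}-\theta_i\rVert$ together with $\lVert X_t\rVert\le\CB$. A short comparison then shows that whenever $\triangle_t>2L\CB U_0(\alpha)/\sqrt t$ the perturbed maximizer over $\hat K_t$ must coincide with $a_t^*$, giving zero instantaneous regret; in the complementary case the greedy optimality $\mu(X_t^T\hat\theta_{t-1,a_t})\ge\mu(X_t^T\hat\theta_{t-1,a_t^*})$ bounds the instantaneous regret by $2L\CB U_0(\alpha)/\sqrt t$. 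This yields
\[
\mu(X_t^T\theta_{a_t^*})-\mu(X_t^T\theta_{a_t})\le \frac{2L\CB U_0(\alpha)}{\sqrt t}\,\bm 1\Big\{\triangle_t\le \frac{2L\CB U_0(\alpha)}{\sqrt t}\Big\},
\]
which is precisely the per-round expression appearing in the single-parameter lemma.

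From here I would reuse the single-parameter concentration argument. Setting $A_t:=t^{-1/2}\bm 1\{\triangle_t\le 2L\CB U_0(\alpha)/\sqrt t\}$, the $A_t$ are independent and bounded in $[0,t^{-1/2}]$, so Hoeffding's inequality gives $\sum_t A_t\le \sum_t\E[A_t]+O(\sqrt{\log T\,\log(1/\alpha)})$ with probability at least $1-\alpha$. The $(\gamma,\beta)$-margin condition (Assumption~\ref{betamargin}) bounds $\E[A_t]\le \gamma(2L\CB U_0(\alpha))^{\beta}\,t^{-(1+\beta)/2}$, and summing over $t$ contributes a factor $\log T$ when $\beta=1$ and $\tfrac{2}{1-\beta}T^{(1-\beta)/2}$ when $0<\beta<1$; multiplying by the prefactor $2L\CB U_0(\alpha)$ and folding the Hoeffding slack and the finitely many low-order boundary terms (where the threshold exceeds the range of validity of the margin condition) into $o(1)$ delivers the two stated bounds. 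A union bound over $\mathcal E_2$ and the Hoeffding event accounts for the $1-2\alpha$ confidence, the warm-up correctness $\mathcal E_1$ being supplied by the hypothesis.

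The main obstacle is the key step rather than the concentration: I must make sure the elimination genuinely confines the greedy comparison to $K_{opt}$, since only there does the hypothesised $U_0(\alpha)/\sqrt t$ rate hold — the $K_{sub}$ estimates are corrupted by the synthetic updates and never converge at this rate. Establishing $\hat K_t\subseteq K_{opt}$ and $a_t^*\in\hat K_t$ through Lemma~\ref{warmupstageh}, and verifying that the perturbation analysis of the argmax is uniform over the data-dependent surviving set $\hat K_t$, is the crux that legitimises the reduction to the single-parameter form.
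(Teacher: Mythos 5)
Your proposal is correct and takes essentially the same route as the paper's own proof: the paper likewise restricts to the event $E_t=\{\hat{K}_t\cap K_{sub}=\emptyset,\ a_t^*\in\hat{K}_t\}$ supplied by Lemma~\ref{warmupstageh}, bounds the instantaneous regret for $t>t_0$ by $\dfrac{2L\CB U_0(\alpha)}{\sqrt{t}}\bm 1\{\triangle_t\le \dfrac{2L\CB U_0(\alpha)}{\sqrt{t}}\}$ on the intersection with the estimation event, and then combines Hoeffding's inequality with the margin condition to sum these terms into the $\log T$ (for $\beta=1$) or $T^{(1-\beta)/2}/(1-\beta)$ (for $0<\beta<1$) factors. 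The only cosmetic difference is that the paper passes through $L\,X_t^T(\theta_{a_t^*}-\theta_{a_t})$ via Lipschitzness of $\mu$ rather than working with $\mu$ directly, which changes nothing of substance.
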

\begin{proof}[Proof of Lemma~\ref{marginlemma}]
Denoting $E_t: = \{\hat{K}_t\cap K_{sub} = \emptyset , a_t^*\in \hat{K}_t\}$, we have with probability at least $1-\alpha$, \begin{align*}
	\text{Reg}(T) &\leq 2\RB t_0+ L\sum_{t_0<t\leq T} X_t^T(\theta_{a_t^*}-\theta_{a_t})\\
	&\leq 2\RB t_0+ L\sum_{t_0<t\leq T} X_t^T(\theta_{a_t^*}-\theta_{a_t})\bm 1\{\sup_{i\in K_{opt}}\lVert \hat{\theta}_{t,i}-\theta_{i}\rVert\leq \dfrac{U_0(\alpha)}{\sqrt{t}}, E_t\}\\
	&\leq 2\RB t_0+ L\sum_{t_0<t\leq T} X_t^T(\theta_{a_t^*}-\theta_{a_t})\bm 1\{\sup_{i\in K_{opt}}\lVert \hat{\theta}_{t,i}-\theta_{i}\rVert\leq \dfrac{U_0(\alpha)}{\sqrt{t}},\triangle_t\leq\dfrac{2L\CB U_0(\alpha) }{ \sqrt{t}}, E_t\}\\
	&\leq 2\RB t_0+ L\sum_{t_0<t\leq T} \dfrac{2\CB U_0(\alpha)}{\sqrt{t}}\bm 1\{\triangle_{t}\leq \dfrac{2L\CB U_0(\alpha)}{\sqrt{t}} \}.
\end{align*}
Let $A_t =\bm 1\{\triangle_{t}<\dfrac{2L\CB U_0(\alpha)}{\sqrt{t}} \}$. Then $A_t$ is a sequence of independent 0-1 valued random variable such that $\mathbb{P}(A_t = 1)\leq \gamma (\dfrac{2L\CB U_0(\alpha)}{\sqrt{t}})^\beta$. Then Hoeffding's inequality implies  with probability at least $1-\alpha$, 
\begin{align*}
	\sum_{t_0\leq t\leq T}\dfrac{1}{\sqrt{t}}A_t\leq\E[\sum_{1\leq t\leq T}\dfrac{1}{\sqrt{t}}A_t]+ \sqrt{ \log T\cdot \log(\dfrac{1}{\alpha}}).
\end{align*}
Notice that $\E[\sum_{1\leq t\leq T}\dfrac{1}{\sqrt{t}}A_t]\leq  CL\CB \gamma U_0(\alpha) \log T $ when $\beta = 1$ and   $\E[\sum_{1\leq t\leq T}\dfrac{1}{\sqrt{t}}A_t]\leq   C \dfrac{\gamma}{1-\beta} ({L\CB U_0(\alpha)})^\beta  T^{\frac{1-\beta}{2}}$ when $0<\beta<1$. This completes the proof.
\end{proof}

Given Lemma~\ref{marginlemma}, we need only show that for both the private OLS estimator and the private SGD estimator, we can find the corresponding $s_0,t_0$ and $U_0(\alpha)$.

\begin{lemma}[Result of OLS estimator]\label{OLSTHM}
	Given $h_{0} = \dfrac{h_{sub}}{8L\CB}$ and $\lambda_0 =( 2 L\CB )^{-1}(\dfrac{p' }{2\gamma})^{1/\beta}$, under the $(\beta,\gamma)$-margin condition ,  
	\begin{align*}
		s_0&=CK(\dfrac{\CB \sigma_\epsilon+\pfactorc}{\min\{\lambda_0,h_0\} p'\kappa_l  })^2(d+\log(TK/\alpha)),\\
		t_0&=2Ks_0,\\
		U_0(\alpha)&= \dfrac{K(\CB\sigma_\epsilon+\sigma_{\varepsilon,\delta})\sqrt{d+\log(TK/\alpha)}}{\kappa_l p'}.\\
	\end{align*}
	satisfy the requirements in Lemma~\ref{marginlemma}.
\end{lemma}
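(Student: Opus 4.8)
The plan is to feed the stated $s_0$, $t_0 = 2Ks_0$, and $U_0(\alpha)$ into Lemma~\ref{marginlemma}, whose two hypotheses are (i) a warm-up bound $\sup_{i\in[K]}\|\hat\theta_{Ks_0,i} - \theta_i\| \le h_0$ and (ii) a post-warm-up bound $\|\hat\theta_{t,i} - \theta_i\| \le U_0(\alpha)/\sqrt t$ for every $i\in K_{opt}$ and all $t_0 \le t \le T$. I would in fact prove the stronger warm-up bound $\sup_i\|\hat\theta_{Ks_0,i}-\theta_i\|\le\min\{\lambda_0,h_0\}$, since the $\le h_0$ half is exactly hypothesis (i) while the $\le\lambda_0$ half is needed internally to drive the argument for (ii). Throughout, the computational engine is the OLS error decomposition from the proof of Lemma~\ref{est-ols}, adapted to the two features that distinguish the multi-parameter setting: during warm up each arm is pulled on i.i.d.\ contexts with no synthetic noise injected for the unselected arms, whereas after warm up arm $i$ receives real data only when it is actually pulled but accrues privacy noise every single round.

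First I would establish the warm-up bound. The round-robin schedule gives arm $i$ exactly $s_0$ genuine samples on i.i.d.\ contexts, so $\hat\theta_{Ks_0,i}$ is precisely an OLS estimate on $s_0$ points plus their privacy perturbations $W_t,\xi_t$. Dropping the indicator in Assumption~\ref{diversecondition} yields the unconditional bound $\mathbb P((v^TX)^2 \ge \kappa_l/K_{opt}) > p'$ for every unit $v$, so the $\epsilon$-net plus Azuma argument of Lemma~\ref{smallest} gives $\lambda_{\min}(\sum_{\text{pulls}} x x^T)\gtrsim s_0\kappa_l p'/K_{opt}$ with high probability. Combining this with the Wigner and sub-Gaussian noise estimates from the proof of Lemma~\ref{est-ols} and a union bound over the $K$ arms (which turns $\log(T/\alpha)$ into $\log(TK/\alpha)$) produces $\sup_{i}\|\hat\theta_{Ks_0,i}-\theta_i\|\lesssim K(\CB\sigma_\epsilon+\pfactorc)\sqrt{d+\log(TK/\alpha)}/(\sqrt{s_0}\,\kappa_l p')$; the prescribed $s_0$ drives this below $\min\{\lambda_0,h_0\}$, which in particular triggers Lemma~\ref{warmupstageh}, so $a_t^*\in\hat K_t$ and $\hat K_t\cap K_{sub}=\emptyset$ for all $t>Ks_0$.

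Next I would establish hypothesis (ii), which is the substance of the statement. For $i\in K_{opt}$ the estimator's Gram matrix splits into a warm-up part, a signal part from genuine post-warm-up pulls of $i$, and accumulated privacy noise. The crux is a lower bound on the signal part, and this is where $\lambda_0$ enters: by its definition the margin condition (Assumption~\ref{betamargin}) bounds the probability that the current estimation error reverses the top arm over the informative region of $U_i$ by $\gamma(2L\CB\lambda_0)^\beta = p'/2$, so whenever $\|\hat\theta_{t-1,i}-\theta_i\|\le\lambda_0$ arm $i$ is pulled-and-informative with conditional probability at least $p'/2$. A second application of the $\epsilon$-net/Azuma argument then gives $\lambda_{\min}$ of the signal Gram $\gtrsim t\kappa_l p'/K_{opt}$, and feeding this together with the $O(\sqrt t)$ privacy-noise bound into the OLS error decomposition yields $\|\hat\theta_{t,i}-\theta_i\|\le U_0(\alpha)/\sqrt t$, again after a union bound over arms and a grid of times.

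The main obstacle is closing the evident circularity: the pulling-frequency bound above is valid only while $\|\hat\theta_{t-1,i}-\theta_i\|\le\lambda_0$, yet that smallness is exactly what the bound is meant to deliver. The buffer $t_0=2Ks_0$ is what makes the argument self-consistent, since a direct substitution gives $U_0(\alpha)/\sqrt{t_0}=\min\{\lambda_0,h_0\}/\sqrt{2C}\le\lambda_0$, so once the error obeys $U_0(\alpha)/\sqrt t$ it can never climb back above $\lambda_0$; on the transition window $Ks_0\le t\le t_0$ the persistent warm-up signal by itself keeps the error under $\lambda_0$. I would formalize this as an induction on $t$ (equivalently, a stopping-time argument on the first round the error exceeds $\lambda_0$), with the base case furnished by the warm-up bound. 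I expect this bootstrapping, along with carefully propagating the $K_{opt}$ factor that enters through the diversity threshold $\kappa_l/K_{opt}$, to be the delicate part; the remaining concentration and noise estimates are routine reuses of Lemmas~\ref{smallest} and~\ref{est-ols}.
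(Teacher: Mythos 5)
Your overall architecture matches the paper's: a warm-up bound strong enough to trigger Lemma~\ref{warmupstageh}, a conditional eigenvalue bound for pulled-arm contexts driven by the margin condition and the identity $\gamma(2L\CB\lambda_0)^\beta=p'/2$, matrix concentration restricted to the event that the error has stayed below $\lambda_0$, and a bootstrap induction whose transition window $[Ks_0,2Ks_0]$ is covered by the warm-up signal alone (the paper's Lemmas~\ref{Eigenvalue condition}, \ref{minimal eigenvalue with dependence}, \ref{initlemmaE10} and \ref{lemmaAwarmup2}). Your substitution of the $\epsilon$-net-plus-Azuma argument for the paper's matrix-martingale bound (Lemma~\ref{Tropp lemma}) is harmless; it is the same device the paper itself uses in the single-parameter Lemma~\ref{smallest}.

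There is, however, a genuine quantitative gap in your warm-up step. You lower-bound the warm-up Gram matrix by dropping the indicator in Assumption~\ref{diversecondition} for a \emph{single} region $U_i$, which only yields $\mathbb{P}((v^TX)^2\geq\kappa_l/K_{opt})\geq p'$ and hence $\lambda_{\min}(\sum xx^T)\gtrsim s_0\kappa_lp'/K_{opt}$. Feeding this into the OLS decomposition gives exactly your displayed bound $\sup_i\lVert\hat{\theta}_{Ks_0,i}-\theta_i\rVert\lesssim K(\CB\sigma_\epsilon+\pfactorc)\sqrt{d+\log(TK/\alpha)}\big/\big(\sqrt{s_0}\,\kappa_lp'\big)$; but substituting the prescribed $s_0=CK\big(\frac{\CB\sigma_\epsilon+\pfactorc}{\min\{\lambda_0,h_0\}p'\kappa_l}\big)^2(d+\log(TK/\alpha))$ into that expression evaluates to $\sqrt{K/C}\,\min\{\lambda_0,h_0\}$, not to something $\leq\min\{\lambda_0,h_0\}$. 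So your claim that ``the prescribed $s_0$ drives this below $\min\{\lambda_0,h_0\}$'' fails unless $C\gtrsim K$, i.e.\ your route only proves the lemma with $s_0$ inflated by a factor up to $K$. The factor you are giving away is recovered by exploiting the \emph{disjointness} of the regions $\{U_i\}_{i\in K_{opt}}$: during warm up the contexts are unrestricted i.i.d.\ draws, so the $K_{opt}$ disjoint events $\{X\in U_i,\ (v^TX)^2\geq\kappa_l/K_{opt}\}$ each carry probability at least $p'$, giving $\mathbb{P}((v^TX)^2\geq\kappa_l/K_{opt})\geq K_{opt}p'$ — equivalently, as in the paper's Lemma~\ref{lemmaE8}, $\lambda_{\min}\E[XX^T]\geq\sum_{i\in K_{opt}}\lambda_{\min}\E[XX^T\bm 1\{X\in U_i\}]\geq\kappa_lp'$ by superadditivity of $\lambda_{\min}$ and Markov's inequality. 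This removes the $1/K_{opt}$ loss from the warm-up Gram bound; the $1/K$ factor is intrinsic only \emph{after} warm up, where arm $i$ collects signal solely on rounds it is actually pulled, and there your accounting (and the stated $U_0(\alpha)$) is correct. With this one repair your argument goes through with the stated $s_0$, $t_0$, and $U_0(\alpha)$.
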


\begin{lemma}[Result of SGD estimator]\label{SGDTHM}
	Given $h_{0} = \dfrac{h_{sub}}{8L\CB}$ and $\lambda_0 =( 2 L\CB )^{-1}(\dfrac{p' }{2\gamma})^{1/\beta}$, under the $(\beta,\gamma)$-margin condition, 
	\begin{align*}
		s_0&=  C \left (\dfrac{K r_{\varepsilon,d}}{ \zeta \kappa_l p'  \min\{\lambda_0,  h_0 \}} \right)^2 \log(KT\log(KT)/\alpha),\\
		t_0&= Ks_0+1,\\
		U_0(\alpha)&= C\dfrac{K\sqrt{\log((KT\log KT)/\alpha)}r_{\varepsilon,d} }{\zeta \kappa_l p'},
	\end{align*}
	satisfy the requirements in Lemma~\ref{marginlemma}.
\end{lemma}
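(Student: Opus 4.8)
The plan is to verify the two hypotheses of Lemma~\ref{marginlemma} — the warm-up accuracy $\sup_{i\in[K]}\lVert\hat\theta_{Ks_0,i}-\theta_i\rVert\le h_0$ and the post-warm-up rate $\sup_{i\in K_{opt}}\lVert\hat\theta_{t,i}-\theta_i\rVert\le U_0(\alpha)/\sqrt t$ for $t\ge t_0=Ks_0+1$ — with the stated $s_0$ and $U_0(\alpha)$. The skeleton mirrors the single-parameter SGD analysis of Lemma~\ref{est-sgd}, but two ingredients change: the strong-convexity constant now comes from the diversity condition (Assumption~\ref{diversecondition}) rather than Assumption~\ref{as:pstar}, and the synthetic updates for unselected arms must be shown to inject only zero-mean noise of norm $r_{\varepsilon,d}$.

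First I would treat the warm-up. Because the update $Z_{t,i}=\bm 1\{a_t=i\}\psi_t^{SGD}(\cdot)$ carries the indicator, $\hat\theta_{t,i}$ is frozen on rounds where arm $i$ is not pulled, so during the round-robin each arm receives exactly $s_0$ genuine SGD steps driven by i.i.d.\ contexts $X_t\sim P_X$. Since the diversity condition implies the unconditional bound $\mathbb P((v^TX)^2\ge\kappa_l/K_{opt})>p'$ for every unit $v$, the same Azuma–Hoeffding net argument as in Lemma~\ref{smallest} yields a conditional second-moment lower bound, and with $\mu'\ge\zeta$ this produces a strong-convexity constant $\lambda\asymp\zeta\kappa_l p'/K_{opt}$. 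Running the recursion of Lemma~\ref{est-sgd} over these $s_0$ steps, followed by a union bound over the $K$ arms, gives error of order $r_{\varepsilon,d}\sqrt{\log(KT\log(KT)/\alpha)}/(\lambda\sqrt{s_0})$. Requiring this to be at most $\min\{\lambda_0,h_0\}$ forces the stated $s_0$: here $h_0$ guarantees, via Lemma~\ref{warmupstageh}, that $\hat K_t\subseteq K_{opt}$ and $a_t^*\in\hat K_t$, while $\lambda_0$ guarantees through the margin condition $\mathbb P[\triangle_t\le 2L\CB\lambda_0]\le\gamma(2L\CB\lambda_0)^\beta=p'/2$ that the greedy rule inside $\hat K_t$ mis-identifies the optimal arm with probability at most $p'/2$.

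Next I would handle $t>Ks_0$. Conditioning on the warm-up success event of Lemma~\ref{warmupstageh}, I write for $i\in K_{opt}$ the recursion $\lVert\hat\theta_{t,i}-\theta_i\rVert^2=\lVert\hat\theta_{t-1,i}-\theta_i\rVert^2-2\eta_t\hat g_{t,i}^T(\hat\theta_{t-1,i}-\theta_i)+\eta_t^2\lVert\hat g_{t,i}\rVert^2$, where $\hat g_{t,i}$ is the privatized gradient. By unbiasedness of $\Psi_{\varepsilon,R}$ (Lemma~\ref{privacy-bounded-vector}) the synthetic input $\bm 0$ contributes zero conditional mean, so $\E[\hat g_{t,i}\mid\mathcal F_{t-1}]=\E[(\mu(X_t^T\hat\theta_{t-1,i})-\mu(X_t^T\theta_i))X_t\bm 1\{a_t=i\}\mid\mathcal F_{t-1}]$ and the reward noise drops out. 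Coupling the selection event with the oracle event, on the good event $\bm 1\{a_t=i\}\ge\bm 1\{X_t\in U_i\}$ off a small-margin set of mass $\le p'/2$, so the diversity condition gives $\E[(X_t^Tv)^2\bm 1\{X_t\in U_i\}]\ge\kappa_l p'/K_{opt}$; with $\mu'\ge\zeta$ the progress is at least $\lambda\lVert\hat\theta_{t-1,i}-\theta_i\rVert^2$, while $\lVert\hat g_{t,i}\rVert=r_{\varepsilon,d}$ bounds the last term. The step size is calibrated so that $\lambda\eta_t\asymp 1/D_t$ with $D_t=t-(K-1)s_0$, turning the recursion into $\lVert\cdot\rVert^2\le(1-c/D_t)\lVert\cdot\rVert^2+2\eta_t\hat z_{t,i}^T(\hat\theta_{t-1,i}-\theta_i)+\eta_t^2 r_{\varepsilon,d}^2$; applying the high-probability martingale argument of Proposition~1 in \cite{rakhlin2011making} and union-bounding over $i\in K_{opt}$ and $t$ yields $\lVert\hat\theta_{t,i}-\theta_i\rVert\le U_0(\alpha)/\sqrt t$, since $t/D_t=O(K)$ throughout.

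The main obstacle is the coupling in the progress step: the event $\{a_t=i\}$ depends on the running estimates $\hat\theta_{t-1,\cdot}$, exactly the quantities being controlled, creating a circular dependence. I would resolve this by bootstrapping — the warm-up base case supplies $\min\{\lambda_0,h_0\}$-accuracy, and the elimination set $\hat K_t$ is defined through the \emph{frozen} warm-up estimates $\hat\theta_{Ks_0,\cdot}$ rather than the running ones, which decouples candidate membership from the current error. Since $s_0$ is large enough that $U_0(\alpha)/\sqrt{t}\le\lambda_0$ persists for $t\ge t_0$, the running accuracy never exceeds $\lambda_0$, so the mis-selection mass stays below $p'/2$ and the strong-convexity constant is preserved at $\zeta\kappa_l p'/(2K_{opt})$ for the whole horizon. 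Keeping all these exceptional events inside a single high-probability conditioning, rather than re-randomizing at each step, is the delicate bookkeeping that makes the martingale bound applicable.
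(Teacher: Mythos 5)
Your proposal takes essentially the same route as the paper's proof: the warm-up is analyzed as i.i.d.\ private SGD with strong convexity of order $\zeta\kappa_l p'$ coming from the diversity condition and a Rakhlin-style high-probability bound, the synthetic updates enter as zero-mean noise of norm $r_{\varepsilon,d}$ by unbiasedness of $\Psi_{\varepsilon,R}$, the choice of $\lambda_0$ caps the mis-selection mass at $p'/2$ via the margin condition (this is exactly the paper's Lemma~\ref{Eigenvalue condition}), the step size is $\propto K_{opt}/(\zeta\kappa_l p'(t-(K-1)s_0))$, and a bootstrap induction keeps the $\lambda_0$-accuracy event alive over the whole horizon, just as the paper's induction on the events $S_t\cap H_0$ does. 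One small repair: when discounting mis-selection you should intersect the threshold event $\{(v^TX_t)^2\bm 1\{X_t\in U_i\}\ge \kappa_l/K\}$ with $\{a_t=i\}$, as the paper does, rather than subtract a mass-$p'/2$ correction from the expectation bound $\E[(X_t^Tv)^2\bm 1\{X_t\in U_i\}]\ge\kappa_l p'/K_{opt}$, since $(X_t^Tv)^2$ can be as large as $\CB^2$ on the mis-selection set and the naive subtraction could be vacuous.
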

Then Theorem~\ref{mainthmsm} follows from combining Lemma~\ref{marginlemma}, \ref{OLSTHM} and \ref{SGDTHM} .
\end{proof}

The proof of Lemma~\ref{OLSTHM} and Lemma~\ref{SGDTHM} needs the following result: For a fixed $\beta\in (0,1]$, we define $h_{0} = \dfrac{h_{sub}}{8L\CB}$,$\lambda_0 = ( 2 L\CB )^{-1}(\dfrac{p' }{2\gamma})^{1/\beta},$  $A_t: =\{\sup_{i\in K_{opt}}\lVert \hat{\theta}_{t,i}-\theta_i\rVert\leq \lambda_0\},H_0: = \{\sup_{i\in [K]} \lVert \hat{\theta}_{Ks_0,i}-\theta_i\rVert\leq h_0 \}$. 
\begin{lemma}\label{Eigenvalue condition} Define $\mathcal F_{t}$ the  filtration generated by $\{X_i\}_{i\in[t]}$,$\{\epsilon_i\}_{i\in [t]}$ together with all randomness from $\{\psi_{i}\}_{i\in[t]}$.  Then we have:
	\begin{align*}
		\lambda_{\min}(\E[X_tX_t\bm 1\{a_t = i\}\lvert \mathcal F_{t-1}])\geq \dfrac{p' \kappa_l}{2K}\bm 1_{A_{t-1}}\bm 1_{H_0} ,\quad \forall i\in K_{opt}.
	\end{align*}
\end{lemma}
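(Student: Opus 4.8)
The plan is to fix an arbitrary unit vector $v$ and an index $i\in K_{opt}$, and to prove that
$$
v^{T}\,\E[X_tX_t^{T}\bm 1\{a_t=i\}\mid\mathcal F_{t-1}]\,v=\E[(v^{T}X_t)^2\bm 1\{a_t=i\}\mid\mathcal F_{t-1}]\ \ge\ \frac{p'\kappa_l}{2K}
$$
holds on the $\mathcal F_{t-1}$-measurable event $A_{t-1}\cap H_0$; taking the infimum over unit $v$ then yields the stated bound on $\lambda_{\min}$, while off $A_{t-1}\cap H_0$ the indicator $\bm 1_{A_{t-1}}\bm 1_{H_0}$ is zero and there is nothing to prove. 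Since $X_t$ is drawn i.i.d.\ from $P_X$ independently of $\mathcal F_{t-1}$, the conditional law of $X_t$ given $\mathcal F_{t-1}$ is exactly $P_X$, so Assumptions~\ref{diversecondition} and \ref{betamargin} can be applied verbatim under the conditioning.

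The heart of the argument is a deterministic containment: on $A_{t-1}\cap H_0$ one has
$$
\{X_t\in U_i\}\cap\{\triangle_t>\tau\}\ \subseteq\ \{a_t=i\},\qquad \tau:=2L\CB\lambda_0 .
$$
To establish it, note first that on $H_0$ Lemma~\ref{warmupstageh} forces $a_t^*\in\hat K_t$ and $\hat K_t\cap K_{sub}=\emptyset$, so $\hat K_t\subseteq K_{opt}$; when $X_t\in U_i$ we have $a_t^*=i$, hence $i\in\hat K_t$, and on $A_{t-1}$ every surviving arm satisfies $\lVert\hat\theta_{t-1,a}-\theta_a\rVert\le\lambda_0$. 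Because $\mu$ is increasing (its derivative is bounded below by $\zeta>0$) and $L$-Lipschitz (Assumption~\ref{assump:link}), for each competitor $j\in\hat K_t\setminus\{i\}$ the reward gap controls the linear-score gap through $X_t^{T}\theta_i-X_t^{T}\theta_j\ge \tfrac1L\big(\mu(X_t^{T}\theta_i)-\mu(X_t^{T}\theta_j)\big)\ge \triangle_t/L$, while the two estimation errors perturb the estimated scores by at most $2\CB\lambda_0$ in total, so
$$
X_t^{T}\hat\theta_{t-1,i}-X_t^{T}\hat\theta_{t-1,j}\ \ge\ \frac{\triangle_t}{L}-2\CB\lambda_0\ >\ 0
$$
whenever $\triangle_t>\tau$. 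As $\mu$ is monotone, $i$ then maximizes $\mu(X_t^{T}\hat\theta_{t-1,\cdot})$ over $\hat K_t$, i.e.\ $a_t=i$.

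Finally I would combine the containment with the diversity and margin conditions. Restricting the indicator to the event where additionally $(v^{T}X_t)^2\ge\kappa_l/K_{opt}$ (which already forces $X_t\in U_i$) gives
$$
\E[(v^{T}X_t)^2\bm 1\{a_t=i\}\mid\mathcal F_{t-1}]\ \ge\ \frac{\kappa_l}{K_{opt}}\,\mathbb P\!\left((v^{T}X_t)^2\bm 1\{X_t\in U_i\}\ge\frac{\kappa_l}{K_{opt}},\ \triangle_t>\tau\ \Big|\ \mathcal F_{t-1}\right).
$$
Assumption~\ref{diversecondition} gives probability exceeding $p'$ for the first event, and the stated choice $\lambda_0=(2L\CB)^{-1}(p'/2\gamma)^{1/\beta}$ makes $\tau=(p'/2\gamma)^{1/\beta}$, so Assumption~\ref{betamargin} yields $\mathbb P(\triangle_t\le\tau)\le\gamma\tau^{\beta}=p'/2$ (for $\tau$ in the admissible range of the margin condition); subtracting the tail leaves probability at least $p'-p'/2=p'/2$, whence the expression is at least $\tfrac{\kappa_l}{K_{opt}}\cdot\tfrac{p'}{2}\ge\tfrac{p'\kappa_l}{2K}$ since $K_{opt}\le K$. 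I expect the containment step to be the main obstacle: it is where the warm-up elimination guarantee (Lemma~\ref{warmupstageh}) and the running estimation accuracy on $A_{t-1}$ must be fused, and where the reward gap $\triangle_t$ has to be converted into a linear-score gap large enough to dominate the perturbation $2\CB\lambda_0$; once that inclusion is in place, the remainder is a routine union-bound split of the diversity event against the margin tail.
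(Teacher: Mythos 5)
Your proof is correct and follows essentially the same route as the paper's: lower-bound the quadratic form by restricting to the diversity event from Assumption~\ref{diversecondition}, then subtract the margin-failure probability $\gamma(2L\CB\lambda_0)^\beta = p'/2$, using the warm-up elimination guarantee (Lemma~\ref{warmupstageh}) together with the accuracy event $A_{t-1}$ to show that failing to select the optimal arm while $X_t\in U_i$ forces $\triangle_t \leq 2L\CB\lambda_0$. Your containment $\{X_t\in U_i\}\cap\{\triangle_t>\tau\}\subseteq\{a_t=i\}$ is precisely the contrapositive of the step the paper states tersely as $\mathbb{P}(\{a_t\neq i,\,X_t\in U_i\}\cap E_t\cap A_{t-1}\mid\mathcal F_{t-1})\leq \mathbb{P}(\triangle_t< 2L\CB\lambda_0)$, and you usefully spell out the Lipschitz/monotonicity argument that the paper leaves implicit.
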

\begin{proof}[Proof]
	We have for every unit vector $v$ 
	\begin{align*}
		& \E[v^TX_tX_t^Tv\bm 1\{a_t = i\}\lvert \mathcal F_{t-1}] \\
		\geq  & \bm 1_{H_0} \dfrac{\kappa_l}{K}\E[ \bm 1 \{ \lvert v^TX\bm 1\{X_t\in U_i\}\rvert^2\geq \kappa_l/K,a_t = i, A_{t-1}\} \lvert \mathcal F_{t-1}]\\
	\geq & \bm 1_{H_0} \bm 1_{A_{t-1}} \dfrac{\kappa_l}{K}\E[ \bm 1 \{ \lvert v^TX\bm 1\{X_t\in U_i\} \rvert^2\geq \kappa_l/K\}-\bm{1} \{a_t \neq  i,X_t\in U_i, A_{t-1} \}  \lvert \mathcal F_{t-1}]\\
	\geq &\bm 1_{H_0} \bm 1_{A_{t-1}} \dfrac{\kappa_l}{K}[p'-\mathbb{P}(\{a_t \neq  i,X_t\in U_i \} \cap H_0 \cap A_{t-1} \lvert \mathcal F_{t-1})].
	\end{align*}
	\begin{align*}
		\mathbb{P}(\{a_t \neq  i,X_t\in U_i \}\cap H_0 \cap A_{t-1} \lvert \mathcal F_{t-1}) &= \bm 1_{H_0} \bm 1_{A_{t-1}} \mathbb{P}(\{a_t \neq  i,X_t\in U_i\}\cap E_{t} \cap A_{t-1} \lvert \mathcal F_{t-1})\\
		&\leq \bm 1_{A_{t-1}}\bm 1_{H_0} \mathbb{P}(\triangle_{t} < 2L\CB \lambda_0 )\\
		&\leq \bm 1_{A_{t-1}} \bm 1_{H_0} \gamma(2L\CB \lambda_0)^\beta\\
		&\leq  \bm 1_{A_{t-1}} \bm 1_{H_0} \dfrac{p'}{2},
	\end{align*}
	where the last inequality is by the choice of $\lambda_0$.
	Then the proof is finished.
\end{proof}

\subsection{Proof of Lemma \ref{OLSTHM} }
We first establish the lower bound of the sample-covaraince matrix sampled by the greedy action based on the following matrix-martingale concentration result:

\begin{lemma}[Theorem~3.1 in \cite{tropp2011user}] \label{Tropp lemma}
	 Let $z^1,\dots,z^t$ be a sequence of random, positive-semidefinite $d\times d$ matrices adapted to a filtration $\mathcal F'_t$, let $Z_t\coloneqq \sum_{i = 1}^t z^{i}$ and $\tilde{Z}_t\coloneqq \sum_{i = 1}^t \E[z^{i}\lvert \mathcal F'_{i-1}]$. Suppose that $\lambda_{\max}(z^{i})\leq R^2$ almost surely for all $i$, then for any $\mu$ and $\alpha\in (0,1),$
	 \begin{displaymath}
			\mathbb P[\lambda_{\min}(Z_t)\leq (1-\alpha)\mu, \lambda_{\min}(\tilde{Z}_t)\geq \mu]\leq d(\dfrac{1}{e^\alpha(1-\alpha)^{1-\alpha}} )^{\mu/R^2}.
	 \end{displaymath}
\end{lemma}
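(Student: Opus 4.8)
This is the matrix Chernoff (lower-tail) inequality for a filtration-adapted sequence, so the natural route is the matrix Laplace-transform method combined with Lieb's concavity theorem; I sketch the argument one would give if proving it from scratch rather than quoting \cite{tropp2011user}.

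The plan is to fix a parameter $\theta<0$ and build a supermartingale out of the trace-exponential. Set $\Xi_s\coloneqq\sum_{i=1}^s\log\E[e^{\theta z^i}\mid\mathcal F'_{i-1}]$, which is predictable, and define $W_s\coloneqq\operatorname{tr}\exp(\theta Z_s-\Xi_s)$. Writing $H\coloneqq\theta Z_{s-1}-\Xi_{s-1}-\log\E[e^{\theta z^s}\mid\mathcal F'_{s-1}]$ (which is $\mathcal F'_{s-1}$-measurable), we have $W_s=\operatorname{tr}\exp(H+\theta z^s)$, and Lieb's theorem — concavity of $A\mapsto\operatorname{tr}\exp(H+\log A)$ on positive-definite $A$ — together with conditional Jensen gives $\E[W_s\mid\mathcal F'_{s-1}]\le\operatorname{tr}\exp(H+\log\E[e^{\theta z^s}\mid\mathcal F'_{s-1}])=W_{s-1}$. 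Hence $W_s$ is a supermartingale with $W_0=\operatorname{tr}\exp(0)=d$, so $\E[W_t]\le d$.

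Next I would convert the predictable correction $\Xi_t$ into a bound involving $\tilde Z_t$. Because $0\preceq z^i\preceq R^2 I$, convexity of $x\mapsto e^{\theta x}$ on $[0,R^2]$ yields the chord bound $e^{\theta x}\le 1+\tfrac{e^{\theta R^2}-1}{R^2}x$, which transfers to the Loewner order as $e^{\theta z^i}\preceq I+c(\theta)z^i$ with $c(\theta)\coloneqq\tfrac{e^{\theta R^2}-1}{R^2}<0$. Taking conditional expectations and using operator monotonicity of $\log$ with $\log(I+A)\preceq A$ gives $\log\E[e^{\theta z^i}\mid\mathcal F'_{i-1}]\preceq c(\theta)\,\E[z^i\mid\mathcal F'_{i-1}]$, and summing, $\Xi_t\preceq c(\theta)\tilde Z_t$. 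The final step is to lower-bound $W_t$ on the target event $\mathcal E\coloneqq\{\lambda_{\min}(Z_t)\le(1-\alpha)\mu,\ \lambda_{\min}(\tilde Z_t)\ge\mu\}$. Since $W_t\ge\exp(\lambda_{\max}(\theta Z_t-\Xi_t))$ and $\theta Z_t-\Xi_t\succeq\theta Z_t-c(\theta)\tilde Z_t$, Weyl's inequality gives $\lambda_{\max}(\theta Z_t-c(\theta)\tilde Z_t)\ge\theta\lambda_{\min}(Z_t)-c(\theta)\lambda_{\min}(\tilde Z_t)\ge\theta(1-\alpha)\mu-c(\theta)\mu$ on $\mathcal E$ (here $\theta<0$ and $-c(\theta)>0$ make both bounds point the right way).

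Putting the pieces together, $d\ge\E[W_t]\ge\E[W_t\mathbf 1_{\mathcal E}]\ge\exp\big(\theta(1-\alpha)\mu-c(\theta)\mu\big)\mathbb P(\mathcal E)$, so $\mathbb P(\mathcal E)\le d\exp\big(-\theta(1-\alpha)\mu+c(\theta)\mu\big)$. Optimizing $\phi(\theta)\coloneqq-\theta(1-\alpha)+c(\theta)$ over $\theta<0$ via $\phi'(\theta)=-(1-\alpha)+e^{\theta R^2}=0$ yields $\theta^\star=R^{-2}\log(1-\alpha)$ and $\phi(\theta^\star)=R^{-2}\big(-(1-\alpha)\log(1-\alpha)-\alpha\big)$, so $\exp(\mu\phi(\theta^\star))=\big(e^{\alpha}(1-\alpha)^{1-\alpha}\big)^{-\mu/R^2}$, which is exactly the claimed bound. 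The main obstacle is the supermartingale step: non-commutativity of $\theta Z_s$ and $\Xi_s$ rules out any naive scalar calculation, and one genuinely needs Lieb's concavity theorem (equivalently the joint convexity of quantum relative entropy) to peel off the conditional moment-generating functions one increment at a time; the sign bookkeeping forced by taking $\theta<0$ for a minimum-eigenvalue tail, together with $c(\theta)<0$, is the other place where care is required.
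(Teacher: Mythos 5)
Your proof is correct, but it cannot be compared to a proof in the paper, because the paper contains none: this lemma is imported verbatim as Theorem~3.1 of \cite{tropp2011user} and used as a black box. What you have written is, in substance, Tropp's own argument for the martingale version of the matrix Chernoff lower tail, and every step checks out. The supermartingale $W_s=\operatorname{tr}\exp(\theta Z_s-\Xi_s)$ with the predictable compensator $\Xi_s$ is handled correctly via Lieb's concavity theorem plus conditional Jensen (legitimate here since $H$ is $\mathcal F'_{s-1}$-measurable), giving $\E[W_t]\le d$. The chord bound $e^{\theta x}\le 1+c(\theta)x$ on $[0,R^2]$ transfers to the Loewner order because the spectrum of each $z^i$ lies in $[0,R^2]$, and the passage through the operator-monotone logarithm to $\Xi_t\preceq c(\theta)\tilde Z_t$ is valid since $I+c(\theta)\E[z^i\mid\mathcal F'_{i-1}]\succeq e^{\theta R^2}I\succ 0$. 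The sign bookkeeping you flag as delicate is indeed done right: with $\theta<0$ and $-c(\theta)>0$, Weyl's inequality gives $\lambda_{\max}(\theta Z_t-c(\theta)\tilde Z_t)\ge\theta\lambda_{\min}(Z_t)-c(\theta)\lambda_{\min}(\tilde Z_t)$, and both inequalities point the correct way on the event $\mathcal E$. Finally, the optimizer $\theta^\star=R^{-2}\log(1-\alpha)$ and the resulting exponent $R^{-2}\bigl(-(1-\alpha)\log(1-\alpha)-\alpha\bigr)$ reproduce exactly the stated constant $\bigl(e^{\alpha}(1-\alpha)^{1-\alpha}\bigr)^{-\mu/R^2}$. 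In short: you have supplied a complete, correct proof of a result the paper only cites.
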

Now we can show the following result:
\begin{lemma}\label{minimal eigenvalue with dependence} For $t_1<t_2 \in\mathbb N$ such that $(t_2-t_1) \cdot\dfrac{\kappa_l p'}{8K} >10\CB^2\log ({d}/{\alpha'})$, for a fixed $i\in [K]$ we have 
\begin{align*}
	\mathbb P( \lambda_{\min} (\sum_{t = t_1}^{t_2} X_tX_t\bm 1\{a_t = i\}) \leq  \dfrac{t_2-t_1}{8K}\kappa_l p',\sup_{t_1\leq t\leq t_2,i\in K_{opt}} \lVert\hat{\theta}_{t,i}-\theta_i\rVert\leq\lambda_0,H_0) \leq  \alpha'.
\end{align*} 
\end{lemma}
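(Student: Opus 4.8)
The plan is to apply the matrix-martingale deviation bound of Lemma~\ref{Tropp lemma} to the predictable sequence of rank-one terms selected by arm $i$, using the conditional second-moment lower bound of Lemma~\ref{Eigenvalue condition} to control the predictable quadratic variation on the good event. First I would set $z^t \coloneqq X_t X_t^\top \bm 1\{a_t = i\}$ for $t_1 < t \le t_2$ and verify the hypotheses of Lemma~\ref{Tropp lemma}: each $z^t$ is positive semidefinite and, since the context is bounded by $\CB$, we have $\lambda_{\max}(z^t) \le \lVert X_t\rVert_2^2 \le \CB^2$, so one may take $R^2 = \CB^2$. Writing $Z \coloneqq \sum_{t=t_1+1}^{t_2} z^t$ and $\tilde Z \coloneqq \sum_{t=t_1+1}^{t_2}\E[z^t\mid \mathcal F_{t-1}]$, I note that dropping the single term at $t=t_1$ from the sum in the statement only lowers the minimum eigenvalue, so the target event is contained in $\{\lambda_{\min}(Z)\le (t_2-t_1)\kappa_l p'/(8K)\}\cap\mathcal E$, where $\mathcal E \coloneqq \{\sup_{t_1\le t\le t_2,\,i\in K_{opt}}\lVert\hat\theta_{t,i}-\theta_i\rVert\le\lambda_0\}\cap H_0$. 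Working with this shortened $(t_2-t_1)$-term window is convenient so that the indices $t-1$ appearing below all lie in the range controlled by $\mathcal E$.

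The key step is to lower bound $\lambda_{\min}(\tilde Z)$ on $\mathcal E$. On $\mathcal E$ we have $\bm 1_{A_{t-1}}\bm 1_{H_0}=1$ for every $t\in\{t_1+1,\dots,t_2\}$, since then $t-1$ ranges over $\{t_1,\dots,t_2-1\}\subseteq[t_1,t_2]$. Hence Lemma~\ref{Eigenvalue condition} gives $\E[z^t\mid\mathcal F_{t-1}]\succeq \tfrac{p'\kappa_l}{2K} I_d$ for each such $t$, and by Weyl's inequality (subadditivity of $\lambda_{\min}$) we obtain $\lambda_{\min}(\tilde Z)\ge \tfrac{p'\kappa_l}{2K}(t_2-t_1)=:\mu$ on $\mathcal E$. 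Choosing $\alpha=3/4$ makes $(1-\alpha)\mu=(t_2-t_1)\kappa_l p'/(8K)$, exactly the threshold in the statement, so the target event is contained in $\{\lambda_{\min}(Z)\le(1-\alpha)\mu\}\cap\{\lambda_{\min}(\tilde Z)\ge\mu\}$.

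Applying Lemma~\ref{Tropp lemma} then yields the bound $d\,(e^{\alpha}(1-\alpha)^{1-\alpha})^{-\mu/\CB^2}$. Since $\log(e^{3/4}(1/4)^{1/4})>2/5$, the hypothesis $(t_2-t_1)\kappa_l p'/(8K)>10\CB^2\log(d/\alpha')$ forces $\mu/\CB^2=4(t_2-t_1)\kappa_l p'/(8K\CB^2)$ to be large enough that $d\,(e^{\alpha}(1-\alpha)^{1-\alpha})^{-\mu/\CB^2}\le\alpha'$, with considerable room in the constants. I expect the only delicate point to be the bookkeeping that makes $\lambda_{\min}(\tilde Z)\ge\mu$ hold \emph{deterministically} on $\mathcal E$: because the conditional-moment bound of Lemma~\ref{Eigenvalue condition} is itself valid only on $\{A_{t-1}\}\cap H_0$, the argument must keep the random good event and the martingale bound synchronized. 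This is precisely why Lemma~\ref{Tropp lemma} is phrased as a joint statement about $\lambda_{\min}(Z)$ and $\lambda_{\min}(\tilde Z)$ rather than assuming a deterministic lower bound on the predictable quadratic variation, and intersecting with $\mathcal E$ is exactly what lets us feed it the required lower bound on $\lambda_{\min}(\tilde Z)$.
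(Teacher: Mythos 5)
Your proof is correct and takes essentially the same route as the paper: lower-bound the predictable sum $\tilde Z$ via Lemma~\ref{Eigenvalue condition} on the intersected good event, then invoke Lemma~\ref{Tropp lemma} with $(1-\alpha)\mu$ tuned to equal the threshold $(t_2-t_1)\kappa_l p'/(8K)$ and check the resulting exponential bound against the hypothesis on $t_2-t_1$. The only (immaterial) differences are your parameter choice $\alpha=3/4$, $\mu=(t_2-t_1)\kappa_l p'/(2K)$ versus the paper's $\alpha=1/2$, $\mu=(t_2-t_1)\kappa_l p'/(4K)$, and your explicit dropping of the boundary term at $t=t_1$ so that every index $t-1$ is covered by the event $\mathcal E$ — a bookkeeping point the paper glosses over.
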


\begin{proof}
	Denote ${S}_{t_1,t_2}\coloneqq \cap_{t_1\leq t\leq t_2} A_t$, by Lemma~\ref{Eigenvalue condition} we have 
	\begin{align*}
	\lambda_{\min}( \sum_{t=t_1}^{t_2} \E[ X_tX_t^T\bm 1\{a_t = i\}\lvert \mathcal F_{t-1}] )&\geq  \sum_{t=t_1}^{t_2}\bm 1_{A_{t-1}}\bm 1_{H_0} \dfrac{\kappa_l p'}{2K}.
	\end{align*}
	That implies \begin{align*}
		&\mathbb{P}( \lambda_{\min} (\sum_{t = t_1}^{t_2} X_tX_t^T\bm 1\{a_t = i\}) \leq  \dfrac{t_1-t_2}{4K}\kappa_l p',S_{t_1,t_2},H_0)\\
		\leq & \mathbb{P}( \lambda_{\min} (\sum_{t = t_1}^{t_2} X_tX_t^T\bm 1\{a_t = i\}) \leq  \dfrac{t_1-t_2}{4K}\kappa_l p', \E[ X_tX_t^T\bm 1\{a_t = i\}\lvert \mathcal F_{t-1}] )\geq  (t_2-t_1) \dfrac{\kappa_l p'}{2K}).
	\end{align*}
Then selecting $\alpha = 1/2$ and $\mu \ = (t_2-t_1) \cdot\dfrac{\kappa_l p'}{4K}$ in Lemma~\ref{Tropp lemma}, we have 
\begin{align*}
	\mathbb{P}(\lambda_{\min}(\sum_{t = t_1}^{t_2} X_tX_t^T\bm 1\{a_t = i\}) \leq (t_2-t_1) \dfrac{\kappa_l p'}{8K},  S_{t_1,t_2},H_0 )   \leq d(\dfrac{1}{\sqrt{e/2}})^{10\log (\frac{d}{\alpha'})}\leq  \alpha'.
\end{align*}
That leads to the claim.
\end{proof}

In warm up stage,  we have the following lemma.\begin{lemma}\label{lemmaE8} As long as $s_0\geq C(\kappa_{l}p')^{-2}\max\{\log \dfrac{1}{\alpha} ,d\}$ for some absolute constant $C$, we have with probability at least $1-\alpha,$ \begin{align*}
		\lambda_{\min}(\sum_{t=1}^{Ks_0} \bm 1\{a_t = i\} X_tX_t^T)^{-1}\leq \dfrac{2}{s_0 p'\kappa_l},\quad \forall i\in [K].
	\end{align*}    
	\end{lemma}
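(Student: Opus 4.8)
The plan is to mirror the proof of Lemma~\ref{smallest}, exploiting the simplification specific to the warm-up phase: the schedule $a_t=(t\bmod K)+1$ is a deterministic round-robin, so the arm pulled at each round carries no information about the context. First I would fix an arm $i\in[K]$ and observe that over the $Ks_0$ warm-up rounds it is pulled at exactly $s_0$ fixed time indices; writing $M_i:=\sum_{t=1}^{Ks_0}\bm 1\{a_t=i\}X_tX_t^T$, the nonzero summands $X_tX_t^T$ therefore form a sequence of $s_0$ i.i.d.\ positive-semidefinite matrices drawn from $P_X$, since the selection is context-independent. This is exactly what lets me avoid the martingale/elimination bookkeeping of Lemma~\ref{minimal eigenvalue with dependence}: here ordinary i.i.d.\ concentration suffices.

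Second, I would produce an arm-independent lower bound on the population matrix. Discarding the indicator $\bm 1\{X\in U_i\}$ in Assumption~\ref{diversecondition} yields $\mathbb P((v^TX)^2\ge \kappa_l/K_{opt})>p'$ for every unit vector $v$, whence $\mathbb E[(v^TX)^2]\ge (\kappa_l/K_{opt})\,p'$, i.e.\ $\lambda_{\min}(\mathbb E[XX^T])\ge \kappa_l p'/K_{opt}$. Because this bound does not depend on $i$, the eigenvalue estimate will hold uniformly over arms, which is precisely what the conclusion "$\forall i\in[K]$" demands.

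Third comes the concentration step. I would fix a $1/4$-net $\mathcal N$ of $S^{d-1}_1$ with $\lvert\mathcal N\rvert\le 9^d$, so that $\lambda_{\min}(M_i)\ge 2\inf_{v\in\mathcal N}\sum_{t:\,a_t=i}(X_t^Tv)^2$, lower-bound each inner sum by $(\kappa_l/K_{opt})\sum_{t:\,a_t=i}\bm 1\{(X_t^Tv)^2\ge \kappa_l/K_{opt}\}$, and apply Hoeffding's inequality to the i.i.d.\ Bernoulli indicators (whose mean exceeds $p'$), followed by a union bound over $\mathcal N$. Since $\lVert X_t\rVert^2\le \CB^2$ almost surely, one may equivalently invoke the matrix-Chernoff bound of Lemma~\ref{Tropp lemma} with $R^2=\CB^2$, $\mu=s_0\kappa_l p'/K_{opt}$ and $\alpha=1/2$, giving a failure probability of order $d\,(2/e)^{\mu/(2\CB^2)}$. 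Either route requires $s_0$ to exceed a quantity of the form $\mathrm{poly}(\kappa_l,p')^{-1}\,(d+\log(K/\alpha))$, which is dominated by the stated $C(\kappa_l p')^{-2}\max\{\log(1/\alpha),d\}$; hence the displayed condition on $s_0$ is sufficient, and it gives $\lambda_{\min}(M_i)\ge \tfrac12 s_0\kappa_l p'/K_{opt}$ with probability at least $1-\alpha/K$.

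Finally I would union-bound over the $K$ arms to make the eigenvalue bound hold simultaneously for all $i\in[K]$ with probability at least $1-\alpha$, and then invert. The only point requiring care — and the sole real obstacle — is bookkeeping the $K_{opt}$ factor: dropping the $U_i$ indicator costs a $1/K_{opt}$ in the population eigenvalue, so the estimate produced is $\lambda_{\min}(M_i)^{-1}\le 2K_{opt}/(s_0\kappa_l p')$. Bounding $K_{opt}\le K$, this is the content of the displayed bound up to that $K$ factor, which is the very same $K$-dependence already present in the admissible choice of $s_0$ in Theorem~\ref{mainthmsm}; the argument is otherwise entirely routine given the i.i.d.\ structure of the warm-up, and the resulting eigenvalue bound is exactly what feeds into Lemma~\ref{warmupstageh}.
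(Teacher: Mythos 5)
The concentration half of your argument is sound: the warm-up schedule is round-robin, so the contexts seen by a fixed arm are indeed $s_0$ i.i.d.\ draws, and either your $1/4$-net-plus-Hoeffding argument (mirroring Lemma~\ref{smallest}) or the matrix Chernoff bound of Lemma~\ref{Tropp lemma} can replace the paper's route, which invokes operator-norm concentration of the i.i.d.\ sub-gaussian sample covariance. The genuine gap is in your population lower bound, and it is exactly the $K_{opt}$ factor you concede in your last paragraph. Discarding the indicator $\bm 1\{X\in U_i\}$ from Assumption~\ref{diversecondition} only yields $\lambda_{\min}(\E[XX^T])\ge \kappa_l p'/K_{opt}$, hence the inverse bound $2K_{opt}/(s_0\kappa_l p')$. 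That is not the stated lemma, and it cannot be absorbed into the choice of $s_0$: the conclusion is expressed in terms of $s_0$ itself, so your estimate is a factor $K_{opt}$ weaker no matter how large $s_0$ is taken. Nor is the loss cosmetic downstream — the proof of Lemma~\ref{lemmaAwarmup1} uses precisely the constant $2/(s_0 p'\kappa_l)$, so your version would inflate the warm-up estimation error by $K$, force the $s_0$ in Lemma~\ref{initlemmaE10} (and hence in Theorem~\ref{mainthmsm}) to grow by an extra factor of order $K^2$, and degrade the regret constants accordingly.

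The missing idea, which is how the paper's proof removes the $1/K_{opt}$, is to keep the indicators and aggregate over \emph{all} optimal arms rather than work with a single one. The regions $U_i$, $i\in K_{opt}$, are disjoint, so
\begin{align*}
\E[XX^T]\ \succeq\ \sum_{i\in K_{opt}}\E\big[XX^T\bm 1\{X\in U_i\}\big],
\end{align*}
and since $\lambda_{\min}$ is superadditive on symmetric matrices ($v^T(A+B)v\ge \lambda_{\min}(A)+\lambda_{\min}(B)$ for unit $v$),
\begin{align*}
\lambda_{\min}\big(\E[XX^T]\big)\ \ge\ \sum_{i\in K_{opt}}\lambda_{\min}\big(\E[XX^T\bm 1\{X\in U_i\}]\big)\ \ge\ K_{opt}\cdot \dfrac{\kappa_l p'}{K_{opt}}\ =\ \kappa_l p',
\end{align*}
where each summand is bounded exactly as in your step two, namely $\E[(v^TX)^2\bm 1\{X\in U_i\}]\ge (\kappa_l/K_{opt})\,\Prob\big((v^TX)^2\bm 1\{X\in U_i\}\ge \kappa_l/K_{opt}\big)\ge \kappa_l p'/K_{opt}$, but without throwing the indicator away. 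Summing over the $K_{opt}$ disjoint regions cancels the $1/K_{opt}$. (If you prefer your net argument, the same repair works there: for fixed $v$ the events $\{(v^TX_t)^2\bm 1\{X_t\in U_i\}\ge \kappa_l/K_{opt}\}$ are disjoint across $i$, so their union has probability at least $K_{opt}p'$, and Hoeffding applied to the indicators of that union recovers $\lambda_{\min}\ge s_0\kappa_l p'/2$.) With this population bound in hand, your concentration step to within $\kappa_l p'/2$, the union bound over arms, and inversion deliver the stated $2/(s_0 p'\kappa_l)$.
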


\begin{proof}
	 Since $X_{t}$ are i.i.d. for $(i-1)s_0+1 \leq t\leq is_0$,  using classical concentration results for i.i.d. sub-gaussian covariance matrix result (e.g. Theorem~6.5 in \cite{wainwright2019high} ), we have when $s_0>C  (\kappa_{l}p')^{-2}  \max\{\log \dfrac{1}{\alpha} , d\}$, with probability at least $1-\alpha$, 
	 \begin{align*}	
	 	\lVert \dfrac{1}{s_0}\sum_{t=1}^{Ks_0}\bm 1\{a_t = i\}X_tX_t^T-\E[X_1X_1^T]\rVert &\leq c_1(\sqrt{\dfrac{d}{s_0}}+{\dfrac{d}{s_0}})+c_2\max\{\sqrt{\dfrac{\log 1/\alpha}{s_0}},\dfrac{\log1/\alpha}{s_0} \}\nonumber\\
	 	&\leq  c_3(\sqrt{\dfrac{d}{s_0}}+\sqrt{\dfrac{\log(1/\alpha)}{s_0}})\\
	 	&\leq p'\kappa_l/2.
	 \end{align*}
	On the other hand, we have by Markov's inequality  \begin{align*}
		\lambda_{\min}\E[X_1X_1^T] \geq  \sum_{i\in K_{opt}} \lambda_{\min} \E[X_1X_1^T\bm 1\{X_1\in U_i\} ]\geq  \kappa_l p' 
	\end{align*}. Thus we have with probability at least $1-\alpha$, 
	 \begin{align*}
	 	\lambda_{\min}(\sum_{t= 1}^{Ks_0} X_tX_t^T)\geq s_0 p'\kappa_l/2.
	 \end{align*}
\end{proof}	

Now we can claim our first result about the private OLS-estimator in the warm up stage:

\begin{lemma}\label{lemmaAwarmup1}
	 Selecting $s_0$ as in Lemma~\ref{lemmaE8}  . For the warm up stage with private-OLS-estimator and length $Ks_0$, we have for any $\alpha>0$, with probability at least $1-\alpha$, \begin{align*}
	\sup_{i\in [K]}	\lVert \hat{\theta}_{t,i}-{\theta}_{i} \rVert\leq \dfrac{(4\CB\sigma_{\epsilon} + \pfactorc )\sqrt{t(\log(\frac{TK}{\alpha})+d) }}{s_0p'\kappa_l} \quad\text{holds for all $Ks_0 \leq t\leq T$}.
	\end{align*}
\end{lemma}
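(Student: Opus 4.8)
The plan is to mimic the error decomposition used in the proof of Lemma~\ref{est-ols}, but to track separately the contribution of the genuine warm-up observations (which carry the signal) and the purely synthetic noise injected at every subsequent round (which is what forces the $\sqrt{t}$ growth in the numerator). Fix an arm $i$ and write $A_{t,i} = \sum_{j=1}^t M_{j,i} + \tilde{c}\sqrt{t}\,I$, with $M_{j,i} = \mathbf{1}\{a_j=i\}X_jX_j^T + W_{j,i}$ and response $u_{j,i} = \mathbf{1}\{a_j=i\}r_jX_j + \xi_{j,i}$. Because the synthetic update for an unselected arm feeds $\psi^{OLS}_t(\mathbf{0},0;\cdot)=(W_{j,i},\xi_{j,i})$, the matrix $A_{t,i}$ accumulates at most $t$ fresh Gaussian Wigner blocks $W_{j,i}$ and Gaussian vectors $\xi_{j,i}$, all independent of the data and of each other. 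I would exploit this non-adaptivity of the noise: unlike the adaptive cross term, the injected noise is genuinely independent, so the standard Wigner and sub-Gaussian vector concentration bounds apply without a martingale argument. Substituting $r_j = X_j^T\theta_i+\epsilon_j$ on the pulled rounds gives the decomposition $\hat\theta_{t,i}-\theta_i = A_{t,i}^{-1}\big(\sum_j \mathbf{1}\{a_j=i\}X_j\epsilon_j + \sum_j\xi_{j,i} - \sum_j W_{j,i}\theta_i - \tilde{c}\sqrt{t}\,\theta_i\big)$.

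First I would lower-bound the denominator. By Lemma~\ref{lemmaE8}, with probability $1-\alpha$ the warm-up Gram matrix $\sum_{t\le Ks_0}\mathbf{1}\{a_t=i\}X_tX_t^T$ has smallest eigenvalue at least $s_0 p'\kappa_l/2$ for every $i$; the post-warm-up real pulls only add positive-semidefinite terms, so this floor persists for all $t\ge Ks_0$. The shift $\tilde{c}\sqrt{t}$ is chosen (exactly as in Lemma~\ref{est-ols}, but with $\sigma_{\varepsilon/2,\delta/2}$ to reflect the $(\varepsilon/2,\delta/2)$-mechanism) so that $\sum_{j\le t}W_{j,i}+\tilde{c}\sqrt{t}\,I\succeq 0$ with high probability, the point being that $\tilde{c}\sqrt{t}$ dominates $\lVert\sum_{j\le t}W_{j,i}\rVert\lesssim \pfactorc\sqrt{t(d+\log(TK/\alpha))}$. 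Weyl's inequality then yields $\lambda_{\min}(A_{t,i})\ge s_0 p'\kappa_l/2$ uniformly in $t$.

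Next I would bound the numerator term by term. The adaptive cross term $\sum_{j\le t}\mathbf{1}\{a_j=i\}X_j\epsilon_j$ is $\sigma_\epsilon^2\CB^2 t$-sub-Gaussian (by the same conditioning-and-induction argument as in Lemma~\ref{est-ols}), contributing $\lesssim \CB\sigma_\epsilon\sqrt{t(d+\log(TK/\alpha))}$; the two independent Gaussian sums $\sum_j\xi_{j,i}$ and $\sum_j W_{j,i}\theta_i$, together with the deterministic shift $\tilde{c}\sqrt{t}\,\theta_i$, each contribute $\lesssim \pfactorc\sqrt{t(d+\log(TK/\alpha))}$ using $\lVert\theta_i\rVert\le 1$. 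Collecting these bounds and dividing by the eigenvalue floor $s_0 p'\kappa_l/2$ gives the claimed $\dfrac{(4\CB\sigma_\epsilon+\pfactorc)\sqrt{t(d+\log(TK/\alpha))}}{s_0 p'\kappa_l}$, the factor $2$ from the floor being absorbed into the constant.

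The main obstacle—and the source of every $\log(TK/\alpha)$ factor—is that the estimate must satisfy the bound simultaneously for all $K$ arms and all $Ks_0\le t\le T$. I would handle this by a union bound over arms together with a union (or dyadic peeling) bound over the $T$ time indices, folding the resulting $\log(KT/\alpha)$ into the dimension-plus-log factor already present. The genuinely delicate point is to couple the two high-probability events—positive-definiteness of the shifted noise matrix and the eigenvalue floor of Lemma~\ref{lemmaE8}—so that the Weyl step and all the numerator estimates hold on a single event of probability $1-\alpha$ valid for every $t$ at once; once that common event is arranged, the remaining work is exactly the routine Wigner- and vector-concentration computations already carried out in the proof of Lemma~\ref{est-ols}.
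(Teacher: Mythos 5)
Your proposal follows essentially the same route as the paper's own proof: the identical error decomposition of $\hat\theta_{t,i}-\theta_i$, the same eigenvalue floor from Lemma~\ref{lemmaE8} preserved for all $t\ge Ks_0$ because the post-warm-up Gram contributions are positive semidefinite and the $\tilde c\sqrt t$ shift dominates the accumulated Wigner noise (Weyl's inequality), the same three numerator concentration bounds (adaptive cross term via conditioning/induction, independent $\xi$ and $W\theta_i$ sums, deterministic shift), and the same union bound over arms and time indices. The only immaterial bookkeeping discrepancy is that your $M_{j,i}$ carries a noise block $W_{j,i}$ at every round including warm-up rounds where arm $i$ is not pulled, whereas the algorithm injects noise during warm-up only for the pulled arm — the paper's indicator $(\bm 1\{a_s=i,\,s\le Ks_0\}+\bm 1\{s>Ks_0\})$ — which changes none of the bounds.
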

\begin{proof}
Denote $U_t = \sum_{s=1}^{t}(\bm 1\{a_s = i\}X_sX_s^T +(\bm 1\{a_s = i,s\leq Ks_0\}+\bm 1\{s>Ks_0\} ) W_{s})+\tilde{c}\sqrt{t}I_d, $ 
	we have \begin{align*}
		\hat{\theta}_{t,i} &= U_t^{-1}(\sum_{s=1}^t\bm 1\{a_s = i\} X_sy_s+(\bm 1\{ a_s = i,s\leq Ks_0\}+\bm 1\{s>Ks_0\})\xi_s  )\\
		&=U_t^{-1}(\sum_{s=1}^t\bm 1\{a_s = i\} [X_sX_s^T\theta_i+X_s\epsilon_s]+(\bm 1\{ a_s = i,s\leq Ks_0\}+\bm 1\{s>Ks_0\})\xi_s  )\\
		& = \theta_i+U_t^{-1}
		(\sum_{s=1}^t(\bm 1\{a_s = i\}X_s\epsilon_s+(\bm 1\{ a_s = i,s\leq Ks_0\}+\bm 1\{s>Ks_0\})(\xi_s-W_s\theta_i))-\tilde{c} \sqrt{t} I_d\theta_i  ).
	\end{align*}
	By $\lVert \sum_{s=1}^{Ks_0}\bm 1\{a_s = i\}W_s+\sum_{s = Ks_0+1}^t W_s\rVert\leq \tilde{c}\sqrt{t}, \forall Ks_0\leq t\leq T,i\in [K] $ with probability at least $1-\alpha$, we have with probability at least $1-2\alpha$, \begin{align*}
		[\lambda_{\min}(U)]^{-1}\leq \lambda_{\min}(\sum_{s=1}^{Ks_0} \bm 1\{a_s = i\}X_sX_s^T)^{-1}\leq \dfrac{2}{s_0p'\kappa_l},\quad\forall  Ks_0\leq t\leq T.
	\end{align*}
	On the other hand, we have  by the concentration of sub-gaussian random vector, the following bounds hold with probability at least $1-\alpha/(T^2K)$:\begin{align}
	&\lVert	\sum_{s=1}^{t}\bm 1\{a_s = i\}X_s\epsilon_s\rVert\leq C\CB\sigma_\epsilon\sqrt{t(d+\log(TK/\alpha))}, \label{normbound1} \\
	&\lVert	\sum_{s=1}^{t}(\bm 1\{a_s = i,s\leq Ks_0\}+\bm 1\{s>Ks_0\})\xi_s\rVert\leq C\pfactorc \sqrt{t(d+\log(TK/\alpha))}, \label{normbound2} \\
	&\lVert	\sum_{s=1}^{Ks_0}\bm 1\{a_s = i\}W_s\theta_i+\sum_{s=Ks_0+1}^tW_s\theta_i\rVert\leq \tilde{c}\sqrt{t}\lVert \theta_i\rVert\leq C\pfactorc\sqrt{t(d+\log(TK/\alpha) )}. \label{normbound3}
	\end{align}
	Gathering all bounds together, we have with probability at least $1-(2+\dfrac{1}{T^2})\alpha$, \begin{align*}
		\sup_{i\in [K]}\lVert \hat{\theta}_{t,i}-\theta_i\rVert\le \dfrac{2C}{s_0p'\kappa_l}(\CB\sigma_\epsilon+\sigma_{\varepsilon,\delta}) \sqrt{t(\log({TK}/{\alpha})+d)}. 
	\end{align*}
	That finishes the proof.
\end{proof}

\begin{lemma}
\label{initlemmaE10}
As long as 
\begin{align*}
s_0&\geq CK(\dfrac{\CB\sigma_\epsilon+\pfactorc}{\min\{\lambda_0,h_0\}  p'\kappa_l  })^2(d+\log(TK/\alpha)),\end{align*} we have with probability at least $1-\alpha$,\begin{align}
    &\sup_{i\in [K]}\lVert \hat{\theta}_{Ks_0,i}-\theta_i\rVert_2\leq \min\{\lambda_0,h_0 \} ,\label{INIT1} \\
	&\sup_{i\in K_{opt}}\lVert \hat{\theta}_{s,i}-\theta_i\rVert_2\leq \lambda_0\text{ holds uniformly for $Ks_0\leq s\leq (K+1)s_0$,}\label{INIT2}\\
	& C\dfrac{K(\CB\sigma_{\epsilon}+\pfactorc)\sqrt{d+\log(TK/\alpha)} }{\sqrt{t-Ks_0}\kappa_l p'}\leq  \lambda_0 \text{ holds for all $t\geq 2Ks_0$ .}\label{INIT3}
\end{align} 
\end{lemma}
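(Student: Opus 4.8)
The plan is to obtain all three displays from the single uniform warm-up error bound of Lemma~\ref{lemmaAwarmup1}, combined with the prescribed choice of $s_0$; displays \eqref{INIT1} and \eqref{INIT2} will be read off from one high-probability event, while \eqref{INIT3} is purely deterministic. First I would check that the stated $s_0$ dominates the threshold $s_0\geq C(\kappa_l p')^{-2}\max\{\log(1/\alpha),d\}$ required by Lemma~\ref{lemmaE8}, so that Lemma~\ref{lemmaAwarmup1} applies and gives, with probability at least $1-\alpha$,
\[
	\sup_{i\in[K]}\lVert \hat{\theta}_{t,i}-\theta_i\rVert \leq \frac{(4\CB\sigma_\epsilon+\pfactorc)\sqrt{t\,(d+\log(TK/\alpha))}}{s_0\, p'\kappa_l}\qquad\text{for all }Ks_0\leq t\leq T.
\]

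For \eqref{INIT1} I would specialise this to $t=Ks_0$, where $\sqrt{t}/s_0=\sqrt{K/s_0}$, so the right-hand side equals $\tfrac{(4\CB\sigma_\epsilon+\pfactorc)\sqrt{K(d+\log(TK/\alpha))}}{\sqrt{s_0}\,p'\kappa_l}$; demanding this be $\leq\min\{\lambda_0,h_0\}$ and solving for $s_0$ returns precisely a bound of the form $s_0\geq C K\big(\tfrac{\CB\sigma_\epsilon+\pfactorc}{\min\{\lambda_0,h_0\}p'\kappa_l}\big)^2(d+\log(TK/\alpha))$, the hypothesis (the constant $C$ absorbing the $4$). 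For \eqref{INIT2} I would use $K_{opt}\subseteq[K]$ and $Ks_0\leq s\leq(K+1)s_0$ to bound $\sqrt{s}\leq\sqrt{(K+1)s_0}$, so the warm-up bound yields $\sup_{i\in K_{opt}}\lVert\hat{\theta}_{s,i}-\theta_i\rVert\leq\tfrac{(4\CB\sigma_\epsilon+\pfactorc)\sqrt{(K+1)(d+\log(TK/\alpha))}}{\sqrt{s_0}\,p'\kappa_l}$ uniformly in $s$; since $\min\{\lambda_0,h_0\}\leq\lambda_0$ makes the requirement strictly weaker, the same $s_0$ condition forces this below $\lambda_0$.

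Finally, \eqref{INIT3} carries no randomness: for $t\geq 2Ks_0$ one has $t-Ks_0\geq Ks_0$, whence
\[
	C\frac{K(\CB\sigma_\epsilon+\pfactorc)\sqrt{d+\log(TK/\alpha)}}{\sqrt{t-Ks_0}\,\kappa_l p'}\leq C\frac{\sqrt{K}(\CB\sigma_\epsilon+\pfactorc)\sqrt{d+\log(TK/\alpha)}}{\sqrt{s_0}\,\kappa_l p'},
\]
and squaring the target inequality against $\lambda_0$ again reduces to $s_0\geq C'K\big(\tfrac{\CB\sigma_\epsilon+\pfactorc}{\lambda_0 p'\kappa_l}\big)^2(d+\log(TK/\alpha))$, which is subsumed by the hypothesis. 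I do not anticipate a genuine obstacle; the only care needed is to fix the absolute constant $C$ in the $s_0$ threshold large enough to simultaneously dominate the three elementary lower bounds extracted above, and to note that the single $1-\alpha$ event of Lemma~\ref{lemmaAwarmup1} already serves both \eqref{INIT1} and \eqref{INIT2}, so no additional union bound is incurred.
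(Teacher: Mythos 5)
Your proposal is correct and follows essentially the same route as the paper: the paper's own (one-line) proof likewise obtains \eqref{INIT1} and \eqref{INIT2} by plugging the prescribed $s_0$ into the uniform warm-up bound of Lemma~\ref{lemmaAwarmup1}, and observes that \eqref{INIT3} is a deterministic consequence of the choice of $s_0$. Your write-up simply makes explicit the elementary algebra (evaluating the bound at $t=Ks_0$, resp.\ $t\leq(K+1)s_0$, and using $t-Ks_0\geq Ks_0$) and the fact that a single high-probability event suffices, which the paper leaves implicit.
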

\begin{proof}
	To show \eqref{INIT1},\eqref{INIT2}, we can just plug the value of $s_0$ into the upper bound in Lemma~\ref{lemmaAwarmup1}. \eqref{INIT3} comes directly from the value of $s_0$.
\end{proof}
Now, we can show the following result: 
\begin{lemma}\label{lemmaAwarmup2}
	With the choice of $s_0$ same as in Lemma \ref{initlemmaE10}, for $t>Ks_0$, denote $t' = t - Ks_0  $ and $\tilde{t}_0 = 2Ks_0$, we have if 
	$$
	H_0 \text{   holds and   } \lVert \hat{\theta}_{t,i}-\theta_i\rVert_2\leq  \min\{\tilde{U}_s(\alpha),\lambda_0\} \text{ holds uniformly for $i\in K_{opt},\tilde{t}_0 \leq s\leq t $  },
	$$
	 with probability at least $1-\sum_{j=1}^{t'}\dfrac{2}{j^2}\alpha$, then  
	 $$
	 H_0 \text{  holds and   } \lVert \hat{\theta}_{t,i}-\theta_i\rVert_2\leq  \min\{\tilde{U}_s(\alpha),\lambda_0\} \text{ holds uniformly for $i\in K_{opt},  \tilde{t}_0 \leq s\leq t+1 $  },
	 $$
	 with probability at least $1-\sum_{j=1}^{t'+1}\dfrac{2}{j^2}\alpha$ , where \begin{align*}
		\tilde{U}_s(\alpha) = C\dfrac{K(\CB\sigma_{\epsilon}+\pfactorc)\sqrt{d+\log(TK/\alpha)} }{\sqrt{s}\kappa_l p'}.
	\end{align*}
\end{lemma}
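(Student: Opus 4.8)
The plan is to prove the statement by an induction on $t$ in which the extra failure probability charged when passing from $t$ to $t+1$ is exactly the prescribed term $\frac{2}{(t'+1)^2}\alpha$. Writing $G_t$ for the event
\[
  G_t := H_0 \cap \Big\{ \sup_{i\in K_{opt}} \lVert \hat{\theta}_{s,i}-\theta_i\rVert_2 \le \min\{\tilde{U}_s(\alpha),\lambda_0\} \ \text{ for all } \tilde{t}_0 \le s \le t \Big\},
\]
the hypothesis is $\Prob(G_t)\ge 1-\sum_{j=1}^{t'}\frac{2}{j^2}\alpha$ and the goal is $\Prob(G_{t+1})\ge 1-\sum_{j=1}^{t'+1}\frac{2}{j^2}\alpha$. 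Splitting $\Prob(G_{t+1}^c)\le \Prob(G_t^c)+\Prob(G_{t+1}^c\cap G_t)$, it suffices to show the new failure $\Prob(G_{t+1}^c\cap G_t)\le \frac{2}{(t'+1)^2}\alpha$. The design of the induction is to break the circular dependence between estimation accuracy and the informativeness of the per-arm sample covariance: the greedy rule makes $a_{t+1}$ a function of $\hat\theta_t$ only, so on $G_t$ every arm choice entering the covariance through time $t$ is a consequence of estimates already controlled.

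First I would record that on $G_t$ the errors of the $K_{opt}$ arms stay below $\lambda_0$ not just on $[\tilde t_0,t]$ but on the whole post-warm-up window $[Ks_0,t]$: the sub-window $[Ks_0,\tilde t_0]$ is covered by the warm-up error bound of Lemma~\ref{lemmaAwarmup1} together with the choice of $s_0$ from Lemma~\ref{initlemmaE10} (cf. \eqref{INIT1}--\eqref{INIT2}), while $[\tilde t_0,t]$ is the inductive hypothesis. Hence $G_t\subseteq S_{Ks_0,t}\cap H_0$, which is precisely the event appearing in the hypothesis of Lemma~\ref{minimal eigenvalue with dependence}.

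Second, I would unlock a quantitative eigenvalue lower bound. Applying Lemma~\ref{minimal eigenvalue with dependence} with $t_1=Ks_0$, $t_2=t$ and $\alpha'=\frac{1}{(t'+1)^2}\alpha$ (the side condition $t'\cdot\frac{\kappa_l p'}{8K}>10\CB^2\log(d/\alpha')$ holds since $t'\ge Ks_0$ and $s_0$ is large), the joint bound there gives
\[
  \Prob\Big(\lambda_{\min}\big(\textstyle\sum_{s=Ks_0}^{t} X_sX_s^T\mathbf{1}\{a_s=i\}\big)\le \tfrac{t'}{8K}\kappa_l p',\; G_t\Big)\le \tfrac{1}{(t'+1)^2}\alpha .
\]
Since the dropped terms $s=t+1$ and $s<Ks_0$ are positive semidefinite, the same lower bound holds for the Gram matrix feeding $\hat\theta_{t+1,i}$. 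I would then reuse the estimator decomposition and the sub-gaussian norm bounds \eqref{normbound1}--\eqref{normbound3} of Lemma~\ref{lemmaAwarmup1}, now applied only at the single new time $t+1$ with failure probability $\frac{1}{(t'+1)^2}\alpha$, so that the numerator is $\lesssim (\CB\sigma_\epsilon+\pfactorc)\sqrt{(t+1)(d+\log(TK/\alpha))}$. Dividing by $\lambda_{\min}\gtrsim \frac{t'}{K}\kappa_l p'$ and using $t'\ge (t+1)/3$ for $t\ge \tilde t_0=2Ks_0$ (so that $\sqrt{t+1}/t'\lesssim 1/\sqrt{t+1}$) yields $\lVert \hat\theta_{t+1,i}-\theta_i\rVert_2\le \tilde U_{t+1}(\alpha)$ for all $i\in K_{opt}$; that $\tilde U_{t+1}(\alpha)\le\lambda_0$ when $t+1\ge 2Ks_0$ is exactly \eqref{INIT3}, so the $\min$ is respected. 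A union bound over the two new failure events (eigenvalue and noise) charges $\frac{1}{(t'+1)^2}\alpha+\frac{1}{(t'+1)^2}\alpha=\frac{2}{(t'+1)^2}\alpha$, closing the induction; $H_0$ is a fixed event about the warm-up estimates and carries through unchanged.

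The step I expect to be the main obstacle is exactly the coupling in the first two paragraphs: the covariance that controls the accuracy of $\hat\theta_{t+1}$ is assembled from arms selected by the (a priori only approximately accurate) estimates $\hat\theta_s$, $s\le t$. The induction dissolves this by never needing more than the containment $G_t\subseteq S_{Ks_0,t}\cap H_0$ to trigger the matrix-martingale concentration of Lemma~\ref{minimal eigenvalue with dependence} (ultimately Lemma~\ref{Tropp lemma} via Lemma~\ref{Eigenvalue condition}); once the eigenvalue lower bound is in hand the estimation-error bound is the routine computation from Lemma~\ref{lemmaAwarmup1}. A secondary bookkeeping point is matching the per-step budget $\frac{2}{(t'+1)^2}\alpha$: it is met by allocating $\frac{1}{(t'+1)^2}\alpha$ each to the eigenvalue and the noise events, with the resulting $\log((t'+1)^2/\alpha)=O(\log(TK/\alpha))$ absorbed into the constant $C$ in $\tilde U_s(\alpha)$.
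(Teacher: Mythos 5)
Your proposal is correct and takes essentially the same route as the paper's proof: both run the same one-step induction, invoking Lemma~\ref{minimal eigenvalue with dependence} on the event that all past post-warm-up estimates stayed within $\lambda_0$ (with the warm-up bound of Lemma~\ref{lemmaAwarmup1} covering the initial sub-window), then applying the noise bounds \eqref{normbound1}--\eqref{normbound3} at the single new time, and using \eqref{INIT3} so that $\tilde{U}_{t+1}(\alpha)\le\lambda_0$ makes the $\min$ harmless. The only difference is bookkeeping of the failure probabilities: you budget $\alpha/(t'+1)^2$ for each of the two new events directly, whereas the paper charges $O(\alpha/T^2)$ per event and then uses $t'+1\le T$ to fit the $\sum_j 2\alpha/j^2$ accounting — the two are interchangeable.
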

\begin{proof}
	Denote ${S}_{\tilde{t}_0,t} =\{ \lVert \hat{\theta}_{s,i}-\theta_i\rVert\leq\min\{ \tilde{U}_s(\alpha) ,\lambda_0\},\forall K\in K_{opt}, \forall  \tilde{t}_0 \leq s\leq t \},\tilde{A}_{t}= \{\sup_{i\in K_{opt}}\lVert \hat{\theta}_{i,t}-\theta_i\rVert\leq \tilde{U}_t(\alpha) \} $ , we have by   Lemma~\ref{minimal eigenvalue with dependence} \begin{align*}
		\mathbb{P}(S_{\tilde{t}_0,t}, H_0, \lambda_{\min}( \sum_{s=1}^{t} X_sX_s\bm 1\{a_s = i\}) > \dfrac{t'\kappa_l p'}{8K})\geq 1-\dfrac{\alpha}{2KT^2}.
	\end{align*}
	Applying the inequalities \eqref{normbound1},\eqref{normbound2} \eqref{normbound3}, we have \begin{align*}
	\mathbb{P}(H_0,{S}_{\tilde{t}_0,t},A_{t+1})&\geq 1-\sum_{j=1}^{t'}\dfrac{2}{j^2}\alpha-\dfrac{3\alpha}{2T^2}- \sum_{i\in K_{opt}}\mathbb{P}(H_0,{S}_{\tilde{t}_0,t},\tilde{A}_{t+1},\lambda_{\min}( \sum_{s=1}^{t} X_sX_s\bm 1\{a_s = i\}) \leq  \dfrac{t'\kappa_l p'}{4})\\
		&\geq  1-\sum_{j=1}^{t'}\dfrac{2}{j^2}\alpha-\dfrac{2\alpha}{T^2}\\
		&\geq 1-2{\sum_{j=1}^{t'+1}}\dfrac{1}{j^2}\alpha.
	\end{align*}
	 By the selection of $s_0$ ,  we have $\tilde{U}_{s}(\alpha)\leq \lambda_0$ for $\tilde{t}_0 \leq s\leq t+1$,  and as a result, $
	 	\mathbb{P}(H_0,S_{\tilde{t}_0,t+1}) =	 \mathbb{P}(H_0,S_{\tilde{t}_0,t},\tilde{A}_{t+1}).$
	 Thus the claim holds.
\end{proof}
\begin{proof}[Proof of Lemma~\ref{OLSTHM}] Lemma~\ref{OLSTHM} is implied directly by Lemma~\ref{lemmaAwarmup2} and Lemma~\ref{initlemmaE10}. 
	
\end{proof}

\subsection{Proof of Lemma~\ref{SGDTHM} }
\begin{proof}
For the estimator $\hat\theta_{Ks_0,i} $ at the end of warm up stage, since the action is independent of the contexts, every $\hat\theta_{Ks_0,i}$ can be seen as an output of performing private gradient descent over $s_0$ i.i.d. samples. Without loss of generality, we perform the analysis for the parameter of the first arm $\hat\theta_{Ks_0,1}$ (notice that by the sampling strategy in the warm up stage, we have $\hat\theta_{Ks_0,1}=\hat\theta_{s_0,1}$). The result for other $\hat\theta_{Ks_0,i}$ can be established using the same argument. For $2 \leq t\leq s_0$, \begin{align*}
	\lVert \hat{\theta}_{t,i}-\theta_i\rVert^2 & =  \lVert \hat{\theta}_{t-1,i}-\eta_{t}\hat{g}_{t} -\theta_i\rVert^2\\
	&=\lVert \hat{\theta}_{t-1,i}-\theta_i\rVert^2-2\eta_{t}\hat{g}_{t}^T(\hat\theta_{t-1,i}-\theta_i)+2\eta_{t}^2\lVert \hat{g}_{t}\rVert^2
\end{align*}
Here $\hat{g}_{t}: = \Psi_{\varepsilon}[(\mu(X_t^T\hat\theta_{t,i})-r_t)X_t]$, by the unbiasedness of 
$\Psi_{\varepsilon}$ in Lemma~\ref{privacy-bounded-vector} we have 
\begin{align*}
	& \E[ \Psi_\varepsilon((\mu(X_t^T\hat\theta_{t-1,i})-r_t) X_t)^T(\hat\theta_{t-1,i}-\theta_i)  \lvert \mathcal F_{t-1} ]\\
	& =  \E[(\mu(X_t^T\hat\theta_{t-1,i})-\mu(X_t^T\theta_{i}))X_t^T(\hat\theta_{t-1,i}-\theta_i )\lvert \mathcal F_{t-1} ] \\
	& \geq \zeta \E[[X_t^T(\hat\theta_{t-1,i}-\theta_i)]^2 \lvert \mathcal F_{t-1} ]\\
	& \geq  \zeta \kappa_l   p' \lVert \hat\theta_{t-1,i}-\theta_i\rVert^2. 
\end{align*}
We get \begin{align*}
	\lVert \hat\theta_{t,i}-\theta_i\rVert^2 \leq (1-2\zeta  \kappa_l p' \eta_t )\lVert \hat\theta_{t-1,i}-\theta_i\rVert^2+2\eta_t(\E[\hat{g}_{t}\lvert \mathcal F_{t-1} ]- \hat{g}_t )^T(\hat\theta_{t-1,i}-\theta_i)+2\eta_t^2\lVert \hat{g}_t\rVert^2.
\end{align*}
Notice $\lVert \hat{g}_t\rVert_2^2$ is upper bounded by $r_{\varepsilon,d}^2$. Now using the same argument as in the proof of Proposition 1 of \cite{rakhlin2011making} leads to the following result: \begin{lemma}
	If we pick $\eta_t = 1/(\zeta\kappa_l p't)$ in the warm up stage, then with probability at least $1-\alpha$, \begin{align}\label{ggdd0}
		\sup_{i\in [K]} \lVert \hat{\theta}_{Ks_0,i}- {\theta}_{i}\rVert^2\leq C \dfrac{(\log(\log(KT)/\delta)+1)r_{\varepsilon,\delta}^2 }{ \zeta^2\kappa_l^2 p'^2s_0}.
	\end{align}
\end{lemma}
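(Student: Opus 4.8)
The plan is to prove the per-arm bound for a single run of private SGD and then union-bound over the $K$ arms. During the warm up stage arm $i$ is pulled on exactly $s_0$ rounds whose contexts are i.i.d., and the iterate $\hat\theta_{\cdot,i}$ is only updated on those rounds (the synthetic zero-update rounds leave it unchanged), so $\hat\theta_{Ks_0,i}$ equals the output of $s_0$ effective SGD steps on i.i.d.\ data with the re-indexed schedule $\eta_j = 1/(\zeta\kappa_l p' j)$, $j=1,\dots,s_0$. It therefore suffices to control one such run at confidence $1-\alpha/K$ and then take a union bound.

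Fixing an arm, I would start from the one-step recursion already derived and specialize $\eta_t = 1/(\lambda t)$ with effective strong-convexity constant $\lambda := \zeta\kappa_l p'$ coming from the gradient inner-product lower bound, so that the contraction factor becomes $1-2/t$. Writing $D_t := \lVert \hat\theta_{t,i}-\theta_i\rVert^2$ and $\xi_t := (\E[\hat g_t\mid \mathcal F_{t-1}]-\hat g_t)^T(\hat\theta_{t-1,i}-\theta_i)$, the recursion reads
\begin{align*}
	D_t \le \Big(1-\tfrac{2}{t}\Big) D_{t-1} + \frac{2}{\lambda t}\,\xi_t + \frac{2 r_{\varepsilon,d}^2}{\lambda^2 t^2},
\end{align*}
where $\E[\xi_t\mid\mathcal F_{t-1}]=0$ by the unbiasedness of $\Psi_\varepsilon$ in Lemma~\ref{privacy-bounded-vector} and $\lVert \hat g_t\rVert\le r_{\varepsilon,d}$. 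Unrolling to step $s_0$, the product of contraction factors satisfies $\prod_{s=t+1}^{s_0}(1-2/s) \asymp t^2/s_0^2$, so the deterministic driving term telescopes to the target rate $O\big(r_{\varepsilon,d}^2/(\lambda^2 s_0)\big)$ and the entire difficulty is concentrating the martingale part $\tfrac{2}{\lambda s_0^2}\sum_{t} t\,\xi_t$.

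This is where I would invoke the argument of Proposition~1 in \cite{rakhlin2011making}. The crucial feature is that the conditional variance of $\xi_t$ is \emph{self-bounded}: by Cauchy--Schwarz $\lvert \xi_t\rvert \le 2 r_{\varepsilon,d}\sqrt{D_{t-1}}$, hence $\E[\xi_t^2\mid\mathcal F_{t-1}]\le 4 r_{\varepsilon,d}^2 D_{t-1}$, so the noise level is governed by precisely the quantity we are trying to bound and a direct Bernstein/Freedman estimate is circular. The remedy is the peeling device of \cite{rakhlin2011making}: apply a Freedman-type martingale inequality on each of the $O(\log s_0)$ dyadic blocks of indices, combined with an inductive ``small $D_{t-1}$ implies small induced noise'' argument, paying a union bound over the $O(\log s_0)$ scales. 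It is exactly this union bound that converts what would be a $\log s_0$ factor into the doubly-logarithmic factor $\log(\log(s_0)/\alpha)+1$ appearing in the claim, and the self-bounded martingale control is the genuine obstacle; everything else is deterministic geometric-sum bookkeeping that the $\eta_t = 1/(\lambda t)$ schedule is designed to make clean.

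Finally I would union-bound the single-arm estimate over $i\in[K]$, replacing $\alpha$ by $\alpha/K$ and absorbing the extra $\log K$ into the doubly-logarithmic factor, which gives
\begin{align*}
	\sup_{i\in[K]}\lVert \hat\theta_{Ks_0,i}-\theta_i\rVert^2 \le C\,\frac{(\log(\log(Ks_0)/\alpha)+1)\, r_{\varepsilon,d}^2}{\zeta^2\kappa_l^2 p'^2\, s_0}
\end{align*}
with probability at least $1-\alpha$, which is the stated bound up to relabelling $Ks_0 \le TK$ inside the doubly-logarithmic factor.
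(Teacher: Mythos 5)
Your proposal matches the paper's proof essentially step for step: the same one-step recursion with contraction constant $\lambda=\zeta\kappa_l p'$ obtained from Assumption~\ref{assump:link} and the diversity condition, the unbiasedness of $\Psi_\varepsilon$ and the bound $\lVert\hat g_t\rVert\le r_{\varepsilon,d}$, followed by an appeal to the argument of Proposition~1 in \cite{rakhlin2011making} and a union bound over the $K$ arms. You spell out more explicitly than the paper what that cited argument involves (the self-bounded martingale variance and the Freedman-plus-peeling device yielding the doubly logarithmic factor), but this is elaboration of the same route, not a different one.
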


Notice that in our algorithm, when $t>Ks_0$, for any $i\in K_{opt}$, the private gradient descent formula is given by\begin{align*}
	 \hat\theta_{t,i} = \hat\theta_{t-1,i}-\eta_t \tilde{g}_{t},
\end{align*} 
with  $\tilde{g}_t = \bm 1\{a_t = i\} \hat{g}_{t}+\bm 1\{a_t \neq i \} \Psi_\varepsilon(0).$ Again without loss of generality we assume that $1\in K_{opt}$, and we provide the analysis for $i=1$, the argument is same for other $i\in K_{opt}$: \begin{align*}
	\E[\tilde{g}^T(\hat{\theta}_{t-1,1}-\theta_1)\lvert \mathcal F_{t-1} ]& = 
	\E[\bm 1\{a_t = i\} \hat{g}^T(\hat{\theta}_{t-1,1}-\theta_1)\lvert \mathcal F_{t-1} ]\\
	&=\E[ \bm 1\{a_t = i\} (\mu(X_t^T\hat\theta_{t-1,1})-\mu(X_t^T\theta_{i}))X_t^T(\hat\theta_{t-1,1}-\theta_1 )\lvert \mathcal F_{t-1} ]\\
	&\geq \zeta \E[ \bm 1\{a_t = i\}[X_t^T(\hat\theta_{t-1,1}-\theta_1 )]^2\lvert \mathcal F_{t-1} ]\\
	&\geq  \bm 1_{A_t, H_0} \zeta \kappa_l p'\eta_t \lVert \hat\theta_{t-1,1}-\theta_1\rVert^2/K
\end{align*}
select $\eta_t\coloneqq K_{opt}/(\zeta\kappa_lp't')$, with $t' = t-(K-1)s_0$ we have then  \begin{align*}
	\lVert \hat{\theta}_{t,1}-\theta_1\rVert_2^2\leq (1- \dfrac{2}{t'}\bm 1_{A_t, H_0})\lVert \hat{\theta}_{t-1,1}-\theta_1\rVert^2 + \dfrac{2K}{\zeta\kappa_l p' t'} (\E[\tilde{g}_t\lvert\mathcal F_{t-1} ]-\tilde{g}_t)^T(\hat{\theta}_{t-1,1}-\theta_1)+2(\dfrac{K r_{\varepsilon,d}}{\zeta\kappa_lp' t'})^2
\end{align*}
If we denote $S_t: =\cap_{s=Ks_0}^{t} A_s $, then using the above inequality recursively until $t = Ks_0+1$(i.e. until $t' = s_0+1$) , we have \begin{align*}
	\bm 1_{S_{t-1}, H_0} \lVert \hat{\theta}_{t,1}-\theta_1\rVert^2 &\leq  \dfrac{s_0(s_0-1)}{t'(t'-1)} \lVert \hat{\theta}_{Ks_0,1}-\theta_1\rVert^2	+2(\dfrac{K r_{\varepsilon,d}}{\zeta\kappa_lp' t'})^2\\
	&+\dfrac{2K}{(t'-1)t' \zeta\kappa_l p' }\sum_{s = Ks_0+1}^t (\E[\tilde{g}_s\lvert\mathcal F_{t-1} ]-\tilde{g}_s)^T(\hat{\theta}_{s-1,1}-\theta_1).
\end{align*}
Then it follows from the same proof as in Proposition 1 in \cite{rakhlin2011making} that for any fixed $Ks_0<t\le T$, we have with probability at least $1-\alpha/T$, \begin{align}\label{ggdd1}
	\bm 1_{S_{t-1}, H_0} \lVert \hat{\theta}_{t,1}-\theta_1\rVert^2 \leq \dfrac{s_0(s_0-1)}{t'(t'-1)} \lVert \hat{\theta}_{Ks_0,1}-\theta_1\rVert^2 +C \dfrac{K^2(\log(TK\log(TK)/\alpha)+1)r_{\varepsilon,d}^2}{\zeta^2\kappa_l^2p'^2t'},
\end{align}
Now choose $s_0\geq 2C \dfrac{K^2(\log(TK\log(TK)/\alpha)+1)r_{\varepsilon,d}^2}{ \zeta^2\kappa_l^2p'^2\min\{\lambda_0,h_0\}^2},$ so that the second term in \eqref{ggdd1} is less or equal to $\lambda_0/2$, 
we have $\mathbb{P}(S_{Ks_0+1},H_0)\geq 1-2\alpha$ by \eqref{ggdd0}. 
And by  calling \eqref{ggdd1} recursively we can get $\mathbb{P}(S_{t-1},H_0)> 1-2\alpha-\dfrac{t-Ks_0}{T}\alpha\geq 1-3\alpha,\forall Ks_0<t\leq T$. 
Then with probability at least $1-3\alpha$, we have \begin{align*}
	\lVert\hat{\theta}_{t,1}-\theta_1\rVert^2\leq C \dfrac{K^2(\log(3TK\log(TK)/\alpha))r_{\varepsilon,d}^2}{\zeta^2\kappa_l^2p'^2(t-(K-1)s_0)},\quad \forall Ks_0 < t\leq T.
\end{align*}
The above inequality is because the term $\dfrac{s_0(s_0-1)}{t'(t'-1)}\lVert \hat{\theta}_{Ks_0,1}-\theta_1\rVert^2\leq \dfrac{s_0(s_0-1)}{t'(t'-1)}\dfrac{\min\{\lambda_0,h_0\}}{2}$, which can be absorbed into the constant $C$. \end{proof}

\section{Proof of Theorem~\ref{thm:single-lower minimax bound}}\label{lower-bound-single}
In this section, we would give a proof on the Theorem~\ref{thm:single-lower minimax bound} by combining the argument in \cite{Han2020} and the divergence contraction inequality in \cite{duchi2018minimax}.

\begin{proof}[Proof of Theorem~\ref{thm:single-lower minimax bound}]
Consider the two-arm stochastic contextual bandit environment: for each d-dimensional context $i=1$ or $2$, $x_{t,i}\sim \mathcal N(0, \frac{1}{d}I_d)$ independently. 
If choosing action $a_t$ at time t, the reward $y_t$ is generated via $y_t = x_{t,a_t}^T\theta+\epsilon_t$ with $\epsilon_t\sim_{i.i.d.}\mathcal N(0,1)$. 
Given any fixed $\varepsilon$-LDP bandit algorithm $\pi$ with $\varepsilon\leq 1$, we denote its decision at  $t$-th step by $a_t$, by definition $a_t$ can be seen as a function of current contextual $x_{t,1},x_{t,2}$ and all history outputs $(x_{1, a_1},y_1,x_{2, a_2},y_2,\dots,x_{t-1, a_{t-1}},y_{t-1})$. 
Since the algorithm is under the $\varepsilon$-LDP constraint, each $a_t$ can only access $S_t\coloneqq(M_1( x_{1, a_1},y_1),M_2( x_{2, a_2},y_2),\dots,M_{t-1}( x_{t-1, a_{t-1}},y_{t-1}))$ with $M_1,\dots,M_{t-1}$ a sequence of $\varepsilon$-LDP mechanisms. 
We denote the distribution of $S_t$ by $Q_\theta^{t}$, and we have \begin{align}
	\begin{aligned}
	&\E_{\theta\sim Q_0}[\E_{Q_\theta}^t[(x_{t,a_t^*}-x_{t,a_t})^T\theta\lvert x_{t,1},x_{t,2}]] \\
	= &\E_{\theta\sim Q_0}[((x_{t,1}-x_{t,2})^T\theta)_+Q_{\theta}^t(a_t(S_t,x_t)=2) +((x_{t,2}-x_{t,1})^T\theta)_+Q_{\theta}^t(a_t(S_t,x_t)=1)],
	\label{lower-bound-est}
	\end{aligned}
\end{align}
where $(x)_+$ denote $\max\{x,0\}$ and $Q_0$ denote the uniform distribution over $\triangle S^{d-1}_1$ with $\triangle>0$ some positive number to be determined, we define $Q_1,Q_2$ as $$\dfrac{dQ_1}{dQ_0}: = \dfrac{ ((x_{t,1}-x_{t,2})^T\theta)_+}{Z_0},\quad \dfrac{dQ_2}{dQ_0}: = \dfrac{ ((x_{t,2}-x_{t,1})^T\theta)_+}{Z_0},$$
where $Z_0=\E_{Q_0}[((x_{t,1}-x_{t,2})^T\theta)_+] = \E_{Q_0}[((x_{t,2}-x_{t,1})^T\theta)_+]$ is the  normalization factor. Denote $r_t =\lVert x_{t,1}-x_{t,2}\rVert, u_t = r_{t}^{-1}(x_{t,1}-x_{t,2})$
, then the right hand side of \eqref{lower-bound-est} is lower bounded by  \begin{align*}
	&=Z_0(Q_1\circ Q_{\theta}^t(a_t(S_t,x_t) = 2)+Q_2\circ Q_{\theta}^t(a_t(S_t,x_t) = 1) ) \\
	&\geq_{(a)} Z_0 (1-\text{TV}( Q_1\circ Q_{\theta}^t,Q_2\circ Q_{\theta}^t ))\\
	&\geq_{(b)} \dfrac{Z_0}{2}\exp(-D_{KL}( Q_1\circ Q_{\theta}^t\lVert	 Q_2\circ Q_{\theta}^t ))\\
	&=_{(c)}\dfrac{Z_0}{2}\exp(-D_{KL}( Q_1\circ Q_{\theta}^t\lVert	 Q_1\circ Q_{\theta-2(u_t^T\theta)u_t }^t ))\\
	&\geq_{(d)} \dfrac{Z_0}{2}\exp(-\E_{Q_1}[ D_{KL}(  Q_{\theta}^t\lVert	 Q_{\theta-2(u_t^T\theta)u_t }^t   )]), \tag{F.1}\label{tagF1}
\end{align*}
where $D_{KL}(\cdot \lVert \cdot)$ denote the KL-divergence, $TV(\cdot,\cdot)$ denote the total variation distance and $Q_i\circ Q_{\theta}^t$ means $\E_{\theta\sim Q_i}[Q_{\theta}^t ]$. The (a) inequality comes from the fundamental limit of two-point testing (see e.g. Section 15.2 in \cite{wainwright2019high}), and the (b) inequality comes from Lemma 2.6 of \cite{tsybakov2008introduction}, the (c) equality comes from Lemma 8 in \cite{Han2020}  and the (d) inequality comes from the strongly-convexity of KL-divergence. 
Now  by chain rule of KL-divergence, the divergence contraction inequality in Theorem 1 of \cite{duchi2018minimax} and the formula of KL-divergence between Gaussian distributions, we have \begin{align*}
	D_{KL}(  Q_{\theta}^t\lVert	 Q_{\theta-2(u_t^T\theta)u_t }^t)  &= \sum_{s = 1}^{t-1} \E_{Q_{\theta}^{s-1}}[D_{KL}( P^t_{\theta}(\cdot \lvert S_{s-1} ) \lVert  P^t_{\theta-2(u_t^T\theta)u_t}(\cdot \lvert S_{s-1} ) )  ]\\
	&\leq \sum_{s = 1}^{t-1} \dfrac{c}{2}(e^\varepsilon-1)^2(2(u_t^T\theta)^2\lVert u_t\rVert^2) 
\end{align*}
By the argument of  in \cite{Han2020}, we have \eqref{tagF1} is lower bounded by \begin{align*}
	\dfrac{r_t\triangle}{C\sqrt{d}} \exp(-C\dfrac{(e^\varepsilon-1)^2\triangle^2 }{d+1}u_t^T(\sum_{s=1}^{t-1}x_{t,a_t}x_{t,a_t}^T) u_t).
\end{align*}
Now taking expectation over $x_{t,1},x_{t,2}$, and using the convexity of function $f(x) = \exp(-x)$ we get \begin{align*}
	\E_{x}\E_{\theta}\E_{Q_{\theta}^t}[x_{t,a_t^*}-x_{t,a_t}^T] \geq \dfrac{\triangle}{C\sqrt{d}}\exp(-\dfrac{C(e^\varepsilon-1)^2\triangle^2 t}{d^2}).
\end{align*}
Selecting $\triangle \asymp \dfrac{d }{(e^\varepsilon-1)\sqrt{t}} $ and taking summation over $1\leq t\leq T$ leads to $\Omega(\sqrt{Td}/(e^\varepsilon-1))$ lower bound, finally noticing $e^\varepsilon-1 \leq C \varepsilon$ for $\varepsilon\leq 1$ leads to the desired lower bound when $\varepsilon\leq 1.$
\end{proof}

\section{Auto Loan Experiment Details}
\label{Real-Experiment}
We use On-Line Auto Lending dataset CRPM-12-001 in our real data case study\footnote{On-Line Auto Lending dataset CRPM-12-001 provided by Columbia University \url{https://www8.gsb.columbia.edu/cprm/research/datasets}.}. We use the same features selection as in \cite{ban2020personalized, cheung2018hedging} in the dataset and select FICO score, the term of contract, the loan amount approved, prime rate, the type of car, and the competitor’s rate as the feature vector for each customer. Note that a description of the data set (with descriptive statistics on the demand and available features) is available in \cite{ban2020personalized}. 
The objective is to offer a personalized lending price (from a range of choices) based on personal information such as FICO score to a customer who will either accept or reject it. 
In contrast to linear bandits, the binary reward is non-linear. Therefore we leave LDP-UCB and LDP-OLS out of considerations.
To formulate a bandit environment, first we need to recover the underlying true parameter. Since the lender’s decision, i.e., the price for each customer, is not presented in the dataset, we follow \cite{ban2020personalized, cheung2018hedging} and impute it by using the net-present value of futher payment minus the loan amount, i.e., 
$$
p=\text { Monthly Payment } \times \sum_{\tau=1}^{\text {Term }}(1+\text { Rate })^{-\tau}-\text { Loan Amount }.
$$
After imputing the loan prices, to represent customers' binary loan choices, we employ the logit demand model. 
To be specific, given a price $p$ and a context $x\in \R^d$, the binary variable \emph{apply} takes value of 1 with probability $\frac{\exp(v)}{1+\exp(v)}$ and takes value of 0 with probability $\frac{1}{1+\exp(v)}$ where the linear predictor $v = (x, px)^T\theta^{\star}$. 
We conduct one-hot encoding for categorical features in the dataset and use the python package {\tt sklearn} \cite{scikit-learn} for the estimation of the underlying parameter $\theta^{\star}$. 
We use the interval $[0, 25000]$ as the feasible region of the prices, which covers the lending prices computed from the dataset, and we discrete the feasible region uniformly into 25 options $\{p_{i}\}_{i\in[25]}$.  
We use LDP-SGD and LDP-GLOC to sequentially compute the loan prices for the $10^5$ with randomly selected customers in the dataset, and compute the company’s expected regret based on the population model mentioned above.

\begin{figure}[H]
	\centering
	\begin{tikzpicture}[scale = 1.2]
	  \definecolor{LDP-OLS}{rgb}{0.12156862745098,0.466666666666667,0.705882352941177}
	  \definecolor{LDP-SGD}{rgb}{1,0.498039215686275,0.0549019607843137}
	  \definecolor{LDP-UCB}{rgb}{0.172549019607843,0.627450980392157,0.172549019607843}
	  \definecolor{LDP-GLOC}{rgb}{0.83921568627451,0.152941176470588,0.156862745098039}
	  
	\begin{groupplot}[group style={group size=1 by 1}]

	\input{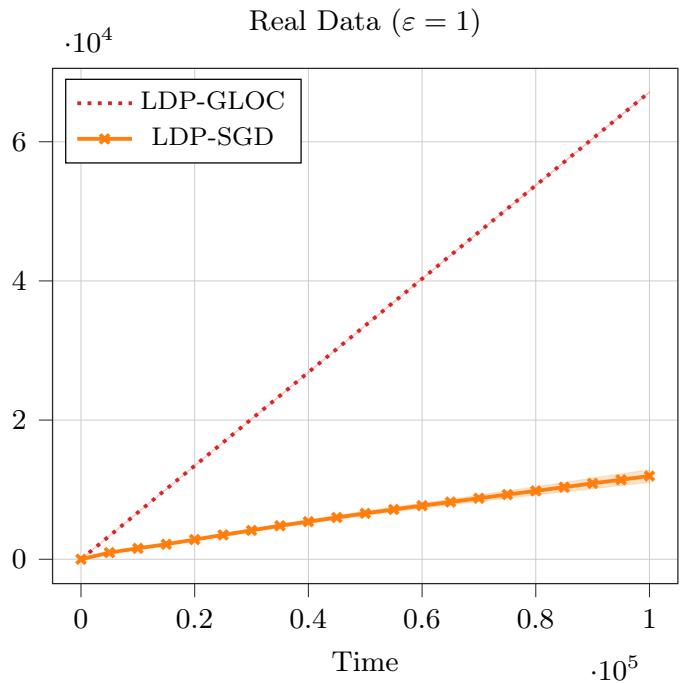}
	
	\end{groupplot}
	\end{tikzpicture}
	\caption{
	  We perform 10 replications for each case and plot the mean and 0.5 standard deviation of their regrets.
	  }
	\label{fig:numerical}
\end{figure}

\end{document}